\documentclass[11pt,a4paper]{article}

\usepackage[utf8]{inputenc}
\usepackage[T1]{fontenc}
\usepackage{lmodern}
\usepackage[round]{natbib}
\usepackage[english]{babel}
\usepackage{microtype}
\usepackage[a4paper,margin=2.4cm,headheight=14.5pt]{geometry}
\usepackage{amsmath,amssymb,mathtools}
\usepackage{amsthm}
\numberwithin{equation}{section}

\usepackage[dvipsnames]{xcolor}
\definecolor{axinfo}{RGB}{21,96,158}
\definecolor{axenergie}{RGB}{182,63,30}
\definecolor{soudure}{RGB}{112,44,140}
\definecolor{okD}{RGB}{0,118,58}
\definecolor{okDH}{RGB}{0,112,118}
\definecolor{okC}{RGB}{196,106,0}
\definecolor{okS}{RGB}{168,32,52}

\usepackage{tikz}
\usetikzlibrary{arrows.meta,decorations.pathreplacing,patterns,babel,%
                positioning,calc,fit,backgrounds}

\graphicspath{{companion/figures/}{../companion/figures/}}
\usepackage{booktabs}
\usepackage{caption}
\usepackage{enumitem}
\setlist{itemsep=2pt,topsep=4pt,parsep=0pt}

\usepackage{fancyhdr}
\usepackage[colorlinks=true,linkcolor=axinfo!85!black,
            citecolor=okD!80!black,urlcolor=soudure]{hyperref}
\usepackage[capitalise,noabbrev]{cleveref}

\theoremstyle{plain}
\newtheorem{theoreme}{Theorem}[section]
\newtheorem{proposition}[theoreme]{Proposition}
\newtheorem{lemme}[theoreme]{Lemma}
\newtheorem{corollaire}[theoreme]{Corollary}
\newtheorem{principe}{Principle}
\theoremstyle{definition}
\newtheorem{definition}[theoreme]{Definition}
\newtheorem{premisse}{Assumption}
\newtheorem{postulat}{Postulate}

\theoremstyle{remark}
\newtheorem{remarque}[theoreme]{Remark}
\theoremstyle{definition}
\newtheorem{problem}{Problem}

\crefname{theoreme}{theorem}{theorems}\Crefname{theoreme}{Theorem}{Theorems}
\crefname{proposition}{proposition}{propositions}\Crefname{proposition}{Proposition}{Propositions}
\crefname{lemme}{lemma}{lemmas}\Crefname{lemme}{Lemma}{Lemmas}
\crefname{corollaire}{corollary}{corollaries}\Crefname{corollaire}{Corollary}{Corollaries}
\crefname{principe}{principle}{principles}\Crefname{principe}{Principle}{Principles}
\crefname{definition}{definition}{definitions}\Crefname{definition}{Definition}{Definitions}
\crefname{premisse}{assumption}{assumptions}\Crefname{premisse}{Assumption}{Assumptions}
\crefname{postulat}{Postulate}{Postulates}\Crefname{postulat}{Postulate}{Postulates}
\crefname{hypothese}{hypothesis}{hypotheses}\Crefname{hypothese}{Hypothesis}{Hypotheses}
\crefname{remarque}{remark}{remarks}\Crefname{remarque}{Remark}{Remarks}
\crefname{problem}{problem}{problems}\Crefname{problem}{Problem}{Problems}
\crefname{equation}{equation}{equations}\Crefname{equation}{Equation}{Equations}
\crefname{figure}{figure}{figures}\Crefname{figure}{Figure}{Figures}
\crefname{table}{table}{tables}\Crefname{table}{Table}{Tables}

\usepackage[framemethod=TikZ]{mdframed}
\theoremstyle{definition}
\newmdtheoremenv[linecolor=okDH!65!black,linewidth=0.8pt,backgroundcolor=okDH!5,%
  roundcorner=3pt,innertopmargin=7pt,innerbottommargin=8pt,%
  innerleftmargin=9pt,innerrightmargin=9pt,skipabove=9pt,skipbelow=9pt]{encadre}{Box}
\crefname{encadre}{box}{boxes}\Crefname{encadre}{Box}{Boxes}

\newcommand{\tierD}{\textcolor{okD!75!black}{\textbf{\small(D)}}}
\newcommand{\tierE}{\textcolor{soudure!80!black}{\textbf{\small(E)}}}
\newcommand{\tierP}{\textcolor{axenergie!82!black}{\textbf{\small(P)}}}

\newcommand{\kB}{k_{\mathrm B}}
\newcommand{\dd}{\mathrm{d}}
\newcommand{\Rbb}{\mathbb{R}}
\newcommand{\Wtwo}{\mathcal{W}_2}
\newcommand{\Ptwo}{\mathcal{P}_2}
\newcommand{\Acal}{\mathcal{A}}
\newcommand{\Ecal}{\mathcal{E}}
\newcommand{\Gcal}{\mathcal{G}}
\newcommand{\Ucal}{\mathcal{U}}
\newcommand{\Ccal}{\mathcal{C}}
\newcommand{\Fcal}{\mathcal{F}}
\newcommand{\Wcal}{\mathcal{W}}
\newcommand{\md}[1]{|\dot{#1}|}
\newcommand{\Ent}{\operatorname{Ent}}
\newcommand{\Var}{\operatorname{Var}}

\hypersetup{pdftitle={A Transport-Based Geometry \\ of Belief-Cost},
            pdfauthor={Laurent Caraffa}}

\begin{document}

% =============================================================================
%  (a) BLOC TITRE + SOUS-TITRE
% =============================================================================
\begin{center}
  {\LARGE\bfseries A Transport-Based Geometry\\[2pt] of Belief-Cost}\\[9pt]
  {\large Laurent Caraffa}\\[3pt]
  Université Gustave Eiffel, LASTIG, IGN-ENSG\\[2pt]
  \texttt{laurent.caraffa@ign.fr}\\[6pt]
  \today\\[3pt]
  {\small\itshape\color{black!55}preliminary version \textemdash{} comments welcome}
\end{center}

% =============================================================================
\bigskip
\noindent\textbf{Abstract.}
A finite agent, a machine's digital twin or any bounded reasoner, infers a fixed and noisy world
through finite sensors, so its coherent output is a \emph{belief}: a probability density over states
(the Bayes posterior). Such an agent stops short of certainty, and revising a belief carries a cost. We
propose a framework for belief costs based on optimal transport, motivated by these facts. We
pose two postulates. \textbf{P0} \emph{(the arena)}: a revision cost is a scalar price on optimal
transport, so beliefs live in Wasserstein space. \textbf{P1} \emph{(uniform pricing)}: one nat of
knowledge costs the same metric length everywhere, the \emph{eikonal} condition. Among conceivable
pricing rules we study this one. Under P0 and P1 the cost metric is optimal transport conformally
reweighted by Fisher information, $\tilde g_{e,U}=2(e+U)\,g_{\Wtwo}$, and the Fisher family is a
\emph{characterization}: among continuous reliefs, uniform pricing is equivalent to $U=cJ$. Two
consequences follow on the conformal class. Certainty sits at infinite cost-distance once the relief
dominates the Fisher information, so a well-posed inference has a cost floor diverging at certainty
(necessity conjectural beyond power laws). On location-scale leaves the geometry is hyperbolic, and the Stam bound places the
Gaussian as the most curved one (at $e=0$). The results are geometric, in nats, and hold up to
units: a change of cost unit rescales all distances and preserves every conclusion (boundary,
eikonal family, hyperbolicity, Gaussian extremum), a gauge theorem; a global change of state
units at $e=0$ is an isometry; the content lies in signs, rankings and ratios. Via Landauer (one nat worth $\kB T$) the cost floor becomes an
energy floor: revising toward certainty would demand unbounded energy. Physics anchors the unit and
enters no theorem. Removing either
postulate leaves the selection open.

\smallskip\noindent\textbf{Keywords.}\ belief-cost geometry; optimal transport; Wasserstein space;
Fisher information; information geometry; belief revision; eikonal equation; gauge invariance; Stam inequality;
Gaussian; thermodynamics of inference; entropy.
\section{Introduction}
\label{sec:intro}\label{nv4:sec:intro}

A \emph{digital twin} \citep{grieves2017} maintains, inside a machine, an image of a real system (a territory, a city, a structure) in order to track or anticipate it. It is often
conceived as a \emph{replica}: a deterministic copy, regularly refreshed with data.
In the setting we consider, however, the machine that realizes it is \emph{finite} and
accesses the system only through finite, noisy channels, whether a physical sensor or a
\emph{finite agent} (a human included), each itself noisy. Its coherent output is then a \emph{belief}: a probability density over the
possible states, produced by Bayesian inference, the coherent framework for reasoning under
uncertainty \citep{cox1946,jaynes2003,mackay2003}. Practice already does this: ensembles,
data assimilation, Kalman filters \citep{kalman1960}, often implicitly. \Cref{fig:probleme} shows this
acquisition pipeline: a fixed hidden state, a finite noisy sensor, and the Bayes posterior it yields.

\begin{figure}[t]
\centering
\includegraphics[width=\linewidth]{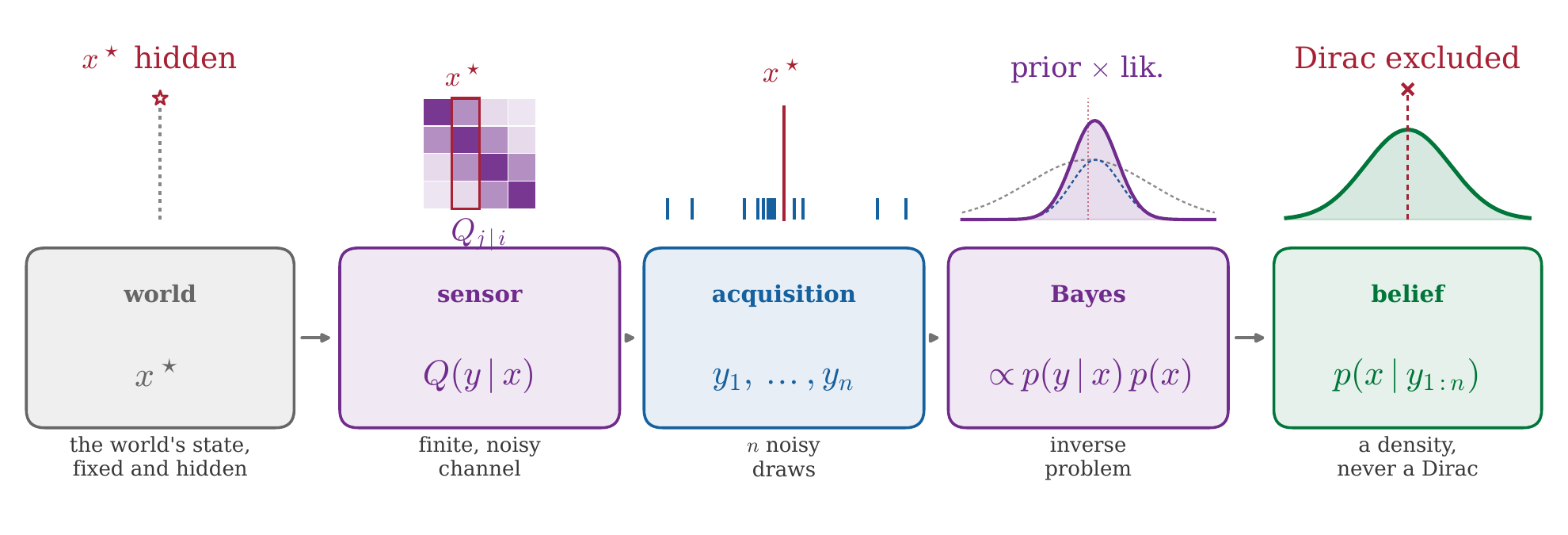}
\caption{\textbf{The inference chain: from a noisy world to a belief}. The state
$x^\star$ (fixed, hidden); the finite and noisy sensor $Q(y\mid x)$
($Q_{j\mid i}=P(y{=}b_j\mid x{=}a_i)$); the $n$ draws $y_1,\dots,y_n$, scattered around
$x^\star$ by the noise; the belief $p(x\mid y_{1:n})\propto p(y\mid x)\,p(x)$ obtained
by Bayes. The output is a \emph{density} (of entropy $H=-\int p\ln p$); certainty,
the barred centered Dirac, is excluded from it.}
\label{fig:probleme}
\end{figure}

To such a belief, \emph{certainty} is out of reach, for two independent reasons. First, reaching it
would demand unbounded information: the
precision of any estimate is limited by the information behind it (\citealp{fisher1925}; the
Cramér--Rao bound, \citealp[Th.~11.10.1]{coverthomas2006}), and a finite agent has only a finite
amount. Second, it would demand unbounded energy: erasing the last of the uncertainty dissipates
heat (the Landauer principle, \citealp{landauer1961}). The two finitudes converge: certainty stays
out of reach as the belief sharpens toward it. This limit also keeps belief revisable. A belief held with full
certainty could never be revised, since a probability of $0$ or $1$ admits no update (Cromwell's
rule, \citealp{lindley1991}). By holding the agent
short of it, finiteness keeps every belief open to revision: any two beliefs stay revisable into one another.

To such an agent, changing its mind costs: it pays, down to a single erased bit. We turn this
finiteness into geometry (the geometry of what it costs to change one's mind): its distances are
revision costs, its \emph{boundary} is certainty (the excluded Dirac), its curvature the object to
characterize. A geometric toolbox already exists, and each piece, in its own way, meets that
boundary: information geometry equips beliefs with the Fisher--Rao metric \citep{chentsov1972,amari2016},
under which the Dirac lies at \emph{infinite} distance, a measure of
\emph{distinguishability} rather than of cost; optimal transport metrizes the displacement of mass
\citep{otto2001}, where the Dirac stays at \emph{finite} distance; the thermodynamics
of information prices the erasure of certainty (Landauer) and the dissipation of a driven belief
\citep{sivak2012,aurell2011,ito2023}, on a \emph{given} geometry; and Bayesian mechanics ties a
dissipative agent to a geometry of belief on a Fisher (distinguishability) manifold \citep{friston2019,sakthivadivel2022}.
Each of these supplies a piece; we characterize a geometry of belief-\emph{cost} in which certainty
is an \emph{uncrossable boundary} at infinite cost-distance, carrying a curvature. Where Fisher--Rao is the canonical geometry of \emph{distinguishability} (\v{C}encov), the analogue for \emph{revision cost} is a Fisher-reweighted transport metric: theorem-grade on the power family, conjectural in general (Conjecture~C1, \cref{nv4:rem:C1}). \v{C}encov's metric follows from an invariance; this one follows from posed choices, made precise in \cref{sec:cadre}, and carries a \emph{metrological} invariance of its own: every conclusion is stated in nats and holds up to the units of cost (\cref{nv4:thm:jauge}) and of the state (\cref{nv4:prop:etat-unites}). The same Fisher information serves \v{C}encov as a component of the distinguishability ruler and serves here as a price: one scalar, two roles. This organizes the paper (\cref{nv4:rem:amari}).

We propose an \emph{axiomatic framework} for transport-based belief costs; finiteness
motivates the postulates. We pose two, an arena (\textbf{P0}) and a uniform pricing of information
(\textbf{P1}), delimit them, and characterize their consequences. Other arenas are conceivable (an
information-geometric Fisher--Rao metric, a parameter-space one) and other prices are conceivable
(proportional to energy, or to a divergence); we note them where they arise and study one
transport-based framework. The central result is a \emph{characterization}: among continuous reliefs,
uniform pricing is equivalent to the Fisher family $U=cJ$, so Fisher information follows from the two
postulates. The framework runs on four levels, by status: the agent's finiteness (motivation),
\textbf{P0} the arena (modelling choice), \textbf{P1} uniform pricing (calibration choice), and the
three results (mathematical consequences).

\smallskip\noindent\textbf{Plan.} \cref{sec:apercu} sketches the proposed formulation in broad
strokes; \cref{sec:cadre} makes it formal (the arena, the two postulates, the cost class);
\cref{nv4:sec:carac} proves the characterizations: the boundary at infinite cost-distance, the eikonal $U=cJ$, and
hyperbolicity with Stam rigidity; \cref{nv4:sec:consequences} draws from them a well-posed inference
with a cost floor; \cref{nv4:sec:ouvert} delimits scope and open problems and \cref{sec:travaux}
situates the work; the machinery proofs are deferred to \cref{nv4:app:machine}.

% =============================================================================
% §II  PROPOSED FORMULATION
% =============================================================================
\section{Proposed formulation}
\label{sec:apercu}

\begin{center}\fbox{\parbox{0.93\linewidth}{\small
\begin{problem}\label{prob:central}
\itshape Characterize the geometry of the space of beliefs that a finite agent,
inferring a fixed and noisy world through finite sensors, can inhabit: its
\emph{distances} (the cost of changing one's mind), its \emph{boundary} (certainty) and its
\emph{curvature}.
\end{problem}}}\end{center}

\begin{figure}[!t]
\centering
\includegraphics[width=\linewidth]{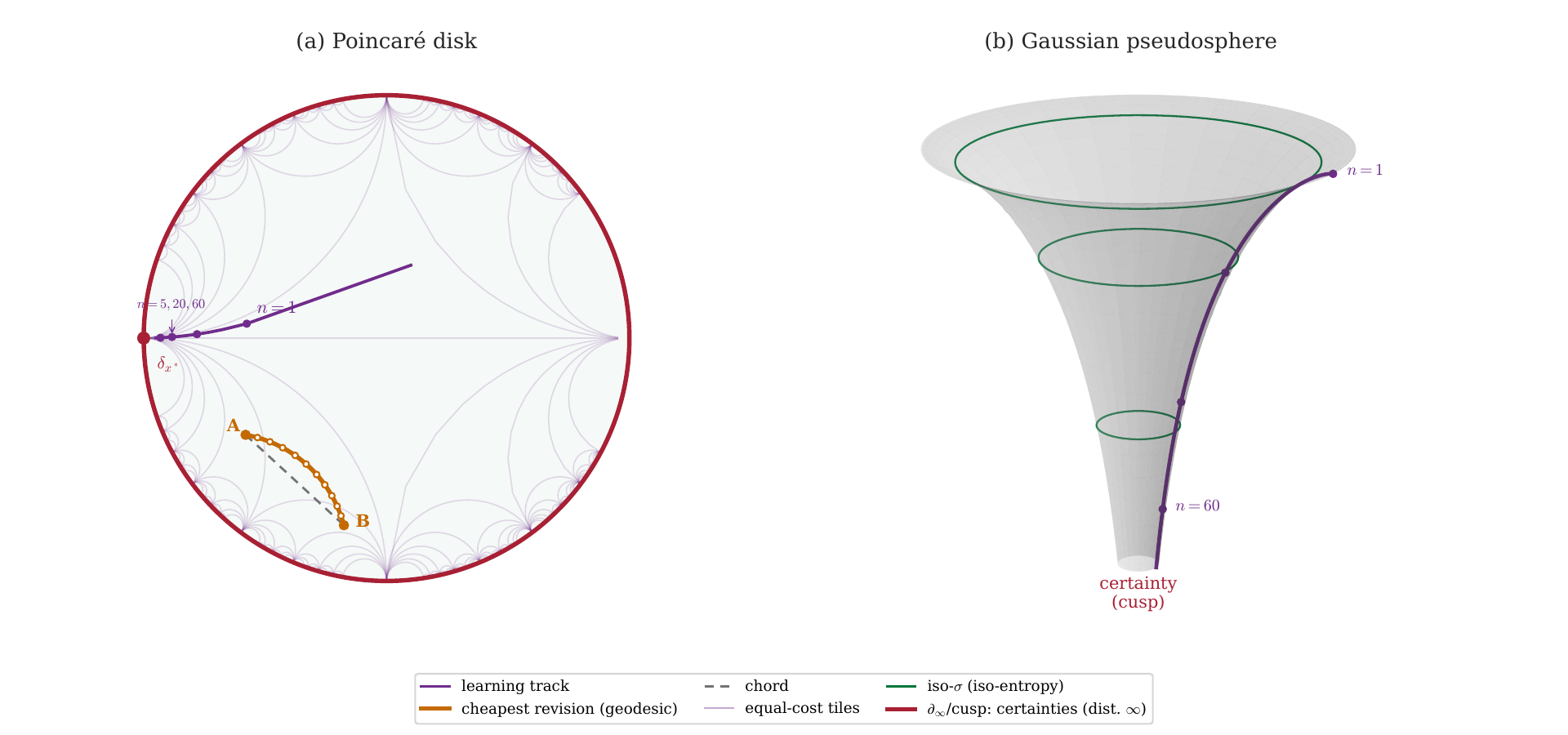}
\caption{\textbf{The cost geometry of belief.} \emph{(a)} The location-scale leaf (a single location-scale family, a 2-D submanifold of $\Ptwo$) in the
\emph{Poincaré disk} (the conformally equivalent rendering of the $\{\sigma>0\}$ half-plane used in
the proofs), drawn at the eikonal limit $e=0$, where uniform pricing (\cref{nv4:ax:P1}) makes the leaf
exactly hyperbolic. The faint backdrop is the modular tiling: at $e=0$ the constant-curvature metric
makes its tiles \emph{congruent} (equal cost-area): a uniform cost grid; they crowd toward the
boundary $\sigma{=}0$ (the Diracs, $\partial_\infty$), which is thus infinitely many tiles away,
at infinite distance. The highlighted geodesic joins two beliefs $A$ and $B$; uniform pricing makes its beads
\emph{equal-cost steps} (the cheapest revision \emph{opens up}; the straight chord, dashed, costs
more). Along the \emph{trajectory} the posterior concentrates with evidence: the fixed beliefs
labelled by evidence count $n=1,5,20,60$ rest ever nearer the boundary (specifically the target Dirac
$\delta_{x^\star}$, the certainty at the true state), their steps shrinking without ever reaching it
(the standardized base then nearer Gaussian, \cref{nv4:rem:bvm}). At $e=0$ the leaf's (intrinsic) Gauss curvature is constant, $K=-1/(2cJ_0)$ with $J_0$ the Fisher information of the unit-variance base ($-1/(4J_0)$ at the
representative $c{=}2$, a unit image, \cref{nv4:thm:jauge}); for $e>0$ it varies
(\cref{nv4:thm:hyper}). \emph{(b)} The same leaf as a Gaussian \emph{pseudosphere} (local model,
Hilbert's theorem \citep{docarmo1976}): the width equals the spread $\sigma$ (iso-entropy circles),
and the same fixed beliefs $n=1,\dots,60$ stand ever nearer the cusp ($H\to-\infty$): in both panels
the disk boundary and the cusp coincide. Off-figure,
among location-scale leaves the Gaussian ($J_0{=}1$, Stam) is the most curved
(\cref{nv4:thm:locscale}). All quantities are verified numerically.}
\label{fig:intuition}
\end{figure}

\noindent Our formulation is a \emph{geometry of belief-cost}: the cost of changing one's mind is a
\emph{distance}, the effort to revise one belief into another, measured in \emph{nats}, the
natural unit of information ($\ln$; one nat $\approx1.44$ bits).

\noindent \emph{We pose two postulates} (made precise in \cref{sec:cadre}):
\textbf{P0} \emph{(the arena)}: a revision \emph{cost} is a scalar price on optimal transport, so
beliefs live in the Wasserstein space $(\Ptwo,\Wtwo)$, $\Ptwo$ the densities of finite spread (the
possible posteriors), $\Wtwo$ the optimal-transport distance between two beliefs (the least cost of
morphing one into the other), whose infinitesimal ruler is the transport metric $g_{\Wtwo}$
\citep{otto2001,ambrosio2008}; and \textbf{P1} \emph{(uniform information pricing)}: the price is \emph{eikonal}, one
nat of knowledge costing the same length everywhere. \emph{Their consequence} is a single cost-metric \emph{form}, the transport metric conformally reweighted by the Fisher information:
\[
  \boxed{\;\tilde g_{e,U}\;=\;2\,(e+U)\,g_{\Wtwo}\,,\qquad U=cJ\,,\qquad
  J=\lVert\nabla_{\Wtwo}H\rVert^2=\int\frac{|\nabla p|^2}{p}\;}
\]
where $p$ is the belief (a density over states $x\in\Rbb^n$, $n$ the state dimension), $H=-\int p\ln p$ its \emph{entropy} (in
nats) and $-H$ its knowledge; $\tilde g_{e,U}$ is the resulting \emph{cost metric} (the local ruler
giving the price of a small revision, namely the transport ruler $g_{\Wtwo}$ reweighted point by point
by $2(e+U)$) with \emph{relief} $U\ge0$ the local price of precision (steepening where the belief sharpens), $e\ge0$ a baseline, and $c>0$
a fixed \emph{cost unit} (only cost \emph{ratios} mean anything; $c=2$ below is one representative).
The two gradients differ: $\nabla p$ is the ordinary spatial gradient, so $J=\int|\nabla p|^2/p$ is the
classical \emph{Fisher information} (large where the belief is sharp, infinite at certainty), whereas
$\nabla_{\Wtwo}H$ is the gradient of $H$ taken \emph{along transport} (Otto's calculus on $\Ptwo$),
its squared length equal to $J$ by the \emph{Otto identity} \citep{otto2001,villani2009}. That
identity is the key step: along transport the Fisher information becomes the \emph{squared slope} of $-H$
(its \emph{recasting} from a component of the distinguishability ruler in Fisher--Rao) and this lets uniform pricing (P1)
\emph{characterize} $U=cJ$.

\emph{From the two postulates the characterization follows}, on the conformal class
(certainty being the limit where the belief sharpens to a point):
\begin{itemize}
\item an \textbf{infinite-distance boundary} (\cref{nv4:thm:carI}): a well-posed inference (existence and stability; uniqueness left open) rejects certainty to
  \emph{infinite} cost-distance as soon as the relief dominates the Fisher information (sufficiency in
  every dimension; necessity theorem-grade on the power family $\{cJ^\alpha\}$ (power-law reliefs), the general case
  conjectured, \cref{nv4:rem:C1});
\item a \textbf{uniformly-priced family} (\cref{nv4:thm:carII}): the eikonal price is \emph{equivalent}, among continuous reliefs, to
  $U=cJ$, the Fisher family;
\item a \textbf{rigidity} (\cref{nv4:thm:hyper,nv4:thm:locscale}), essentially
  \emph{location-scale} (a fixed base shape positioned by a centre $\mu$ and scaled by a spread $\sigma$): these geometries are
  \emph{hyperbolic} (negative Gauss curvature; at $e=0$ constant, $K=-1/(2cJ_0)$), where $J_0$ is the Fisher information
  of the family's \emph{standardized shape} (the variance-$1$ base; distinct from the precision
  $J=J_0/\sigma^2$); the Stam bound $J_0\ge1$ \citep{stam1959} (equality iff Gaussian) ranks the \emph{Gaussian} as
  the most hyperbolic location-scale belief (at $e=0$) ($-\tfrac14$ being its image under a unit).
\end{itemize}

\begin{center}\fbox{\parbox{0.92\linewidth}{\small
\textbf{Scope of the characterization.} Under P0 and P1, among continuous reliefs, the Fisher family
$U=cJ$ is the unique uniform-pricing cost (\cref{nv4:thm:carII}). Removing either postulate leaves the
selection open. Necessity beyond power laws, and beyond the conformal class, stays conjectural
(\cref{nv4:rem:C1}).
}}\end{center}

\noindent These split along the two roles of the entropy: its \emph{value} $-H$ (how concentrated the
belief is) sets the boundary and the cost floor, while the base \emph{slope} $\sqrt{J_0}$ (the family's
shape) sets the $e=0$ curvature $K=-1/(2cJ_0)$, so the Gaussian's extremality (a $J_0$ fact) is
orthogonal to the boundary (a $-H$ fact). The relief itself separates a \emph{global} exponent from a
\emph{varying} shape: in $U=cJ^\alpha$ the exponent is fixed once for the whole geometry (uniform pricing
$\Rightarrow\alpha{=}1$), whereas the shape enters only through $J_0$ in $J=J_0/\sigma^2$ and varies
from belief to belief (ranked by Stam); the exponent is structural, the shape the degree of
freedom. Whence a \emph{cost floor} for reaching a given precision, diverging at certainty, and a
\emph{relativity of cost}: only relative cost means anything, the value $-\tfrac14$ being one image
under a change of unit (\cref{nv4:thm:jauge}); the nat fixes the information axis (Shannon entropy
is unique up to a unit, \cref{nv4:ax:mesure}), the gauge $c$ fixes the cost axis, and their
exchange rate (the price of one nat at $e=0$: $\sqrt{2c}$ lengths) is conventional; a global
change of state units leaves the $e=0$ geometry unchanged (\cref{nv4:prop:etat-unites}). Thermodynamics anchors the cost unit (one nat is worth $\kB T$, the thermal energy scale) and \emph{motivates} the picture; the results are geometric, in nats.

\noindent \cref{fig:intuition} reads this on a single location-scale leaf: panel \emph{(a)} as the
\emph{Poincaré disk} (the conformal image of the $\{\sigma>0\}$ half-plane), the boundary at
$\sigma{=}0$, the cheapest revision \emph{opening up} (widening the spread), and panel
\emph{(b)} as a pseudosphere whose width is the spread $\sigma$.

\noindent We call these geometries \emph{belief-cost geometry} and keep the term throughout. Two
distinctions fix its place: here \emph{belief} is a Bayesian density (a posterior over a fixed
system), set apart from the Dempster--Shafer belief function and the decision-theoretic Bayes cost; and
\emph{cost} is the length of a belief revision, set apart from distinguishability, dissipation length,
and free-energy-principle geometries (\cref{sec:travaux}). A direct algorithmic reading (a Kalman
filter revising at finite cost \emph{vs} a network frozen on a certainty, which must be retrained)
makes the boundary concrete (\cref{enc:ml}).

% =============================================================================
% §III  SETTING
% =============================================================================
\section{Setting}
\label{sec:cadre}\label{nv4:sec:objet}

We first state the question, then build the framework in layers of decreasing necessity: the
\emph{object} (forced by finiteness), the two \emph{postulates} (the posed and defended choices),
and the remaining \emph{assumptions} (transparent, a technical restriction, or borrowed). \Cref{nv4:tab:entropy}
gathers every item with its status and its entropic reading. Each statement
below carries this status as a tag: \emph{(setting)} for the modeling frame, \emph{Postulate} for a
posed choice, and \textbf{B}$n$ for a borrowed tool; the rest is proved here. Provenance is further
labelled \emph{tier-P} (physical/interpretive: motivates, never a proof), \emph{tier-E}
(borrowed: a standard theorem cited), or \emph{tier-D} (proved here, in pure metric geometry);
the full dependency map is \cref{nv4:tab:dag}.\footnote{The letter $n$ carries two
context-disambiguated meanings: the ambient state-space dimension ($\Rbb^n$) and the
evidence count ($n=1,5,\dots$, \cref{fig:probleme}).}

\subsection{Definitions and setting assumptions}

The setting is the one fixed in \cref{fig:probleme}: a finite agent infers a fixed hidden
state $x$ through a finite, noisy sensor, and its belief is the Bayes posterior, a density
over the states. We read it statically: a posterior over the fixed state $x$ at a given
instant. Certainty is excluded, the barred Dirac of \cref{fig:probleme}: the two finitudes
of \cref{sec:intro}, bounded information and bounded energy, keep it out of reach. It is
exactly the boundary where the Fisher information $J$ diverges.

Bayes \emph{motivates} this object, the coherent epistemic state of a finite agent
\citep{cox1946,jaynes2003}; it enters none of the proofs below, which use only that the
belief is a regular density (\cref{nv4:ax:objet}). We study the cost geometry of
this posterior: the price of revising it.

\begin{definition}[belief, entropy, Fisher; the carrier set]\label{nv4:def:arene}
A \emph{belief} is a density $p\in\Ptwo(\Rbb^n)$, the probability measures of finite
second moment $m_2:=\int|x|^2 p$; its entropy (Shannon, nats) is $H(p)=-\int p\ln p\,\dd x$, its
knowledge $-H(p)$, its Fisher information
$J(p)=\int|\nabla p|^2/p\in[0,+\infty]$. We write $\Ent:=-H$ and
$\partial_\infty:=\{H=-\infty\}$ the \emph{boundary of certainties} (singular
measures, including all Diracs). The carrier $\Ptwo(\Rbb^n)$ is forced by finiteness
(the object); the \emph{metric} that turns it into the cost \emph{arena} (the
Wasserstein distance $\Wtwo$) is the first modeled choice (the metric, \cref{nv4:ax:P0}), its
relief fixed by the second (uniform pricing, \cref{nv4:ax:P1}), not posited here. Gaussian dictionary ($n=1$):
$H=\tfrac12\ln(2\pi\sigma^2)+\tfrac12$, $J=1/\sigma^2$.
\end{definition}

\begin{definition}[regularity class \emph{\small(setting)}]\label{nv4:def:R}
$\mathcal R\subset\Ptwo(\Rbb^n)$: $C^1$ densities, strictly positive, with finite $m_2$
and $J$. It is the interior domain; $\partial_\infty$ is excluded from it.
\end{definition}

\begin{premisse}[\textbf{A1}: object \emph{\small(setting)}]\label{nv4:ax:objet}
The belief is a density of class $\mathcal R$ (\cref{nv4:def:R}), a point of
the arena. Finiteness \emph{closes off the Dirac} ($\partial_\infty$ excluded: infinite
precision would make $J$ infinite there); the continuous $C^1$ class is a working
idealization.
\end{premisse}

\begin{remarque}[two operational constraints \emph{\small(setting)}]\label{nv4:rem:motive}
Two constraints drive the characterization. \emph{(a) Boundary excluded:}
$\partial_\infty$ (infinite Fisher information, \cref{nv4:def:arene}) is out of
budget, the belief lives in the interior ($\mathcal R$). \emph{(b) Well-posed:}
inference (minimization over the arena) must admit an \emph{interior}
minimizer, with no collapse toward $\partial_\infty$. \emph{(The boundary is costly
because $J$ diverges there; $J$ is here the slope of $-H$, not the Fisher--Rao
tensor, \cref{nv4:rem:dualJ}.)}
\end{remarque}

\begin{definition}[plausibility and relief \emph{\small(setting)}]\label{nv4:def:plausibilite}
The plausibility of a belief (no longer the posterior $p(x\mid y)$ over the states of the world, but a degree over the beliefs themselves) is modeled by a Gibbs measure on the space
of beliefs \citep{jaynes2003},
\[
  \Pi\ \propto\ \nu\,\exp(-U),\qquad U=-\ln\frac{\dd\Pi}{\dd\nu}\ \ge\ 0 ,
\]
$\nu$ being a reference measure. The \emph{relief} $U$ is the negated
log-density: an \emph{improbability} (least commitment). The boundary of
certainties ($-H=+\infty$) is all the less plausible the more the relief
grows there; no form of $U$ is fixed here.
\end{definition}

\noindent$\Pi$ is a measure on the space of beliefs: a density, not an
evolution. The relief $U$ remains \emph{generic}: lower semi-continuous and $\ge0$.

\begin{premisse}[\textbf{A2}: measure \emph{\small(setting)}]\label{nv4:ax:mesure}
Knowledge is measured by an \emph{information} measure; among those satisfying continuity,
maximality on the uniform, expansibility and grouping (recursivity), the Shannon entropy is the
\emph{unique} choice up to a unit (Shannon--Khinchin \citep{shannon1948,khinchin1957}). Operationally $-H$ is the
description cost the source-coding theorem makes inevitable \citep[thm~4.1, p.~78, with the asymptotic equipartition property (AEP) p.~80]{mackay2003}. This is a
\emph{discrete} characterization; for continuous beliefs it does not transfer literally (the
differential entropy is neither nonnegative nor reparametrization-invariant), but it motivates
reading knowledge on the differential entropy $H=-\int p\ln p$ of \cref{nv4:def:arene}, knowledge
being $-H$, defined up to an additive constant (a reference scale) that drops out of every
result, since these use only the entropy \emph{slope} $\nabla H$ (this additive freedom is distinct
from, and orthogonal to, the \emph{multiplicative} cost gauge of \cref{nv4:thm:jauge}). The
Shannon--Khinchin axioms make this the natural reading once one elects to measure knowledge by an
information measure at all, itself a framing choice: the measure is thus \emph{transparent and
motivated} rather than merely imposed, though, unlike the object forced by finiteness, the axioms
(notably grouping) remain a posed normative choice (relax grouping $\to$ Rényi).
\emph{(Independently and downstream: Shannon is the entropy for which the Otto identity
\cref{nv4:ax:B1} makes the transport slope $\sqrt J$; this coupling to the arena is specific to it.)}
\end{premisse}

\subsection{The two postulates}
\label{nv4:sec:postulats}

What is costly to the agent is to \emph{change} belief. To turn the excluded certainty into
distances we make \emph{two} posed, defended commitments: the \emph{arena} (Postulate~0) and
\emph{uniform pricing} (Postulate~1), nested. Each is a choice an informed reader could refuse, and each is
defended; together they pin the cost geometry, while every other ingredient is forced, transparent, or
borrowed.

\begin{postulat}[the cost is a scalar price on optimal transport]\label{nv4:ax:P0}
We model the \emph{cost} of changing a belief as a \emph{scalar price field over the optimal
transport} of probability mass on the world's state space, at fixed mass: a transport metric,
reweighted by a scalar relief. Two clauses, both contestable: the cost is a \emph{transport}
(mass moved over the state space, not a reweighting in memory), and the price is \emph{scalar}
(isotropic: it depends on the belief, not on the direction of the revision). This is a statement
about \emph{cost}, not mechanism: a belief is \emph{updated} by Bayes (a multiplicative
reweighting), whereas the \emph{price} of any revision is read as a transport length. The
non-conformal (anisotropic) alternative is left open (\cref{nv4:rem:C1}).
\end{postulat}

\noindent From this commitment the arena \emph{follows}. Mass
conservation (a belief is a probability, total mass one), locality (the total cost is
\emph{additive} over a ground cost $c_0(x,y)$ on elementary state displacements, the Kantorovich
form $\int c_0\,\dd\pi$, excluding coupling functionals non-additive in the plan
\citep{santambrogio2015,villani2009}) and the demand for a length structure single out an
optimal-transport metric; a power ground cost $d(x,y)^p$ then pins the Wasserstein family $W_p$.
The quadratic case $p=2$ is fixed by a single \emph{intrinsic} requirement, prior to any result
below: to speak of geodesics, a boundary and a curvature \emph{at all}, the length must be
\emph{Riemannian}: a differentiable kinetic action $\tfrac12\int\md\gamma^2$ (Benamou--Brenier, \citealp{benamou2000})
admitting the Maupertuis--Jacobi correspondence \citep{arnold1989} on which the machinery of \cref{nv4:sec:carac}
rests. Among the $W_p$, only $W_2$ is Riemannian \citep{otto2001,ambrosio2008,gigli2012}; $W_{p\neq2}$ is
a metric but carries no such differential structure, standard optimal-transport folklore, with the
necessity direction ($p\neq2$) made exact on the location-scale leaf (a single location-scale family, a 2-D \emph{leaf} of $\Ptwo$) in \cref{nv4:prop:quad}
(failure already on a single $2$D leaf rules out a Riemannian structure on the whole space; the
positive global Riemannian structure of $W_2$ stays the borrowed fact above). This fixes $p=2$ on its own terms: \emph{before}, and independently of,
the downstream payoffs it happens to unlock (the Otto slope $\|\nabla_{\Wtwo}H\|^2=J$,
\cref{nv4:ax:B1}; the flat location-scale base, \cref{nv4:ax:B4,nv4:ax:B7}). For a digital twin,
whose role is to \emph{represent} the world, we locate this cost in the \emph{represented} space
(the states), not in the machine's memory, itself part of the same commitment, not a deduction
from the twin's role; the parameter-space reading (elastic weight consolidation, \cref{enc:ml}) is
the explicit alternative.

\begin{proposition}[the quadratic exponent is forced]\label{nv4:prop:quad}
Within the transport family, $p=2$ is the unique exponent for which the cost geometry is
\emph{Riemannian} (and, heuristically, the one for which the slope of $-H$ is the classical $L^2$ Fisher
information). On any location-scale family with standardized base $\varphi_0$ (quantile $Q_0$), the
quantile isometry (\cref{nv4:ax:B7}; the general-$p$ $W_p$/$L^p$ quantile identity, \citep[\S2]{santambrogio2015}) makes the $W_p$ length element of a tangent $(d\mu,d\sigma)$
the $L^p$ norm of $d\mu\,\mathbf 1+d\sigma\,Q_0$ on the plane
$\mathrm{span}\{\mathbf 1,Q_0\}\subset L^p(0,1)$ (well-defined when the base has a finite $p$-th
moment, $Q_0\in L^p$; automatic at $p=2$, where finite variance suffices). By the Jordan--von
Neumann theorem \citep{jordanvn1935} this norm derives from an inner product iff the parallelogram
law holds on $\mathrm{span}\{\mathbf 1,Q_0\}$, which for any non-degenerate base ($Q_0$ not a.e.\
constant) occurs \emph{iff} $p=2$, where it equals $d\mu^2+d\sigma^2$, the flat half-plane (\cref{nv4:ax:B4,nv4:ax:B7}). For $p\neq2$ it is genuinely Finsler. On the $W_2$ arena the same exponent is what makes the tangent norm the $L^2$ kinetic energy
$\int p\,|v|^2$, whose pairing with the $L^2$ score gives the Otto identity
$\int p\,|\nabla\ln p|^2=J$ (\cref{nv4:ax:B1}, a separate global fact); heuristically, for $p\neq2$
the corresponding slope of $-H$ is an $L^q$ object ($\tfrac1p+\tfrac1q=1$) rather than the classical
$L^2$ Fisher. The Riemannian length required by the transport postulate (\cref{nv4:ax:P0}; Maupertuis)
thus forces $p=2$ (the parallelogram law on $\mathrm{span}\{\mathbf 1,Q_0\}$ holds iff $p=2$;
checked numerically for the Gaussian, Laplace, and a skewed base in the verification companion).
\end{proposition}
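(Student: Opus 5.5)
The argument reduces everything to a computation on the two-dimensional plane $V:=\mathrm{span}\{\mathbf 1,Q_0\}\subset L^p(0,1)$. Write the location-scale leaf as $p_{\mu,\sigma}(x)=\sigma^{-1}\varphi_0((x-\mu)/\sigma)$. Its quantile function is $\mu+\sigma Q_0(u)$, which is affine in $(\mu,\sigma)$. The one-dimensional quantile identity $W_p(\rho,\rho')=\|Q_\rho-Q_{\rho'}\|_{L^p(0,1)}$ (\cref{nv4:ax:B7}, \citealp[\S2]{santambrogio2015}) therefore gives
\[
W_p(p_{\mu,\sigma},p_{\mu',\sigma'})=\bigl\|(\mu'-\mu)\mathbf 1+(\sigma'-\sigma)Q_0\bigr\|_{L^p}.
\]
So the leaf, with its $W_p$ metric, is isometric to a half-plane sitting inside the normed plane $V$. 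This holds globally, not just infinitesimally. Consequently the length element is the restriction of $\|\cdot\|_{L^p}$ to $V$, and it is the same at every base point. The geometry is Riemannian iff this single norm comes from an inner product. Finiteness requires $Q_0\in L^p$, which is the finite $p$-th moment of the base.

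Next I normalise. Standardization gives $\int_0^1 Q_0=0$ and $\int_0^1 Q_0^2=1$. For $p=2$ the norm expands as $\|a\mathbf 1+bQ_0\|_2^2=a^2+2ab\int Q_0+b^2\int Q_0^2=a^2+b^2$. This is the flat half-plane $d\mu^2+d\sigma^2$, matching \cref{nv4:ax:B4}. For $p\neq2$ I apply Jordan--von Neumann: the norm on $V$ is Euclidean iff the parallelogram law holds. Since $V$ is two-dimensional, the unit ball of a Euclidean norm is an ellipse, so an equivalent test is that $N(a,b):=\|a\mathbf 1+bQ_0\|_p^2$ is a quadratic form in $(a,b)$.

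The main obstacle is showing that $N$ fails to be quadratic for every non-degenerate base when $p\neq2$. My approach is a second-order expansion in $b$ at $a=1$:
\[
\|\mathbf 1+bQ_0\|_p^p=\int|1+bQ_0|^p=1+pb\!\int Q_0+\tfrac{p(p-1)}2b^2\!\int Q_0^2+\tfrac{p(p-1)(p-2)}6 b^3\!\int Q_0^3+\dots
\]
Taking the $2/p$-th power yields $N(1,b)=1+(p-1)b^2+O(b^3)$, where I have used $\int Q_0=0$ and $\int Q_0^2=1$. Any quadratic form satisfies $N(0,1)=N(1,0)\cdot(\text{coefficient of }b^2)$ once the cross term vanishes. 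Here that cross term is the coefficient of $b$, which is $0$. So a quadratic $N$ would force $\|Q_0\|_p^2=p-1$.

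This one condition is not enough, because $\|Q_0\|_p^2=p-1$ could hold by accident for some base. I therefore need a second independent condition, and I will get it by expanding around a general point $\mathbf 1+tQ_0$ instead of $\mathbf 1$. For a quadratic $N$, the curve $t\mapsto N(1,t)$ must be exactly quadratic in $t$, so $\frac{d^3}{dt^3}N(1,t)\equiv0$. The cleanest way to get a contradiction is to look at large $|t|$. There $N(1,t)=t^2\|Q_0+\mathbf 1/t\|_p^2$. Expanding in $1/t$, the linear coefficient is $2\|Q_0\|_p^{2-p}\int|Q_0|^{p-2}Q_0$, and the coefficient of $t^0$ involves $\int|Q_0|^{p-2}$ together with the square of $\int|Q_0|^{p-2}Q_0$. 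Matching these against the exact quadratic $1+(p-1)t^2$ gives two constraints. I expect them to be compatible only if $|Q_0|$ is constant. A constant $|Q_0|$ with $\int Q_0=0$ means a two-point base, which is not a density and is excluded as degenerate, since $\varphi_0\in\mathcal R$ forces $Q_0$ to be continuous and strictly increasing.

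Where the integrability of $|Q_0|^{p-2}$ near the zero of $Q_0$ fails for $p<1$, or where the expansion is delicate, I would fall back on a direct check: evaluate the parallelogram defect at $x=\mathbf 1$, $y=sQ_0$ and show that it is a non-identically-vanishing analytic function of $s$.

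The final claims are the following. The Otto-identity remark for $p=2$ follows from \cref{nv4:ax:B1}, and the $L^q$ heuristic for $p\neq2$ comes from duality of the tangent norm. I state both only as heuristics, exactly as in the proposition.
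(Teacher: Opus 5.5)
Your reduction is correct and follows the paper's own route. The quantile identity makes the leaf a half-plane in the normed plane $V=\mathrm{span}\{\mathbf 1,Q_0\}\subset L^p(0,1)$, and $p=2$ gives $a^2+b^2$. Your small-$b$ expansion correctly gives the necessary condition $\|Q_0\|_p^2=p-1$ for $N$ to be quadratic. The paper stops at the same point: it invokes Jordan--von Neumann and supports ``iff $p=2$ for any non-degenerate base'' only by numerics on three bases.

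\textbf{The gap.} The step that fails is your next one, and it cannot be repaired, because the universal claim is false. Your expectation that the two large-$|t|$ constraints force $|Q_0|$ to be constant fails for two reasons. First, a constant $|Q_0|$ (the symmetric two-point law) has $\|Q_0\|_p^2=1\neq p-1$, so it does not even satisfy your first condition. Second, finitely many moment identities cannot exclude continuous laws. Explicit counterexamples:
\begin{itemize}
\item For $p=4$, take any standardized symmetric base with $\int_0^1Q_0^4=9$, e.g.\ the rescaled Student $t_5$ (kurtosis $9$). Then $\int_0^1(a+bQ_0)^4=a^4+6a^2b^2+9b^4=(a^2+3b^2)^2$, so $\|a\mathbf 1+bQ_0\|_4=\sqrt{a^2+3b^2}$ is Euclidean.
\item For every $p>1$, let $G_1,G_2$ be i.i.d.\ standard normal and pass to the tilted law $|G_1|^p\,\dd P/\mathbb{E}|G_1|^p$. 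Under it, $X=G_2/G_1$ has density $\propto(1+x^2)^{-(p+2)/2}$ (a rescaled Student $t_{p+1}$). It satisfies $\mathbb{E}_{\mathrm{tilt}}|a+bX|^p=\mathbb{E}|aG_1+bG_2|^p/\mathbb{E}|G_1|^p=(a^2+b^2)^{p/2}$, and $\Var X=1/(p-1)$.
\end{itemize}
So $Q_0=\sqrt{p-1}\,X$ is a smooth, strictly positive, standardized base with finite Fisher information. Its $W_p$ leaf is the flat Riemannian half-plane $\dd\mu^2+(p-1)\,\dd\sigma^2$. It satisfies both of your constraints, and your fallback parallelogram defect vanishes identically on it.

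\textbf{What survives.} The quantifier must be flipped: for each $p\neq2$, \emph{some} location-scale leaf is genuinely Finsler, so $p=2$ is the only exponent for which \emph{every} leaf is Riemannian. This is also all the paper uses downstream, since one non-Riemannian leaf already rules out a Riemannian $W_p$ on $\Ptwo$.
\begin{itemize}
\item Your first condition proves this without any second expansion. For fixed $p\neq2$, $\|Q_0\|_p$ is not constant over standardized bases: moving a little mass far out at fixed variance drives it to $+\infty$ for $p>2$ and to $0$ for $p<2$.
\item Hence any base off the level set $\|Q_0\|_p^2=p-1$ violates the parallelogram law on $V$. The Gaussian, for instance, does so at $p=1$ ($2/\pi\neq0$) and at $p=4$ ($\sqrt3\neq3$).
\item A per-base ``iff $p=2$'' needs a per-base check. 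For non-even $p$, the Plotkin--Rudin equimeasurability theorem shows the rescaled Student $t_{p+1}$ is the only standardized exception. For even $p\ge4$, every base matching its first $p$ moments is one.
\end{itemize}
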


\noindent This fixes the \emph{exponent} within the transport family; that the cost is a transport
\emph{at all} remains \cref{nv4:ax:P0}, the one posed choice (the parameter-space alternative,
\cref{enc:ml}, is a Fisher--Rao cost on weights). It is orthogonal to \cref{nv4:rem:C1}.
Dynamically the canonical gradient-flow structure of the heat equation lives in the same $W_2$
(the JKO theorem \citep{jko1998}: heat flow $=$ $W_2$ gradient flow of Shannon entropy): a
tier-P corroboration that enters no proof (our reading is static).

\begin{premisse}[\textbf{A3}: arena \emph{\small(consequence of \cref{nv4:ax:P0})}]\label{nv4:ax:arene}
The geometry deforms \emph{transport}: arena $(\Ptwo,\Wtwo)$, the two-step consequence of
\cref{nv4:ax:P0} (transport \emph{type}; then $p=2$ for a Riemannian length, \cref{nv4:prop:quad}; cf.\
\cref{nv4:rem:dualJ,nv4:rem:amari}).
\end{premisse}

\noindent\emph{Physical corroboration (motivates, never a proof).} The same arena is
\emph{also reached} from physics: driving a density from $p_0$ to $p_1$ in time $T$ dissipates at
least $\Wtwo^2(p_0,p_1)/T$ \citep{benamou2000,dechant2019}, so the physical price of a revision is
itself a $\Wtwo$ length (\cref{sec:travaux}). This is \emph{not} an independent derivation, and
\emph{no proof depends on the physical reading}:
invoking the dissipation floor for a \emph{belief} assumes the very identification of
\cref{nv4:ax:P0} (it relocates the contestable step, it does not remove it). It stays tier-P;
it \emph{motivates}, never proves (\cref{nv4:tab:dag}).

\medskip
\noindent The arena and its scalar price still leave the relief $U$
free: \emph{every} relief that dominates the Fisher information sends $\partial_\infty$ to infinite distance
(\cref{nv4:thm:carI}), so $cJ$, $cJ^2$, or $J+\mathrm{const}$ would all serve; Fisher is not yet
singled out. One further commitment singles it out.

\begin{postulat}[uniform information pricing]\label{nv4:ax:P1}
The price is \emph{uniform}: one nat of knowledge costs the same length everywhere: the metric
slope of knowledge $-H$ is constant across the space of beliefs. This is a choice an agent could
refuse (a distorted price would charge confidence unevenly); we adopt it as the calibration a
digital twin owes its own uncertainty. By the characterization \cref{nv4:thm:carII} it singles out,
\emph{among continuous reliefs}, a \emph{unique family} (up to the unit $c$), $U=cJ$ (the Fisher family), and the boundary at infinite distance then follows
($cJ\in\Ccal$). Its denials are live: $U=J+\mathrm{const}$ dominates but prices nats unevenly;
$U=cJ^{\alpha}$, $\alpha>1$, sends $\partial_\infty$ to infinite distance but prices nats unevenly.
\end{postulat}

\noindent \emph{Alternative pricings.} Other calibrations are available, a price proportional to
energy ($U\propto E$) or to a divergence ($U\propto\mathrm{KL}$); we adopt uniform pricing and leave
these open.

\noindent P1 \emph{presupposes} P0: the constant-slope condition is read on the transport metric
(the slope is $\sqrt J$ by the Otto identity \cref{nv4:ax:B1}, which lives on the arena). The two
postulates are \emph{nested}; P0 builds the arena and its scalar
price, P1 fixes that price. The uniform-pricing reading aligns with Landauer (each nat dissipates the same
$\kB T$); like the dissipation floor, this is \emph{defense, not proof}: it enters no theorem
(\cref{nv4:thm:carII} uses only the eikonal condition as a hypothesis; \cref{nv4:prop:firewall}).

\begin{principe}[the recasting of $J$: from ruler component to slope]\label{nv4:rem:dualJ}
The whole construction turns on a single recasting. In information geometry $J$ is a
\emph{component} of the Fisher--Rao metric tensor (\v{C}encov), the ruler of distinguishability: the
Fisher--Rao squared-length of the translation generator $\delta p=-\nabla p$
($\int(\delta p)^2/p=\int|\nabla p|^2/p=J$). Here, by the
Otto identity $J=\|\nabla_{\Wtwo}H\|^2$ (\cref{nv4:ax:B1}), the same scalar becomes the \emph{squared slope
of $-H$} on the transport arena. This recasting is what lets uniform pricing \emph{characterize} $U=cJ$
(\cref{nv4:thm:carII}) rather than restate it, and it makes the \emph{forbidden} object
($J=\infty$ at the Dirac) and the \emph{priced} one ($U=cJ$) two readings of one divergence:
the slope of $-H$ at the boundary. It recurs in the eikonal
(\cref{nv4:thm:carII}) and the curvature that reads $J_0$ (the Fisher information of the standardized base, $J_0:=\int(\varphi_0')^2/\varphi_0$, $=1$ for the Gaussian; \cref{nv4:thm:locscale}), off which the
positioning against \v{C}encov is then read (\cref{nv4:rem:amari}); the boundary at infinite distance is the exception; it
needs only \cref{nv4:ax:B2}, not the recasting.
\end{principe}

\subsection{The cost class}

This conformal class is the setting in which every result below is established. \cref{nv4:ax:P0} prices a
revision by a \emph{scalar} (one number at each belief) and a scalar multiplying the transport
metric is exactly a \emph{conformal} reweighting: $\tilde g=2(e+U)\,g_{\Wtwo}$, the ``$2$'' being the kinetic normalization of the Maupertuis--Jacobi action (\cref{nv4:thm:maupertuis}); the quadratic transport exponent $p=2$ is forced separately (\cref{nv4:prop:quad}). The conformal form is therefore the geometric face of ``scalar price''. The three results are then three statements about \emph{which} reliefs $U$ make
this geometry well-posed; the candidate class $\Ucal$ gathers all admissible reliefs.

\begin{definition}[space of candidates \emph{\small(setting)}]\label{nv4:def:candidat}
A \emph{relief} is a function $U:\Ptwo(\Rbb^n)\to[0,+\infty]$. The \emph{space
of candidates} is
\[
  \Ucal:=\bigl\{\,U:\Ptwo(\Rbb^n)\to[0,+\infty]\ \big|\ U\ \text{is }
  \tau\text{-sequentially l.s.c.}\,\bigr\},
\]
$\tau$ being the narrow convergence. To each $U\in\Ucal$ and each level
$e\ge0$ is associated the \emph{cost geometry} ($2$: unit; $e>0$ for
licit reparametrization and separation of points, $e=0$ for the eikonal/limiting case,
conformal metric on $\{U>0\}$)
\begin{equation}
  \tilde g_{e,U}:=2\,(e+U)\,g_{\Wtwo},\qquad
  \ell_e(\gamma):=\int_0^T\!\sqrt{2(e+U(\gamma_t))}\,\md\gamma(t)\,\dd t,\qquad
  d_e:=\inf\ell_e .
  \label{nv4:eq:cout}
\end{equation}
The candidate class is \emph{conformal}: the scalar-price form posited in \cref{nv4:ax:P0};
non-conformal (anisotropic) deformations are the open horizon (\cref{nv4:rem:C1}).
\end{definition}

\noindent The results then carve, inside $\Ucal$, three nested subclasses
$\Fcal\subset\Ccal\subset\Wcal\subset\Ucal$ (the Fisher family, the well-posed and the boundary classes;
defined below in \cref{nv4:def:fisher,nv4:def:classe,nv4:def:mur}; \cref{nv4:fig:univers}).

\section{Characterizations}
\label{sec:carac}\label{nv4:sec:carac}

We first gather the borrowed tools (\cref{nv4:sec:tools}) and the machinery valid
for every candidate (\cref{nv4:sec:machine}), then establish the three characterizations:
the \emph{boundary} ($\partial_\infty$), the \emph{uniformly-priced family} (the eikonal $U=cJ$), and the
\emph{curvature} (hyperbolicity and Stam rigidity). Throughout, P0 and P1 are taken as given, and we
derive their consequences.

\noindent The classes and the entropy reading used throughout are gathered at a glance in \cref{nv4:fig:univers,nv4:tab:entropy}.

\begin{figure}[t]
\centering
\begin{tikzpicture}[
  x=1cm,y=1cm,>=Stealth,font=\scriptsize,line width=0.6pt,
  gate/.style={draw=soudure!80!black,fill=soudure!12,text=soudure!82!black,
    rounded corners=2pt,inner sep=2.6pt,align=center,font=\scriptsize},
  nm/.style={inner sep=2.2pt,font=\bfseries\scriptsize,anchor=west,
    fill=white,rounded corners=1.5pt},
  ex/.style={font=\scriptsize\itshape,text=black!50,align=center,inner sep=1.8pt,
    fill=white,rounded corners=1.5pt},
  P/.style={draw=axenergie!85!black,fill=axenergie!9,text=axenergie!82!black,
    rounded corners=2pt,align=center,font=\bfseries\scriptsize,inner sep=3pt,text width=58mm},
  motiv/.style={->,dashed,line width=0.8pt,draw=axenergie!72!black},
]
% anneaux concentriques (externe -> interne)
\filldraw[rounded corners=5pt,fill=black!3,draw=black!55] (0.2,0.3) rectangle (13.8,11.2);
\filldraw[rounded corners=5pt,fill=axinfo!5,draw=axinfo!68!black] (1.0,1.0) rectangle (13.0,9.4);
\filldraw[rounded corners=5pt,fill=axinfo!10,draw=axinfo!88!black] (1.9,1.8) rectangle (12.1,7.2);
\filldraw[rounded corners=5pt,fill=okD!8,draw=okD!75!black] (3.0,2.6) rectangle (11.0,4.9);
% ring names (white background -> readable over edges and arrows)
\node[nm,text=black!65] at (0.45,10.7){$\Ucal$: all admissible reliefs ($U$ l.s.c.\ $\ge0$)};
\node[nm,text=axinfo!68!black] at (1.2,9.05){$\Wcal$: certainty at infinite distance};
\node[nm,text=axinfo!88!black] at (2.05,6.95){$\Ccal$: well-posed (existence/stability)};
\node[nm,text=okD!72!black] at (3.2,4.6){$\Fcal=\{cJ\}$: uniform};
% gates (= characterizations): centered, in the trough of each band
\node[gate,text width=60mm] at (6.9,10.0) {domination \emph{at the boundary} $\Longleftrightarrow$ $\partial_\infty$ at infinite distance\ (\textbf{Char.\,I}\,ii, on $\{cJ^\alpha\}$)};
\node[gate,text width=60mm] at (6.9,8.35) {\emph{global} domination $\Longrightarrow$ well-posed\ (\textbf{Char.\,I}\,i, sufficiency)};
\node[gate,text width=66mm] at (6.9,5.55) {uniform pricing (eikonal) $\Longleftrightarrow$ $U{=}cJ$\ (\textbf{Char.\,II}) \;\textperiodcentered\; key step: slope $\sqrt J$ (Otto identity)};
% examples (white background)
\node[ex] at (6.9,7.55) {$U=cJ^\alpha,\ \alpha>1$: $\partial_\infty$ at infinite distance, but not uniform};
\node[ex] at (6.9,6.3) {$U=J+\text{cst}$: dominant, uneven price};
\node[ex] at (6.9,0.7) {outside $\Wcal$: underpricing $\alpha<1$ $\Rightarrow$ the boundary comes to finite distance $\Rightarrow$ collapse};
% interieur F : representant
\node[circle,fill=okD!75!black,inner sep=1.7pt] at (6.9,3.78) {};
\node[font=\bfseries\scriptsize,text=okD!42!black,anchor=south] at (6.9,3.92) {$2J$};
\node[ex,text=okD!48!black] at (6.9,3.4) {every hyperbolic $\Fcal$: $K=-1/(2cJ_0)<0$};
% P (physique) hors univers
\node[P] (Phys) at (7,12.0) {\textbf{P}: finite agent, physical information};
\draw[motiv] (Phys.south) to[out=-90,in=92] (10.6,9.42);
\draw[motiv] (Phys.south east) to[out=-65,in=90] (11.5,7.22);
\draw[motiv] (Phys.south east) to[out=-48,in=88] (10.8,4.92);
\node[font=\scriptsize\itshape,text=axenergie!70!black,align=center] at (12.3,11.55)
  {motivates, never\\proves};
\end{tikzpicture}
\caption{\textbf{The universe of cost geometries.} The nested rings
$\Fcal\subset\Ccal\subset\Wcal\subset\Ucal$: the boundary class $\Wcal$ (\cref{nv4:thm:carI}), the
well-posed class $\Ccal$, the uniformly-priced Fisher family $\Fcal$ (\cref{nv4:thm:carII}),
with the Fisher representative $2J$ at the center, every member hyperbolic; off-figure, among the location-scale leaves of $\Fcal$ the Gaussian is the most hyperbolic (Stam rigidity,
\cref{nv4:thm:locscale}). The physics \textbf{P} \emph{motivates} the boundaries
(dashed) without entering any proof (\cref{nv4:prop:firewall}).}
\label{nv4:fig:univers}
\end{figure}

\begin{table}[!t]
\centering\small
\caption{\textbf{The framework and its geometry of $-H$: one map.} \emph{Top:} every commitment
with its status (exactly two posed postulates; all else forced, transparent, borrowed, or a unit).
\emph{Bottom:} each object re-read as a fact about the entropy $-H$ via the Otto identity (a
presentation, adding no theorem).}
\label{nv4:tab:entropy}
\begin{tabular}{@{}p{0.25\textwidth} p{0.39\textwidth} p{0.24\textwidth}@{}}
\toprule
\multicolumn{3}{@{}l}{\textit{What makes $-H$ a geometry}}\\
Element & Commitment & Status \\
\midrule
Object (\cref{nv4:ax:objet}) & belief $=$ density; certainty ($-H=+\infty$) excluded & forced (finiteness) \\
Measure (\cref{nv4:ax:mesure}) & knowledge $=$ Shannon $-H$ (Shannon--Khinchin) & transparent \\
\textbf{P0} (\cref{nv4:ax:P0}) & cost $=$ scalar price on optimal transport $\Wtwo$ & \textbf{postulate} \\
\textbf{P1} (\cref{nv4:ax:P1}) & uniform price: one nat, one length everywhere & \textbf{postulate} \\
Otto (\cref{nv4:ax:B1}) & the slope of $-H$ is $\sqrt J$: the key step & borrowed \\
Cost class (\cref{nv4:def:candidat}) & conformal (scalar) relief $2(e+U)g_{\Wtwo}$ & restriction (C1) \\
Exponent \& unit ($p{=}2$, $c$) & $p{=}2$ forced by Riemannian length (\cref{nv4:prop:quad}); $c$ a cost gauge (\cref{nv4:thm:jauge}) & forced / unit \\
\addlinespace
\midrule
\multicolumn{3}{@{}l}{\textit{Its geometry: each object read through $-H$}}\\
Geometric object & Entropic reading & Form \\
\midrule
Cost \emph{floor} & entropy-distance floor; $|\Delta H|$ is only the entropy-axis projection (a pure location shift, $\Delta H=0$, still costs) & $d_0\ge\sqrt{2c}\,|\Delta H|$ (eq.\ along $\nabla(-H)$) \\
Fisher information & squared slope of $-H$ (Otto) & $J=\|\nabla_{\Wtwo}H\|^2$ \\
Uniform pricing (eikonal) & constant entropy slope ($\kappa$, \cref{nv4:def:eikonal}) & $\sqrt J/\sqrt{2U}=\kappa\Leftrightarrow U=cJ$ \\
$\partial_\infty$ (certainty) & divergence of the entropy & $\partial_\infty=\{H=-\infty\}$ \\
Curvature (at $e=0$) & inverse of the base entropy-slope$^2$ & $K=-1/(2cJ_0)$ \\
Stam rigidity (static) & at fixed variance: max entropy $=$ min base slope $=$ most-curved leaf & $J_0\ge1$, eq.\ iff Gaussian \\
Thermodynamic bridge & the floor is an energy (Landauer; tier-P) & $d_0\ge(\sqrt{2c}/\kB T)\,E$ (eq.\ along $\nabla(-H)$) \\
\bottomrule
\end{tabular}
\end{table}

\subsection{Borrowed tools}
\label{nv4:sec:tools}

\begin{premisse}[\textbf{B1}: the Otto identity]\label{nv4:ax:B1}
On $(\Ptwo,\Wtwo)$, $-H$ is geodesically convex \citep{mccann1997} and
$\|\nabla_{\Wtwo}H\|^2=J$: the Otto identity \citep{otto2001}, here via
\citet[§10.4]{ambrosio2008}. \emph{This identity converts ``uniform price'' into ``cost $\propto J$''.} \emph{(The equality is used only by the eikonal
(\cref{nv4:thm:carII}); the boundary at infinite cost-distance (\cref{nv4:thm:carI}) needs only the upper-gradient inequality
\cref{nv4:ax:B2}.)}
\end{premisse}

\begin{premisse}[\textbf{B2}: strong upper gradient]\label{nv4:ax:B2}
For every $AC$ (absolutely continuous) curve: $|\Ent(\gamma_t)-\Ent(\gamma_s)|\le\int_s^t\sqrt{J}\,
\md\gamma\,\dd r$, the left-hand side finite and $\Ent\circ\gamma$ absolutely
continuous as soon as the right-hand side is \citep[§§1.2,2.4]{ambrosio2008}.
\end{premisse}

\begin{premisse}[\textbf{B3}: compatibility $(\Wtwo,\tau)$]\label{nv4:ax:B3}
$(\Ptwo,\Wtwo)$ Polish and geodesic \citep[ch.~7]{ambrosio2008};
$\Wtwo$ is $\tau$-l.s.c.; the $\{m_2\le M\}$ are $\tau$-compact.
\end{premisse}

\begin{premisse}[\textbf{B4}: flat Gaussian leaf]\label{nv4:ax:B4}
$\Wtwo^2(\mathcal N(\mu_1,\sigma_1^2),\mathcal N(\mu_2,\sigma_2^2))
=(\mu_1-\mu_2)^2+(\sigma_1-\sigma_2)^2$ \citep{givens1984}: the leaf
$\Gcal=\{\mathcal N(\mu,\sigma^2)\}$ is isometric to the Euclidean half-plane
$\{\sigma>0\}$; McCann's geodesics stay Gaussian there
\citep[ex.~7.3.14]{ambrosio2008}.
\end{premisse}

\begin{premisse}[\textbf{B5}: metric Hopf--Rinow]\label{nv4:ax:B5}
A locally compact length space in which bounded closed balls are compact is
complete and geodesic \citep[thm~2.5.28]{burago2001}.
\end{premisse}

\begin{premisse}[\textbf{B6}: l.s.c.\ dual of Fisher]\label{nv4:ax:B6}
$J(q)=\sup_{\psi\in C_c^\infty}\{-2\int\Delta\psi\,\dd q-\int|\nabla\psi|^2\dd q\}$
for every $q\in\mathcal P(\Rbb^n)$ \citep[lem.~D.45]{dupuis1997}; as a supremum
of narrowly continuous functionals, $J$ is narrowly l.s.c. \emph{(Hence
$J\in\Ucal$ and, for every $c>0$, $cJ\in\Ucal$.)}
\end{premisse}

\begin{premisse}[\textbf{B7}: flatness of 1D location-scale families]\label{nv4:ax:B7}
The quantile map $p\mapsto F_p^{-1}$ is an isometry of $(\Ptwo(\Rbb),\Wtwo)$ onto
the convex cone of inverse cumulative distribution functions in $L^2(0,1)$
\citep{santambrogio2015,bobkov2019}. For a base $\varphi_0$ (density, mean
$0$, variance $1$, quantile $Q_0$), the location-scale family
$\mathcal L_{\varphi_0}=\{p_{\mu,\sigma}=\sigma^{-1}\varphi_0((\cdot-\mu)/\sigma)\}$
is there the affine $2$-plane $\{\mu\mathbf 1+\sigma Q_0\}$, with
$\langle\mathbf 1,\mathbf 1\rangle=1$, $\langle\mathbf 1,Q_0\rangle=0$,
$\langle Q_0,Q_0\rangle=1$: $(\mathcal L_{\varphi_0},\Wtwo)$ is thus
\emph{isometric to the Euclidean half-plane} $\{\sigma>0\}$, with metric
$\dd\mu^2+\dd\sigma^2$. \emph{(Generalizes \cref{nv4:ax:B4}: the Gaussian is the
case $\varphi_0=\mathcal N(0,1)$.)}
\end{premisse}

\noindent\emph{Metric setting.} $\gamma:[0,T]\to\Ptwo$ is $AC^2$ if there exists
$m\in L^2$ with $\Wtwo(\gamma_s,\gamma_t)\le\int_s^t m$; $\md\gamma$ exists a.e.\
\citep[thm~1.1.2]{ambrosio2008}. We set $\Ecal(\gamma):=\sup_{\text{part.}}\sum_i
\Wtwo^2(\gamma_{t_i},\gamma_{t_{i+1}})/(t_{i+1}-t_i)$, the curve's metric \emph{energy}.

\begin{lemme}[kinetic action (metric energy)]\label{nv4:lem:K}
\emph{(a)} $\gamma\in AC^2\Rightarrow \Ecal(\gamma)=\int_0^T\md\gamma^2$.
\emph{(b)} $\Ecal(\gamma)<\infty\Rightarrow\gamma\in AC^2$.
\emph{(c)} $\Ecal$ is $\tau$-pointwise l.s.c.
\end{lemme}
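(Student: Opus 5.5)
The plan is to prove the three parts in the order (a), (b), (c), working throughout in the metric setting of \citet[ch.~1]{ambrosio2008} and using only \cref{nv4:ax:B3}.

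For \emph{(a)}, take $\gamma\in AC^2$ with metric derivative $\md\gamma\in L^2$. The upper bound comes from \citep[thm~1.1.2]{ambrosio2008}: $\Wtwo(\gamma_{t_i},\gamma_{t_{i+1}})\le\int_{t_i}^{t_{i+1}}\md\gamma$. By Cauchy--Schwarz,
\[
\Wtwo^2(\gamma_{t_i},\gamma_{t_{i+1}})/(t_{i+1}-t_i)\le\int_{t_i}^{t_{i+1}}\md\gamma^2 .
\]
Summing over the partition and taking the supremum gives $\Ecal(\gamma)\le\int\md\gamma^2$. For the reverse inequality, take uniform partitions of mesh $h\to0$. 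Let $f_h$ be the piecewise-constant function equal to $\Wtwo(\gamma_{t_i},\gamma_{t_{i+1}})/h$ on $[t_i,t_{i+1})$. By the a.e.\ existence of the metric derivative, $f_h\to\md\gamma$ a.e., at least along a suitable sequence of shifted partitions; I will use the standard averaging-over-shifts trick. Fatou's lemma then yields $\int\md\gamma^2\le\liminf\int f_h^2\le\Ecal(\gamma)$.

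For \emph{(b)}, assume $\Ecal(\gamma)<\infty$. For $s<t$, the two-point partition $\{s,t\}$ (refined to cover $[0,T]$) already gives $\Wtwo(\gamma_s,\gamma_t)\le\sqrt{\Ecal(\gamma)}\,\sqrt{t-s}$, so $\gamma$ is $1/2$-Hölder. This alone does not give $AC^2$, so I will construct an $L^2$ majorant. Consider the functions $f_h$ above. Their $L^2$ norms are bounded by $\sqrt{\Ecal}$, so a subsequence converges weakly in $L^2$ to some $m$. For fixed $s<t$ on dyadic grids, the triangle inequality gives $\Wtwo(\gamma_s,\gamma_t)\le\int_s^t f_h$ for all finer dyadic $h$. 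Passing to the weak limit gives $\Wtwo(\gamma_s,\gamma_t)\le\int_s^t m$, first for dyadic $s,t$ and then for all $s,t$ by Hölder continuity of $\gamma$ and absolute continuity of the integral. This is exactly the $AC^2$ definition.

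For \emph{(c)}, suppose $\gamma^k\to\gamma$ pointwise in $\tau$, meaning $\gamma^k_t\to\gamma_t$ narrowly for each $t$. For any fixed finite partition, \cref{nv4:ax:B3} gives $\tau$-l.s.c.\ of $\Wtwo$, applied jointly to pairs: narrow convergence of both marginals suffices, since couplings are tight and the quadratic cost is l.s.c. Hence
\[
\sum_i\Wtwo^2(\gamma_{t_i},\gamma_{t_{i+1}})/(t_{i+1}-t_i)\le\liminf_k\Ecal(\gamma^k).
\]
Taking the supremum over partitions gives $\Ecal(\gamma)\le\liminf_k\Ecal(\gamma^k)$. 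Since $\Ecal$ is a supremum of l.s.c.\ functionals, it is l.s.c.

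I expect the main obstacle to be the a.e.\ convergence of difference quotients in the lower bound of (a), and the matching weak-limit step in (b). If the shift-averaging argument becomes delicate, I would fall back on citing \citet[thm~1.1.2, rem.~1.1.3]{ambrosio2008} for the characterization of the minimal $m$ as $\md\gamma$. That result gives $\int\md\gamma^2\le\int m^2\le\Ecal$ directly, once $m$ is obtained as in (b).
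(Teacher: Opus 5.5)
Your parts (a) and (c) are the paper's argument: Cauchy--Schwarz for $\Ecal\le\int\md\gamma^2$; a.e.\ convergence of grid quotients plus Fatou for the reverse inequality; and, for (c), a supremum of $\tau$-l.s.c.\ partition sums. The genuine difference is in (b).

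The paper treats $\Ecal$ as an interval function. The inequality $\frac{(x+y)^2}{\alpha+\beta}\le\frac{x^2}{\alpha}+\frac{y^2}{\beta}$, together with the triangle inequality, makes partition sums increase under refinement, so $\Ecal$ is additive over subintervals. Combined with $\Wtwo^2(\gamma_s,\gamma_t)\le(t-s)\,\Ecal_{[s,t]}$, this gives classical absolute continuity. The metric derivative then exists a.e., and the Fatou step of (a) gives $\int\md\gamma^2\le\Ecal$.

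You instead take a weak-$L^2$ limit $m$ of the piecewise-constant quotients and pass to the limit in $\Wtwo(\gamma_s,\gamma_t)\le\int_s^t f_h$. This produces the $L^2$ majorant directly, so $AC^2$ holds by definition with $\int m^2\le\Ecal$, and no metric-derivative theory is needed. Your route is more self-contained. The paper's route, in exchange, stays within the classical theory of absolutely continuous curves and yields additivity of $\Ecal$ as a by-product.

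Your fallback also deserves to be the main argument for (a). For $\gamma\in AC^2$ the upper bound gives $\Ecal<\infty$, so (b) supplies $m$. Minimality of $\md\gamma$ among majorants \citep[thm~1.1.2]{ambrosio2008} then gives $\int\md\gamma^2\le\int m^2\le\Ecal$, with no pointwise convergence of quotients at all.

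If you keep the Fatou route, note that existence of $\md\gamma(t)$ alone bounds $f_h(t)$ only from above. The quotient $f_h(t)$ is taken over a cell \emph{straddling} $t$, and at a backtracking point it can vanish. What gives $\liminf_h f_h\ge\md\gamma$ a.e.\ is the representation $\md\gamma=\sup_n|\tfrac{\dd}{\dd t}\Wtwo(\gamma_t,y_n)|$ over a countable dense set $\{y_n\}$. Straddling quotients of a real function converge at its points of differentiability, so no shifted partitions or averaging over shifts are needed.
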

\begin{proof}
\emph{(a)} ($\le$, i.e.\ $\Ecal\le\int\md\gamma^2$) Cauchy--Schwarz on each subinterval,
$\Wtwo^2(\gamma_{t_i},\gamma_{t_{i+1}})\le\allowbreak(t_{i+1}-t_i)\int_{t_i}^{t_{i+1}}\md\gamma^2$,
sum and then take the sup. ($\ge$, i.e.\ $\int\md\gamma^2\le \Ecal$) the dyadic
quotients $f_N\to\md\gamma$ a.e.\ (a.e.\ limit, \citep[thm~1.1.2]{ambrosio2008}, not
the Cauchy--Schwarz bound); Fatou: $\int\md\gamma^2\le\liminf\int f_N^2\le \Ecal$.
\emph{(b)} the inequality $\frac{(x+y)^2}{\alpha+\beta}\le\frac{x^2}{\alpha}+
\frac{y^2}{\beta}$ yields additivity of $\Ecal$, hence absolute continuity and then
$\int\md\gamma^2\le \Ecal$. \emph{(c)} each term is $\tau$-l.s.c.\
(\cref{nv4:ax:B3}); a sup of l.s.c.\ is l.s.c.
\end{proof}

\begin{lemme}[entropy, slope: recall of imports]\label{nv4:lem:slope}
With $H$, $J$ in the sense of \cref{nv4:def:arene}: $\Ent>-\infty$ on $\Ptwo$, geodesically convex \citep{mccann1997}; its metric
slope equals $\sqrt J$ (\cref{nv4:ax:B1}); it is a strong upper gradient
(\cref{nv4:ax:B2}).
\end{lemme}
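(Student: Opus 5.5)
This lemma is a collection of borrowed facts, so I will verify its three clauses in turn and trace each to its source, keeping track of which hypotheses of the sources must be checked in our setting; no new estimate should be needed.

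\emph{First, $\Ent>-\infty$ on $\Ptwo$.} For $p$ with finite second moment, compare with the Gaussian $g$ of the same second moment. Gibbs' inequality gives $\int p\ln(p/g)\ge0$, so $H(p)\le-\int p\ln g=\tfrac n2\ln(2\pi m_2/n)+\tfrac n2<+\infty$. Hence $\Ent=-H>-\infty$. To keep $\Ent$ well defined on all of $\Ptwo$, I will also set the convention $\Ent=+\infty$ on non-absolutely-continuous measures, which is exactly $\partial_\infty$ of \cref{nv4:def:arene}. The negative part of $p\ln p$ is integrable by the same moment bound, so the integral is unambiguous.

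\emph{Second, geodesic convexity.} This is McCann's displacement convexity of $\int p\ln p$, which holds because $s\mapsto s^n\,\phi(s^{-n})$ is convex and nonincreasing for $\phi(s)=s\ln s$. I will cite \citet{mccann1997}, or equivalently \citet[§9.1]{ambrosio2008}, which states the result along generalized geodesics in $\Ptwo(\Rbb^n)$.

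\emph{Third, the slope and upper-gradient clauses.} I will invoke \citet[thm.~10.4.9]{ambrosio2008}. It states that for $\Ent$ proper, l.s.c.\ and geodesically convex, the metric slope $|\partial\Ent|(p)$ is finite if and only if $p\in W^{1,1}_{\rm loc}$ with $\nabla p/p\in L^2(p)$. In that case $|\partial\Ent|^2=\int|\nabla p|^2/p=J$, which is \cref{nv4:ax:B1}. For the strong upper gradient, \citet[cor.~2.4.10]{ambrosio2008} states that for $\lambda$-convex l.s.c.\ functionals the slope is a strong upper gradient, and this gives \cref{nv4:ax:B2}. The l.s.c.\ needed here follows from the first step together with \cref{nv4:ax:B3}.

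The only delicate point is matching the paper's $J$ (defined through $\nabla p$ for densities, with the value $+\infty$ otherwise) with the weak Sobolev formulation used in Ambrosio--Gigli--Savaré, including at measures in $\partial_\infty$. I will handle this through the dual formula \cref{nv4:ax:B6}, which defines $J$ on all of $\mathcal P$ and agrees with $\int|\nabla p|^2/p$ on $\mathcal R$. With that identification, all three clauses follow from the cited sources.
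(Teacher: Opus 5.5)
Your proposal is correct and follows the paper's route. The lemma is a pure recall: geodesic convexity comes from McCann, the slope identity from B1 (Ambrosio--Gigli--Savaré \S10.4) and the strong upper gradient from B2 (their \S\S1.2, 2.4), and your Gibbs bound and the domain-matching through B6 only make explicit what the paper leaves implicit. One small correction: the $\Wtwo$-lower semicontinuity of $\Ent$ does not follow from B3, nor from your $p$-dependent Gaussian comparison; it follows from writing $\Ent(p)=\mathrm{KL}(p\,\|\,g)+\int p\ln g$ with a \emph{fixed} Gaussian $g$, using the narrow l.s.c.\ of the relative entropy together with the continuity of $m_2$ under $\Wtwo$.
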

\subsection{What holds for every candidate: existence and geodesics}
\label{nv4:sec:machine}

This entire section holds for \emph{every} $U\in\Ucal$: it is the common machinery of
the whole class, independent of the choice of relief. For $\gamma\in AC^2$ joining
$p_0$ to $p_1$: $\Acal_T(\gamma):=\int_0^T[\tfrac12\md\gamma^2+U(\gamma_t)]\dd t$,
$\Phi(T):=\inf\Acal_T$, $\Psi(e):=\inf\ell_e=d_e$. The statements below hold
for every candidate; their proofs, with their regime clauses, are
collected in \cref{nv4:app:machine}.

\begin{lemme}[AM--GM]\label{nv4:lem:amgm}
$\forall a,b\ge0:\ \tfrac12a^2+b\ge\sqrt{2b}\,a$, with equality iff $a=\sqrt{2b}$.
\end{lemme}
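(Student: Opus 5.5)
The plan is to reduce the inequality to a perfect square. For $a,b\ge0$ the quantity $\sqrt{2b}$ is a well-defined nonnegative real, and we set $s:=\sqrt{2b}$, so that $b=\tfrac12 s^2$. The claimed inequality $\tfrac12a^2+b\ge\sqrt{2b}\,a$ then reads
\[
  \tfrac12a^2+\tfrac12s^2-sa\;=\;\tfrac12(a-s)^2\;\ge\;0 ,
\]
which holds for all real $a,s$, not only for nonnegative ones.

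The key steps, in order, are as follows. First, substitute $b=\tfrac12 s^2$ with $s=\sqrt{2b}\ge0$. This is legitimate precisely because $b\ge0$, and it is the only place the sign hypothesis on $b$ is used. Second, expand the difference of the two sides and recognize it as $\tfrac12(a-s)^2$. Third, read off the equality case: a square vanishes iff its base does, so equality holds iff $a=s=\sqrt{2b}$.

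There is no genuine obstacle. The hypothesis $a\ge0$ is not needed for the inequality itself; it only matches the intended use, where $a=\md\gamma$ and $b=U(\gamma_t)$ (or $e+U$). In that use the lemma is applied pointwise in $t$ and integrated, giving $\Acal_T(\gamma)\ge\ell_0(\gamma)$. Equality holds iff the speed is reparametrized to satisfy $\md\gamma=\sqrt{2U}$, which is the Maupertuis--Jacobi energy condition used later in \cref{nv4:thm:maupertuis}.

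The one point to check in that application is the value $b=+\infty$, which is allowed since $U$ takes values in $[0,+\infty]$. There the inequality holds trivially with the convention $0\cdot\infty=0$ when $a=0$, and it is never invoked at infinite relief along finite-action curves. I would mention this in a sentence rather than treat it as a case.
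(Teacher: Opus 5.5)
Your proof is correct and is exactly the paper's argument: the paper writes $\tfrac12a^2+b-\sqrt{2b}\,a=\tfrac12(a-\sqrt{2b})^2\ge0$, with equality iff $a=\sqrt{2b}$, which is your substitution $s=\sqrt{2b}$ done inline. Your aside on $b=+\infty$ is fine for the inequality, but the equality clause does not extend there: for finite $a>0$ both sides equal $+\infty$ although $a\neq\sqrt{2b}$. So the lemma is best kept, as stated, for real $b$.
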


\noindent This inequality converts the action $\tfrac12|\dot\gamma|^2+(e+U)$ into
length $\sqrt{2(e+U)}\,|\dot\gamma|$, with saturation at the Jacobi speed.

\begin{theoreme}[Maupertuis--Jacobi correspondence \citep{arnold1989}]\label{nv4:thm:maupertuis}
$\forall U\in\Ucal\ \forall e>0$, writing $p_0,p_1\in\{U<\infty\}$:
\emph{(i)} $\forall\gamma\in AC^2:\ \Acal_T(\gamma)\ge\ell_e(\gamma)-eT\ge\Psi(e)-eT$;
\emph{(ii)} if $\hat\gamma$ minimizes $\ell_e$ ($\Psi(e)<\infty$, $U(\hat\gamma)<\infty$
a.e.), its constant-energy reparametrization $\md{\tilde\gamma}=\sqrt{2(e+U)}$
exists, with parameter $T_e=\int\md{\hat\gamma}/\sqrt{2(e+U)}$, and minimizes $\Acal_{T_e}$
with $\Phi(T_e)=\Psi(e)-eT_e$ \emph{(the bound $L/\sqrt{2e}$ requires $e>0$: this is
what makes the reparametrization licit)};
\emph{(iii)} if $\gamma^\star$ minimizes $\Acal_T$ and saturates \emph{(i)}, then
$\tfrac12\md{\gamma^\star}^2=e+U(\gamma^\star)$ a.e.
\end{theoreme}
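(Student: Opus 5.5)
The plan is to work pointwise in time with the AM--GM inequality (\cref{nv4:lem:amgm}), then integrate. The three parts are handled in order.

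\emph{Part (i).} Fix $\gamma\in AC^2$ joining $p_0$ to $p_1$ on $[0,T]$. Apply \cref{nv4:lem:amgm} at a.e.\ $t$ with $a=\md\gamma(t)$ and $b=e+U(\gamma_t)\ge0$:
\[
\tfrac12\md\gamma^2+e+U(\gamma_t)\ \ge\ \sqrt{2(e+U(\gamma_t))}\,\md\gamma(t).
\]
Integrating over $[0,T]$ gives $\Acal_T(\gamma)+eT\ge\ell_e(\gamma)$. Measurability holds because $U$ is $\tau$-l.s.c.\ and $t\mapsto\gamma_t$ is continuous, so $U\circ\gamma$ is l.s.c. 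If $U=\infty$ on a positive-measure set, both sides are infinite and the inequality is trivial. The second inequality $\ell_e(\gamma)\ge\Psi(e)$ is the definition of $d_e$ as an infimum.

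\emph{Part (ii).} Let $\hat\gamma$ minimize $\ell_e$, with length $L:=\ell_e(\hat\gamma)=\Psi(e)<\infty$. First I would pass to the arclength reparametrization $\eta$, which has $\md\eta=1$ a.e.\ and $\Wtwo$-length $\Lambda:=\int\md{\hat\gamma}\,\dd t$. One must check $\Lambda<\infty$. Since $e>0$, we have $\sqrt{2(e+U)}\ge\sqrt{2e}$, hence $\Lambda\le L/\sqrt{2e}$; this is exactly where $e>0$ is used.

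Next define the new clock $\tau(s):=\int_0^s\dd r/\sqrt{2(e+U(\eta_r))}$. It is finite, with $\tau(\Lambda)=T_e\le\Lambda/\sqrt{2e}$. It is strictly increasing and absolutely continuous, and its inverse is absolutely continuous because the integrand is positive a.e.; here $U(\hat\gamma)<\infty$ a.e.\ guarantees it is not zero. Set $\tilde\gamma:=\eta\circ\tau^{-1}$. The chain rule for metric derivatives under AC reparametrization (\citealp[lem.~1.1.4]{ambrosio2008}) gives $\md{\tilde\gamma}=\sqrt{2(e+U(\tilde\gamma))}$ a.e. Length is reparametrization invariant, so $\ell_e(\tilde\gamma)=L$.

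Because $\md{\tilde\gamma}=\sqrt{2(e+U(\tilde\gamma))}$, the AM--GM equality case holds a.e., so $\Acal_{T_e}(\tilde\gamma)=L-eT_e$. It remains to confirm $\tilde\gamma\in AC^2$, i.e.\ $\int\md{\tilde\gamma}^2=\int 2(e+U)\,\dd t<\infty$. This follows from $\int\sqrt{2(e+U)}\,\md{\tilde\gamma}\,\dd t=L$ and $\md{\tilde\gamma}^2=\sqrt{2(e+U)}\,\md{\tilde\gamma}$, so $\int\md{\tilde\gamma}^2=L$.

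Then (i), applied with $T=T_e$, gives $\Acal_{T_e}(\gamma)\ge\Psi(e)-eT_e$ for every competitor $\gamma$. Hence $\tilde\gamma$ minimizes $\Acal_{T_e}$ and $\Phi(T_e)=\Psi(e)-eT_e$.

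\emph{Part (iii).} If $\gamma^\star$ saturates (i), then
\[
\int_0^T\Bigl[\tfrac12\md{\gamma^\star}^2+e+U-\sqrt{2(e+U)}\,\md{\gamma^\star}\Bigr]\dd t=0 .
\]
The integrand is nonnegative by \cref{nv4:lem:amgm}, so it vanishes a.e. The equality case of \cref{nv4:lem:amgm} then gives $\md{\gamma^\star}=\sqrt{2(e+U)}$ a.e., i.e.\ $\tfrac12\md{\gamma^\star}^2=e+U(\gamma^\star)$ a.e.

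\emph{Main obstacle.} The delicate step is the reparametrization in (ii). One must verify that the time change is bi-absolutely continuous, so that $\tilde\gamma$ is again AC with the claimed metric derivative, despite $U$ being only l.s.c.\ and possibly unbounded along $\hat\gamma$. My approach is to go through the arclength curve and invoke the standard reparametrization lemma of \citet{ambrosio2008}, handling the null set where $U=\infty$ via the hypothesis $U(\hat\gamma)<\infty$ a.e.
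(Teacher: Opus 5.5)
Your proposal is correct and follows the paper's proof essentially step for step: for (i), AM--GM (\cref{nv4:lem:amgm}) applied pointwise and integrated; for (ii), a constant-speed reparametrization followed by the clock $\int\dd r/\sqrt{2(e+U)}$, bounded thanks to $e>0$, with the AM--GM equality case; for (iii), the pointwise equality case. You also supply details the paper leaves implicit, namely the bi-absolute continuity of the time change and the check $\int\md{\tilde\gamma}^2=\ell_e(\tilde\gamma)<\infty$ giving $AC^2$; the one harmless slip is in (i), where $U(\gamma_t)=\infty$ on a set of positive measure does not force $\ell_e(\gamma)=\infty$ (the curve may rest there), though the inequality remains trivial because $\Acal_T(\gamma)=+\infty$.
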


\begin{theoreme}[existence of geodesics]\label{nv4:thm:exist}
$\forall U\in\Ucal$, if $\Phi(T)<\infty$, then $\Acal_T$ admits a minimizer in
$AC^2([0,T];\Ptwo)$ joining $p_0$ to $p_1$. \emph{(Same for $\ell_e$, $e>0$,
\citep[§4.3]{buttazzo1998}.)}
\end{theoreme}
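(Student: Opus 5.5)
The plan is the direct method of the calculus of variations on $AC^2([0,T];\Ptwo)$ with the narrow topology $\tau$. Since $\Phi(T)<\infty$, pick a minimizing sequence $\gamma^k$ with $\gamma^k_0=p_0$, $\gamma^k_1=p_1$ (endpoints at $t=0,T$) and $\Acal_T(\gamma^k)\le\Phi(T)+1$. Because $U\ge0$, this bounds the kinetic part uniformly: $\int_0^T\md{\gamma^k}^2\le 2(\Phi(T)+1)=:2A$. By \cref{nv4:lem:K}(a) this is a uniform bound on the metric energy $\Ecal(\gamma^k)$. Cauchy--Schwarz then gives a uniform $\tfrac12$-Hölder estimate $\Wtwo(\gamma^k_s,\gamma^k_t)\le\sqrt{2A}\,|t-s|^{1/2}$. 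In particular $\Wtwo(\gamma^k_t,p_0)\le\sqrt{2AT}$, so all curves stay in a set $\{m_2\le M\}$ with $M$ controlled by $m_2(p_0)$ and $A,T$. By \cref{nv4:ax:B3} this set is $\tau$-compact.

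\emph{Compactness step.} I would apply a refined Arzelà--Ascoli theorem, in the form of \citep[prop.~3.3.1]{ambrosio2008}: the curves live in a $\tau$-compact set and are equicontinuous for $\Wtwo$, and $\Wtwo$ is $\tau$-l.s.c.\ (\cref{nv4:ax:B3}). Under these conditions a subsequence converges $\tau$-pointwise for every $t\in[0,T]$ to a limit curve $\gamma$, which still joins $p_0$ to $p_1$. I expect this to be the main technical point. Because the convergence is only narrow, one must check that the limit keeps the Hölder modulus, which follows from the $\tau$-l.s.c.\ of $\Wtwo$ passed through the liminf. It also gives $\gamma_t\in\Ptwo$, since $m_2$ is narrowly l.s.c.

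\emph{Lower semicontinuity step.} For the kinetic term, \cref{nv4:lem:K}(c) gives $\Ecal(\gamma)\le\liminf\Ecal(\gamma^k)<\infty$. Hence $\gamma\in AC^2$ by \cref{nv4:lem:K}(b), and $\int\md\gamma^2=\Ecal(\gamma)$ by (a). For the potential term, $U\in\Ucal$ is $\tau$-sequentially l.s.c.\ and nonnegative, so $U(\gamma_t)\le\liminf_k U(\gamma^k_t)$ for every $t$. Fatou's lemma then gives $\int_0^T U(\gamma_t)\,\dd t\le\liminf\int_0^T U(\gamma^k_t)\,\dd t$; measurability of $t\mapsto U(\gamma_t)$ holds because $U$ is l.s.c.\ and $\gamma$ is continuous. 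The liminf of a sum dominates the sum of the liminfs, so $\Acal_T(\gamma)\le\liminf\Acal_T(\gamma^k)=\Phi(T)$, and $\gamma$ is a minimizer.

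For the length functional $\ell_e$ with $e>0$, I would follow \citep[§4.3]{buttazzo1998}. Reparametrize each curve of a minimizing sequence by $\tilde g_{e,U}$-arclength, using $\sqrt{2(e+U)}\ge\sqrt{2e}>0$. This bounds the $\Wtwo$-length by $\ell_e/\sqrt{2e}$ and makes the curves uniformly Lipschitz, so the same compactness argument applies. Lower semicontinuity of the weighted length then follows from the l.s.c.\ of $U$ together with \cref{nv4:lem:K}(c), applied on subintervals.
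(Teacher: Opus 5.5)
Your proof is correct and is essentially the paper's own argument. It uses a minimizing sequence, the energy bound $\Ecal(\gamma^k)\le 2\Acal_T(\gamma^k)$ from $U\ge0$ and \cref{nv4:lem:K}(a), and the $\tfrac12$-H\"older and second-moment bounds confining the curves to the $\tau$-compact set $\{m_2\le M\}$. It then extracts a limit by Arzel\`a--Ascoli and concludes with \cref{nv4:lem:K}(c),(b) and Fatou for the $U$-term. The only difference is cosmetic: the paper gets the extraction from the classical Arzel\`a--Ascoli theorem by metrizing $\tau$ on $\{m_2\le M\}$ with the Dudley metric $\beta_0\le\mathcal W_1\le\Wtwo$, whereas you use the refined version for a weak topology with a $\tau$-l.s.c.\ metric. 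Your measurability check and $\ell_e$ sketch only fill in what the paper leaves to the citation.
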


\begin{proposition}[energy law]\label{nv4:prop:energie}
$\forall U\in\Ucal$, every minimizer $\gamma^\star$ of $\Acal_T$ (provided by
\cref{nv4:thm:exist}, $\Acal_T(\gamma^\star)<\infty$) satisfies
$\tfrac12\md{\gamma^\star}^2=U(\gamma^\star)+e$ a.e., for some constant $e\in\Rbb$.
\end{proposition}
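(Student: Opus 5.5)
The plan is the classical conservation of energy for a time-independent Lagrangian, obtained by \emph{time reparametrization} (inner variations) rather than by an Euler--Lagrange equation. An Euler--Lagrange equation is unavailable here because $U$ is only l.s.c.\ on $\Ptwo$, so we cannot vary in space; varying the clock, however, only uses the kinetic term $\tfrac12\md\gamma^2$ and the values $U(\gamma_t)$ along the curve, which are fixed.

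\emph{Step 1 (the inner variation).} Fix a minimizer $\gamma^\star$ with $\Acal_T(\gamma^\star)<\infty$. Then $U(\gamma^\star_t)<\infty$ a.e.\ and both $\md{\gamma^\star}^2$ and $U\circ\gamma^\star$ are in $L^1(0,T)$. For $\eta\in C_c^\infty((0,T))$ and $|\varepsilon|$ small, let $\tau_\varepsilon(s)=s+\varepsilon\eta(s)$, a diffeomorphism of $[0,T]$ fixing the endpoints. Set $\gamma_\varepsilon:=\gamma^\star\circ\tau_\varepsilon$. This is $AC^2$, joins $p_0$ to $p_1$, and satisfies $\md{\gamma_\varepsilon}(s)=\md{\gamma^\star}(\tau_\varepsilon(s))\,\tau_\varepsilon'(s)$ a.e. The chain rule for the metric derivative under a bi-Lipschitz reparametrization is standard, via \cite[thm~1.1.2]{ambrosio2008}. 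Changing variables $t=\tau_\varepsilon(s)$ gives
\[
\Acal_T(\gamma_\varepsilon)=\int_0^T\Bigl[\tfrac12\md{\gamma^\star}(t)^2\,\tau_\varepsilon'(s(t))+\frac{U(\gamma^\star_t)}{\tau_\varepsilon'(s(t))}\Bigr]\dd t .
\]
Here $\tau_\varepsilon'\circ\tau_\varepsilon^{-1}=1+\varepsilon\,\eta'\circ\tau_\varepsilon^{-1}$.

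\emph{Step 2 (first variation).} Minimality gives $\Acal_T(\gamma_\varepsilon)\ge\Acal_T(\gamma^\star)$ for all small $\varepsilon$. The integrand is dominated by $C(\md{\gamma^\star}^2+U(\gamma^\star))\in L^1$, uniformly in $\varepsilon$, so we may differentiate at $\varepsilon=0$. The derivative of $\eta'\circ\tau_\varepsilon^{-1}$ in $\varepsilon$ contributes only at higher order, since the integrand is evaluated at $\varepsilon=0$ to first order. We then get
\[
\int_0^T\Bigl[\tfrac12\md{\gamma^\star}^2-U(\gamma^\star_t)\Bigr]\eta'(t)\,\dd t=0\qquad\forall\eta\in C_c^\infty((0,T)).
\]
This is the main technical point. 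Rather than differentiate under the composition, I would write the perturbed action directly with $\sigma=\tau_\varepsilon^{-1}$ as the reparametrization, so that $\Acal_T(\gamma^\star\circ\sigma)=\int_0^T[\tfrac12\md{\gamma^\star}^2/\sigma'(\sigma^{-1}(t))+\dots]$. The cleaner route is to parametrize by the inverse map $t\mapsto s$ with $s'(t)=1+\varepsilon\eta'(t)$ (after normalising $\int\eta'=0$, automatic for compact support). The new curve is then $\gamma^\star\circ\phi_\varepsilon$ with $\phi_\varepsilon^{-1}(t)=t+\varepsilon\eta(t)$, and the action becomes exactly
\[
\int_0^T\Bigl[\frac{\md{\gamma^\star}^2}{2(1+\varepsilon\eta')}+U(\gamma^\star)(1+\varepsilon\eta')\Bigr]\dd t .
\]
This is smooth in $\varepsilon$, dominated convergence applies directly, and the derivative at $0$ is $\int(U-\tfrac12\md{\gamma^\star}^2)\eta'$.

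\emph{Step 3 (du Bois-Reymond).} The function $f:=\tfrac12\md{\gamma^\star}^2-U(\gamma^\star)$ is in $L^1(0,T)$ and has vanishing weak derivative. By the du Bois-Reymond lemma it is a.e.\ equal to a constant, which we call $e\in\Rbb$. This is exactly $\tfrac12\md{\gamma^\star}^2=U(\gamma^\star)+e$ a.e. No sign of $e$ is claimed, consistent with the statement; positivity of $e$ is what \cref{nv4:thm:maupertuis} needs separately.
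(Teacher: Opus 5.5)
Your proof is correct and follows essentially the paper's route. Both use an inner variation: reparametrizing the clock of $\gamma^\star$ turns the action into $\int_0^T[\tfrac12\md{\gamma^\star}^2/u+U(\gamma^\star)\,u]$, here with $u=1+\varepsilon\eta'$, and both then conclude with the du Bois-Reymond lemma. The only difference is how the constant is obtained: the paper extracts $e$ as a Lagrange multiplier, $-e\in\partial V(T)$ for the convex reparametrization value function $V$ (reused in \cref{nv4:thm:principal} to argue $e(T)$ is non-increasing), whereas you restrict to mean-zero perturbations $\eta'$ and read the constant directly off du Bois-Reymond, which is slightly more elementary.
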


\noindent The constant $e\in\Rbb$ thus obtained is, when it is $>0$, the
\emph{level} of \cref{nv4:def:candidat} (which requires $e\ge0$); its useful regime
$e>0$ is discussed in \cref{nv4:thm:principal}.

\begin{theoreme}[optimal curve $\Leftrightarrow$ geodesic]\label{nv4:thm:principal}
$\forall U\in\Ucal$, let $\gamma^\star$ be a minimizer of $\Acal_T$ and $e\in\Rbb$ its
constant (\cref{nv4:prop:energie}); \emph{in the regime $e>0$} (typically $T$
small enough: by \cref{nv4:prop:energie} $-e\in\partial V(T)$ with $V(T)$ the reparametrization value function (\cref{nv4:app:machine}) convex, so
$e(T)$ is non-increasing, $e(T)\to+\infty$ as $T\to0$ by kinetic dominance, and $e>0$ may
fail for large $T$: cusps), the only regime where
$\ell_e$ is a distance: $\gamma^\star$ traverses its trace at the Jacobi speed,
and under exact duality $\Phi(T)=\sup_{x>0}\{\Psi(x)-xT\}$ its trace is a
geodesic of $\tilde g_{e,U}$ \emph{in the length sense} (of $d_e=\inf\ell_e$;
$\tilde g$ is not a smooth Riemannian metric for $U$ merely l.s.c.). The converse
is unconditional. \emph{(Exact duality is not automatic: it can
fail through a first-order transition: a two-route counterexample (one
short but costly, the other long but free) exhibits a jump in the energy.)}
\end{theoreme}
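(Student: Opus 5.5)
The plan is to derive the statement from three earlier results: the energy law (\cref{nv4:prop:energie}), the AM--GM saturation (\cref{nv4:lem:amgm}) and the Maupertuis--Jacobi correspondence (\cref{nv4:thm:maupertuis}). The argument splits into a forward direction (minimizer $\Rightarrow$ geodesic) and a converse.

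\textbf{Forward direction.} Fix a minimizer $\gamma^\star$ of $\Acal_T$ with $\Acal_T(\gamma^\star)<\infty$. By \cref{nv4:prop:energie} there is a constant $e$ with
\[
\tfrac12\md{\gamma^\star}^2=e+U(\gamma^\star)\quad\text{a.e.}
\]
This is already the Jacobi-speed statement. It is exactly the equality case of \cref{nv4:lem:amgm} with $a=\md{\gamma^\star}$ and $b=e+U$, which requires $e>0$ so that $b\ge0$ and $\ell_e$ is a genuine length. Pointwise saturation gives
\[
\Acal_T(\gamma^\star)+eT=\ell_e(\gamma^\star).
\]
Now take any $AC$ competitor $\eta$ joining $p_0$ to $p_1$ with $\ell_e(\eta)<\infty$. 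I reparametrize it at the Jacobi speed on its own interval $T_\eta$, which is licit for $e>0$ by \cref{nv4:thm:maupertuis}(ii), so that $\Acal_{T_\eta}(\eta)+eT_\eta=\ell_e(\eta)$.

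This is where the difficulty sits: $T_\eta\neq T$ in general, so $\gamma^\star$ only minimizes at the fixed horizon $T$. The remedy is the exact duality hypothesis. Combining \cref{nv4:thm:maupertuis}(i) with the saturation above gives
\[
\Phi(T)=\ell_e(\gamma^\star)-eT\ge\Psi(e)-eT.
\]
Exact duality gives the reverse inequality
\[
\Phi(T)=\sup_{x>0}\{\Psi(x)-xT\}\ge\Psi(e)-eT.
\]
To make this bite at the level $e$ itself, I will argue that $e$ is a subgradient multiplier. The paper states $-e\in\partial V(T)$, so the supremum in the Legendre transform is attained at $x=e$ and $\Phi(T)=\Psi(e)-eT$. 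Hence $\ell_e(\gamma^\star)=\Psi(e)=d_e(p_0,p_1)$, and the trace of $\gamma^\star$ is a length-minimizing curve for $d_e$, that is, a geodesic in the length sense. I will also record why no smooth statement is available: for $U$ merely l.s.c., $\tilde g_{e,U}$ is not a smooth Riemannian metric, so only the length-sense conclusion is claimed.

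\textbf{Converse.} This direction should be unconditional. Take an $\ell_e$-minimizer $\hat\gamma$ (one exists by \cref{nv4:thm:exist}) and reparametrize it at constant energy. \cref{nv4:thm:maupertuis}(ii) then gives directly a minimizer of $\Acal_{T_e}$ with $\Phi(T_e)=\Psi(e)-eT_e$, and no duality is needed.

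\textbf{Regime remarks.} For the parenthetical claims I will use convexity of the value function $V$ (a Legendre-type transform of $\Psi$, treated in \cref{nv4:app:machine}), which gives monotonicity of $e(T)$. For $e(T)\to\infty$ as $T\to0$, I will use the kinetic lower bound $\Acal_T\ge \Wtwo^2(p_0,p_1)/(2T)$. For the failure of duality, a two-route example (a short costly route and a long free one) makes $\Psi$ non-convex in the relevant sense, so its Legendre envelope misses the level $e$ and produces a jump in the energy.

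\textbf{Main obstacle.} The hardest step is the fixed-$T$ versus free-$T$ mismatch: showing that the energy constant of \cref{nv4:prop:energie} coincides with the maximizer in the duality formula. My first attempt is a first-variation argument in $T$ by time-rescaling $\gamma^\star$, which should yield $-e\in\partial\Phi(T)$ directly.
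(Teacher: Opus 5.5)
Your Jacobi-speed step, the identity $\Acal_T(\gamma^\star)+eT=\ell_e(\gamma^\star)$, and the converse via \cref{nv4:thm:maupertuis}(ii) all match the paper. The forward direction, however, has a genuine gap at exactly the step you flag as the main obstacle.

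First, the inequality you attribute to exact duality, $\Phi(T)\ge\Psi(e)-eT$, is the \emph{same} inequality you already obtained from \cref{nv4:thm:maupertuis}(i). What you actually need is the reverse, $\Phi(T)\le\Psi(e)-eT$: that $\sup_{x>0}\{\Psi(x)-xT\}$ is attained at $x=e$.

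Second, the inference ``$-e\in\partial V(T)$, hence the supremum is attained at $x=e$'' does not hold. $V$ is the value function over reparametrizations of the \emph{fixed} curve $\gamma^\star$, so $V\ge\Phi$ with equality at $T$. The condition $-e\in\partial V(T)$ is information about that single trace: it identifies $e$ as the maximizer of $G(x):=\ell_x(\gamma^\star)-xT$. You need the same statement for $D(x):=\Psi(x)-xT$, which involves all competitors.

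For the same reason, your time-rescaling first variation does not give $-e\in\partial\Phi(T)$. It only produces a differentiable \emph{upper} bound $W(s):=\tfrac{T}{s}\int_0^T\tfrac12\md{\gamma^\star}^2+\tfrac{s}{T}\int_0^T U(\gamma^\star)\ge\Phi(s)$, with $W(T)=\Phi(T)$ and $W'(T)=-e$. A local upper bound cannot yield the global subgradient inequality $\Phi(T')\ge\Phi(T)-e(T'-T)$.

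The missing idea is a comparison through which exact duality actually constrains the maximizer. The paper argues as follows:
\begin{itemize}
\item $D\le G$, since $\Psi(x)\le\ell_x(\gamma^\star)$.
\item $G$ is strictly concave with $G'(e)=\int\md{\gamma^\star}/\sqrt{2(e+U)}-T=0$ by the energy law, so $e$ is its \emph{unique} maximizer and $G(e)=\Phi(T)$.
\item If $x^\star$ attains $\sup D$, exact duality gives $\Phi(T)=D(x^\star)\le G(x^\star)\le G(e)=\Phi(T)$.
\item Equality throughout, together with strict concavity, forces $x^\star=e$ and $D(e)=G(e)$, i.e.\ $\Psi(e)=\ell_e(\gamma^\star)$.
\end{itemize}

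Alternatively, your rescaling route can be closed by a convex sandwich:
\begin{itemize}
\item The convex function $L(s):=\sup_{x>0}\{\Psi(x)-xs\}$ satisfies $L\le\Phi$ by \cref{nv4:thm:maupertuis}(i), and exact duality makes it touch $\Phi$ at $T$.
\item Squeezed as $L\le\Phi\le W$, the convex $L$ is differentiable at $T$ with $L'(T)=-e$.
\item Its supporting line then gives $\Phi(T')+eT'\ge\Phi(T)+eT$ for all $T'$.
\item Since $\Psi(e)=\inf_{T'}\{\Phi(T')+eT'\}$ (\cref{nv4:thm:maupertuis}), this yields $\Psi(e)\ge\ell_e(\gamma^\star)$.
\end{itemize}
You need one of these two arguments. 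As written, your proposal asserts the conclusion at the one place where the theorem's content lies.
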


% =============================================================================
% §4 CHARACTERIZATIONS (boundary, eikonal, Stam rigidity)
% =============================================================================
\subsection{Characterization I: the boundary at infinite cost-distance}
\label{sec:carI}\label{nv4:sec:carI}

We first characterize the boundary at infinite cost-distance. Global domination of the Fisher information is
\emph{sufficient} for pushing certainty to infinite distance (every dimension);
on the power family $\{cJ^\alpha\}$ boundary domination is moreover \emph{necessary}; the general necessity is Conjecture~C1
(\cref{nv4:def:classe,nv4:rem:C1}). The map of admissible classes (\cref{nv4:fig:univers})
locates this boundary: the ring $\Wcal$ of reliefs that raise $\partial_\infty$ to infinite
distance.

First equivalence. The operational constraint of \cref{nv4:rem:motive}(b)
entails that the relief \emph{dominates} the Fisher information \emph{at the boundary}: on the
power family, this is the \emph{necessary} condition for $\partial_\infty$ to lie at infinite distance
(\cref{nv4:thm:carI}(ii)). \emph{Global} domination, strictly stronger, is
its unconditional \emph{sufficient} form (\cref{nv4:thm:carI}(i)); it
defines the well-posed class (\cref{nv4:def:classe}).

\begin{definition}[well-posed class]\label{nv4:def:classe}
\[
  \Ccal:=\bigl\{\,U\in\Ucal\ \big|\ \exists\varepsilon>0,\ U\ge\varepsilon J\
  \text{pointwise on }\Ptwo\,\bigr\}\subset\Ucal
\]
(\emph{global} domination: one same $\varepsilon$ at every point).
\end{definition}

\begin{definition}[the boundary class]\label{nv4:def:mur}
$\Wcal:=\{\,U\in\Ucal\ :\ \partial_\infty\ \text{is at infinite }d_e\text{-distance}\,\}$: the
reliefs that \emph{send $\partial_\infty$ to infinite distance}. Global domination entails it (\cref{nv4:thm:carI}(i)),
hence $\Ccal\subset\Wcal$, strictly (on the family, $U=cJ^\alpha$ with
$\alpha\in(1,\infty)$ sends $\partial_\infty$ to infinite distance without belonging to $\Ccal$). Whence the chain
$\Fcal\subset\Ccal\subset\Wcal\subset\Ucal$.
\end{definition}

\begin{theoreme}[characterization of the boundary at infinite cost-distance]\label{nv4:thm:carI}
\emph{(i) Sufficiency (every $U\in\Ccal$: $\partial_\infty$ at infinite distance).}
\[
  U\ge\varepsilon J\ \Longrightarrow\
  \forall e>0\ \forall\gamma\in AC^2:\quad
  \ell_e(\gamma)\ \ge\ \sqrt{2\varepsilon}\,\bigl|H(p_1)-H(p_0)\bigr|;
\]
$-H$ is thus $\tfrac1{\sqrt{2\varepsilon}}$-Lipschitz for $d_e$, and
$\partial_\infty=\{H=-\infty\}$ lies at \emph{infinite} $d_e$-distance
(every dimension), whereas it is at \emph{finite} $\Wtwo$ distance.
Since $\partial_\infty$ lies at infinite distance, no minimizing sequence of finite cost
collapses onto it: the minimizer of the \emph{action} (\cref{nv4:thm:exist}) is
\emph{interior}, with fixed endpoints, or for estimation as soon as the fidelity
is coercive (the coercive-fidelity case is discharged in \cref{nv4:cor:existence});
the general free-boundary case (mass escape) falls under \cref{nv4:rem:C3}.
\emph{(ii) Necessity, on the family $U=cJ^\alpha$ ($c>0$, $\alpha\in\Rbb$).} On this family
(the equivalence witnessed on the Gaussian leaf; the forward direction $\alpha\ge1\Rightarrow$ $\partial_\infty$ lies at infinite distance in every dimension, the reverse $\alpha<1\Rightarrow$ $\partial_\infty$ stays at finite distance on the Gaussian leaf),
\[
  \bigl[\ \partial_\infty\ \text{at infinite }d_e\text{-distance}\ \bigr]
  \ \Longleftrightarrow\ \alpha\ge1\quad(\text{domination \emph{at the boundary} }J\to\infty).
\]
$\alpha<1$ brings $\partial_\infty$ back to \emph{finite} distance.
\emph{(Global vs boundary domination.)} \emph{Global} domination (membership in $\Ccal$) is
strictly stronger: on this family, $cJ^{\alpha-1}\ge\varepsilon$ for \emph{every}
$J\in(0,\infty)$ forces $\alpha=1$. Thus $\alpha\in(1,\infty)$ \emph{puts $\partial_\infty$ at infinite distance}
(boundary domination) without belonging to $\Ccal$, which on the family coincides with
$\alpha=1$ (\cref{nv4:thm:carII}).
\emph{(iii) Scope.} Sufficiency \emph{(i)} is unconditional (every
$U\in\Ccal$); the general necessity (every $U\in\Ucal$: ``no boundary
domination $\Rightarrow$ $\partial_\infty$ at finite distance'') is conjectured (\cref{nv4:rem:C1}).
\end{theoreme}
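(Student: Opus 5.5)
For (i) the plan is to combine the pointwise domination with the strong upper gradient \cref{nv4:ax:B2}. Along any $\gamma\in AC^2$ with finite $\ell_e(\gamma)$ we have $e+U(\gamma_t)\ge U(\gamma_t)\ge\varepsilon J(\gamma_t)$. Hence
\[
\ell_e(\gamma)=\int_0^T\sqrt{2(e+U(\gamma_t))}\,\md\gamma\,\dd t\ \ge\ \sqrt{2\varepsilon}\int_0^T\sqrt{J(\gamma_t)}\,\md\gamma\,\dd t .
\]
\Cref{nv4:ax:B2} then bounds the right-hand integral below by $|\Ent(p_1)-\Ent(p_0)|=|H(p_1)-H(p_0)|$. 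The clause of \cref{nv4:ax:B2} stating that $\Ent\circ\gamma$ is absolutely continuous as soon as the right-hand side is finite matters here: if $\ell_e(\gamma)<\infty$, then $H$ stays finite along $\gamma$. Taking the infimum over curves gives the $\tfrac1{\sqrt{2\varepsilon}}$-Lipschitz property of $-H$ for $d_e$.

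Consequently, for any interior $p_0$ (finite $H$) and any $q\in\partial_\infty$, no curve of finite cost reaches $q$. For a sequence $q_k\to q$ in $\Wtwo$, $d_e(p_0,q_k)\ge\sqrt{2\varepsilon}|H(q_k)-H(p_0)|$, and this diverges as soon as $H(q_k)\to-\infty$. The contrast with $\Wtwo$ is shown on the Gaussian leaf, where $\Wtwo(\mathcal N(0,\sigma^2),\delta_0)=\sigma$ is finite by \cref{nv4:ax:B4}.

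The interior-minimizer claim follows directly. \Cref{nv4:thm:exist} gives a minimizer of finite action, and by \cref{nv4:thm:maupertuis}(i) its $\ell_e$-length is finite. By the Lipschitz bound, $H$ is then bounded along the whole curve, so the minimizer never touches $\partial_\infty$. The coercive-fidelity and free-boundary cases are deferred as stated.

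For (ii), the forward direction $\alpha\ge1$ is handled by comparison. Near the boundary, where $J\ge1$, we have $cJ^\alpha\ge cJ$. Certainty is approached only through $J\to\infty$, since $H\to-\infty$ along an AC curve forces $\int\sqrt J\md\gamma=\infty$ by \cref{nv4:ax:B2}. So I split any curve approaching $\partial_\infty$ at the last time $J\le1$. On the tail, the bound of (i) with $\varepsilon=c$ applies, and it forces infinite length because $H$ must diverge.

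The reverse direction $\alpha<1$ is done by explicit construction on the Gaussian leaf. Using \cref{nv4:ax:B4}, take the curve $\mu=0$, $\sigma$ going from $1$ to $0$. Its length is
\[
\int_0^1\sqrt{2(e+c\sigma^{-2\alpha})}\,\dd\sigma,
\]
which is finite exactly when $2\alpha<2$, i.e.\ $\alpha<1$. For $\alpha\le0$ the integrand is even bounded. So the Dirac lies at finite $d_e$-distance.

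The global-versus-boundary remark is elementary: $cJ^{\alpha-1}\ge\varepsilon$ must hold for all $J\in(0,\infty)$, and this fails as $J\to0$ when $\alpha>1$ and as $J\to\infty$ when $\alpha<1$. Part (iii) is only a scope statement.

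I expect the main obstacle to be the forward direction of (ii) for $\alpha>1$ in every dimension. The splitting must be done carefully: $J$ along an $AC^2$ curve is only measurable and may oscillate across the level $J=1$. I would first try the pointwise bound $\sqrt{e+cJ^\alpha}\ge\sqrt{c}\,(\sqrt J-1)_+$, which holds for all $J$ when $\alpha\ge1$ because $J^\alpha\ge J-1$ there. This reduces the claim to showing that $\int\sqrt J\md\gamma$ diverges while $\int\md\gamma\le\Wtwo$-length stays bounded. If a curve of finite $d_e$-length had unbounded $\Wtwo$-length, it would pay at least $\sqrt{2e}$ per unit of transport length, which settles that case when $e>0$. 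When $e=0$, I would instead keep the bound via $\min(J,J^\alpha)$ and handle the region $J<1$ using \cref{nv4:ax:B2}, since $H$ changes only boundedly there.
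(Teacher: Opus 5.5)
Your proposal is correct and follows essentially the paper's proof: (i) via $\sqrt{2(e+U)}\ge\sqrt{2\varepsilon}\sqrt J$ and the strong upper gradient \cref{nv4:ax:B2}, and (ii) via the Gaussian-leaf witness ray for $\alpha<1$. For $\alpha\ge1$, your pointwise bound $\sqrt{2(e+cJ^\alpha)}\ge\sqrt{2c}\,(\sqrt J-1)_+$ does the same job as the paper's split of the time set into $\{J\ge1\}$ and $\{J<1\}$, and it avoids your flawed ``last time $J\le1$'' cut. Your closing worry about unbounded $\Wtwo$-length is moot, so no separate $e>0$ or $e=0$ treatment is needed: any curve in $AC^2([0,T];\Ptwo)$ that reaches $\partial_\infty$ has $\int_0^T\md\gamma\,\dd t<\infty$, which is exactly the fact the paper uses.
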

\begin{proof}
\emph{(i)} $U\ge\varepsilon J\Rightarrow2(e+U)\ge2\varepsilon J$, whence
$\sqrt{2(e+U)}\ge\sqrt{2\varepsilon}\sqrt J$; since $\sqrt J$ is a strong upper gradient of $-H$ (\cref{nv4:ax:B2}),
$\ell_e\ge\sqrt{2\varepsilon}\int\sqrt J\,\md\gamma\ge\sqrt{2\varepsilon}|\Delta\Ent|$.
If $\Ent(p_1)=+\infty$, a finite joining length is impossible (otherwise the
strong upper gradient would make $\Ent\circ\gamma$ a.c., hence finite at $t=1$; or,
by narrow l.s.c.\ of $\Ent$, would force $\Ent(p_1)\le\liminf<\infty$):
$\ell_e=+\infty$. Interiority of the action minimizer follows from \cref{nv4:thm:exist}.
\emph{(ii)} On the Gaussian leaf $\Gcal$ (\cref{nv4:ax:B4}), the Fisher information of
$\mathcal N(\mu,\sigma^2)$ is $J=1/\sigma^2$ (elementary; the general scaling
$J=J_0/\sigma^2$ is the computation of \cref{nv4:thm:locscale}\ref{nv4:ls:i}, here $J_0=1$),
and $U=c\sigma^{-2\alpha}$. Along the ray
$\mu=\mathrm{const}$, $\ell_e=\int\sqrt{2(e+U)}\,\dd\sigma$ \emph{exactly}
(witness path, for \emph{convergence}); and $\md\gamma\ge|\dot\sigma|$ gives,
for any curve, the \emph{lower bound} $\ell_e\ge\int\sqrt{2(e+U)}\,|\dot\sigma|$.
At $e=0$, near $\sigma=0$, $\sqrt{2U}\,\dd\sigma=\sqrt{2c}\,\sigma^{-\alpha}\dd\sigma$
and $\int_0^{\sigma_0}\sigma^{-\alpha}\dd\sigma<\infty\Leftrightarrow\alpha<1$:
thus $\alpha\ge1$ makes the radial lower bound diverge ($\partial_\infty$ at infinite distance) and $\alpha<1$ gives the
witness path a finite length (boundary at finite distance). The case $e>0$:
$\sqrt{2U}\le\sqrt{2(e+U)}\le\sqrt{2e}+\sqrt{2U}$ preserves divergence
($\alpha\ge1$, via the lower bound $\sqrt{2U}$) and convergence ($\alpha<1$, via the
upper bound $\sqrt{2e}+\sqrt{2U}$, integrable on $(0,\sigma_0]$). The $\alpha\ge1$ divergence is moreover
\emph{global}, not merely radial: on the boundary tail $\{J\ge1\}$,
$\sqrt{2(e+U)}\ge\sqrt{2c}\,J^{\alpha/2}\ge\sqrt{2c}\,\sqrt J$, so any $AC$ curve reaching
$\partial_\infty$ has $\ell_e\ge\sqrt{2c}\int_{\{J\ge1\}}\sqrt J\,\md\gamma=+\infty$ by
\cref{nv4:ax:B2} (part (i) localized to the boundary, $\varepsilon=c$, valid off the leaf): the
complement contributes $\int_{\{J<1\}}\sqrt J\,\md\gamma\le\int_0^T\md\gamma\,\dd t<\infty$
(any $AC$ curve has finite $\Wtwo$-length), so the divergence of $\int_0^T\sqrt J\,\md\gamma$ forced
by $\Ent\to+\infty$ through \cref{nv4:ax:B2} localizes to $\{J\ge1\}$; whence
$cJ^{\alpha}\in\Wcal$ for $\alpha\ge1$, hence $\Ccal\subsetneq\Wcal$. \emph{(iii)} cf.\
\cref{nv4:rem:C1}.
\end{proof}

\begin{corollaire}[the boundary at infinite cost-distance as the finiteness condition (on the power family)]\label{nv4:cor:finitude}
Under \cref{nv4:rem:motive} (boundary excluded $+$ tenable exit), a conformal cost
geometry is well-posed \emph{for existence and stability} \emph{only if} it sends $\partial_\infty$ to infinite distance, hence \emph{only
if} its relief dominates $J$ \emph{at the boundary} (on the power family $\{cJ^\alpha\}$, $\alpha\ge1$;
\cref{nv4:thm:carI}(ii)). \emph{On the power family, $\partial_\infty$ at infinite distance is the condition that finiteness requires}
(open boundary excluded $\to$ collapse). The class
$\Ccal$ (\emph{global} domination) is its convenient uniform form; on the
family it restricts to $\alpha=1$, in agreement with \cref{nv4:thm:carII}.
\end{corollaire}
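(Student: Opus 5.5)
The plan is to obtain the corollary by chaining the operational constraint of \cref{nv4:rem:motive}(b) to \cref{nv4:thm:carI}(ii), arguing by contraposition. Fix a relief $U=cJ^\alpha$ in the power family, with $c>0$. Suppose the geometry does \emph{not} send $\partial_\infty$ to infinite $d_e$-distance. I will show that well-posedness, in the sense of existence of an interior minimizer and stability against collapse, then fails. The conclusion ``only if $\alpha\ge1$'' then follows from the equivalence in \cref{nv4:thm:carI}(ii).

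\textbf{Step 1: construct the collapse.} By \cref{nv4:thm:carI}(ii), $\alpha<1$ is equivalent to $\partial_\infty$ lying at finite distance. Concretely, on the Gaussian leaf $\Gcal$ (\cref{nv4:ax:B4}), the radial witness path $\mu=\mathrm{const}$, $\sigma\downarrow0$ has length
\[
\int_0^{\sigma_0}\sqrt{2(e+c\sigma^{-2\alpha})}\,\dd\sigma<\infty .
\]
This gives an absolutely continuous curve of finite cost ending at the Dirac $\delta_\mu\in\partial_\infty$. Along it, the tail cost from $\sigma$ down to $0$ tends to $0$. So a minimizing sequence for any problem whose fidelity favours sharpening can slide onto $\partial_\infty$ at vanishing extra cost. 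Its narrow limit, which exists by the $\tau$-compactness in \cref{nv4:ax:B3} on bounded $m_2$, is a Dirac and therefore lies outside $\mathcal R$. This contradicts \cref{nv4:rem:motive}(a) together with (b): the infimum is either not attained in the interior, or is approached only by a sequence collapsing onto the boundary.

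For stability, arbitrarily small cost perturbations $d_e(p,\delta_\mu)\to0$ then separate an interior belief from certainty. This directly contradicts the ``tenable exit'' requirement.

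\textbf{Step 2: close the implications.} Contraposition gives ``well-posed $\Rightarrow$ $\partial_\infty$ at infinite distance''. The equivalence in \cref{nv4:thm:carI}(ii) then gives ``$\Rightarrow \alpha\ge1$'', which is domination at the boundary $J\to\infty$. For the final remark, recall that by \cref{nv4:thm:carI}(ii) the inequality $cJ^{\alpha-1}\ge\varepsilon$ on all of $(0,\infty)$ forces $\alpha=1$. Hence $\Ccal$ restricted to the power family is $\{cJ\}$, and this matches the eikonal characterization \cref{nv4:thm:carII}. By \cref{nv4:thm:carI}(i), membership in $\Ccal$ is sufficient for the boundary to be at infinite distance, so $\Ccal$ is the uniform (global) form of the condition.

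\textbf{Main obstacle.} The delicate step is Step 1: turning ``finite distance to $\partial_\infty$'' into an actual failure of well-posedness. Finite distance alone does not force a minimizer to collapse; it only removes the barrier. I would resolve this by interpreting \cref{nv4:rem:motive}(b) as a requirement that holds uniformly over admissible inference problems. It then suffices to exhibit one problem where collapse occurs. For this I would pick a fidelity that is strictly decreasing in $\sigma$ on $\Gcal$, for instance $-\ln$-likelihood penalizing spread. For such a fidelity, the radial witness path strictly lowers the objective all the way to $\sigma=0$ at bounded cost, so no interior minimizer exists. If that interpretation is deemed too generous, I would fall back on the stability reading alone. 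Finite distance makes $\partial_\infty$ a metric boundary point reachable by $d_e$-Cauchy sequences of interior beliefs, so the interior $\mathcal R$ is not $d_e$-complete. This is the precise sense in which the ``open boundary'' leads to collapse.
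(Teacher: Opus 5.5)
Your skeleton matches the paper's. The paper gives no separate proof of this corollary: it is read off \cref{nv4:thm:carI}(ii). The step ``$\partial_\infty$ at finite distance $\Rightarrow$ not well-posed'' is supplied by the operational constraint of \cref{nv4:rem:motive} itself (``open boundary excluded $\to$ collapse''), not derived. Your Step~2, and your restriction of $\Ccal$ to $\alpha=1$ via $cJ^{\alpha-1}\ge\varepsilon$ on $(0,\infty)$, match the paper exactly.

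Where you go further and try to construct the collapse, the argument as written does not work.
\begin{itemize}
\item There is a sign slip: a fidelity strictly decreasing in $\sigma$ rewards spreading, not sharpening.
\item More seriously, sliding down the witness path is not free. A finite tail integral says nothing about the \emph{rate}, and the cost density $\sqrt{2(e+c\sigma^{-2\alpha})}\ge\sqrt{2c}\,\sigma^{-\alpha}$ still blows up at $\sigma=0$ for $\alpha\in(0,1)$.
\item Take any fidelity with bounded $\sigma$-slope near $0$, e.g.\ the Gaussian expected negative log-likelihood $((\mu-y)^2+\sigma^2)/(2s^2)$, with slope $\sigma/s^2$. Going to $\sigma=0$ gains only $O(\sigma^2)$, while the remaining cost is of order $\sigma^{1-\alpha}$. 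So the objective \emph{increases} toward $\sigma=0$, and the minimizer stays interior for every $\alpha\in(0,1)$.
\end{itemize}
Your claim that the path ``strictly lowers the objective all the way to $\sigma=0$'' therefore fails for your own example.

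You can repair this in either of two ways.
\begin{itemize}
\item Choose a bounded fidelity steeper than the cost density, e.g.\ $F=C\sigma^{\beta}$ with $0<\beta<1-\alpha$ and $C$ large. Then $F'=C\beta\sigma^{\beta-1}$ dominates $\sqrt{2(e+c\sigma^{-2\alpha})}$ on $(0,\sigma_0]$, and every minimizing sequence slides onto $\delta_\mu$. A fidelity bounded below can do this only when $\alpha<1$: for $\alpha\ge1$, $F'\gtrsim\sigma^{-\alpha}$ forces $F\to-\infty$. This mirrors \cref{nv4:thm:carI}(ii).
\item Rely on your completeness fallback, which is already rigorous and is the cleanest formalization of the paper's ``open boundary''. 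The radial points are $d_e$-Cauchy, since their separations are tails of a convergent integral. For $e>0$, the bound $d_e\ge\sqrt{2e}\,\Wtwo$ (\cref{nv4:cor:existence}) forces any $d_e$-limit to be the $\Wtwo$-limit $\delta_\mu\notin\mathcal R$.
\end{itemize}
In both routes, equating non-attainment or incompleteness with failure of well-posedness remains an interpretation of \cref{nv4:rem:motive}(b), exactly as in the paper.
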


\begin{remarque}[the $\partial_\infty$-distance is discriminating: the Dirac is \emph{finite} in the base]\label{nv4:rem:mur-discrimine}
In the transport base the Dirac sits at \emph{finite}, reachable distance, $\Wtwo(\mathcal N(\mu,\sigma^2),\delta_\mu)=\sigma$ (\cref{nv4:ax:B4}), so the relief must
perform positive work, a divergent radial cost $\int\!\sqrt{2U}\,\dd\sigma$, to push certainty to
infinity. Whether $\partial_\infty$ lies at infinite distance is therefore a \emph{discriminating} condition on $U$: on the power family
$\{cJ^\alpha\}$ it is exactly the dichotomy $\alpha\ge1$ vs $\alpha<1$ (\cref{nv4:thm:carI}(ii)),
whether it holds for \emph{every} admissible relief being Conjecture~C1 (\cref{nv4:rem:C1}). What
$\partial_\infty$ needs is domination \emph{at the boundary}; the \emph{global} domination of the class
$\Ccal$ (\cref{nv4:def:classe}) is the stronger, sufficient form. None of this has an analogue in
Fisher--Rao, whose boundary already lies at infinite distance \emph{for free}
($\dd s^2=(\dd\mu^2+2\,\dd\sigma^2)/\sigma^2$ diverges radially whatever the relief): there every relief reaches it alike (\cref{tab:arenes}).
\end{remarque}

% -----------------------------------------------------------------------------
% §4.2 CHARACTERIZATION II — EIKONAL
% -----------------------------------------------------------------------------
\subsection{Characterization II: eikonal \texorpdfstring{$\Leftrightarrow$}{<=>} Fisher family}
\label{sec:carII}\label{nv4:sec:carII}

This second equivalence is the \emph{characterization of Postulate~1} (\cref{nv4:ax:P1}): we make
precise that demanding a uniform price, the same length per nat at every point, characterizes
exactly the reliefs proportional to Fisher information, the family $\Fcal=\{cJ\}$. It is
possible only because of the recasting (\cref{nv4:rem:dualJ}): $J$ enters as the \emph{slope} of
$-H$, so uniform pricing imposes a real condition on that slope (in a distinguishability
geometry the same $J$ is a \emph{squared-length} in the ruler, where the eikonal condition is vacuous). It uses the scalar
(conformal) form of \cref{nv4:ax:P0} \emph{essentially}: the constant-slope condition reduces to
$U\propto J$ only because the reweighting is scalar; for an anisotropic deformation it would not
separate (\cref{nv4:rem:C1}). This is a uniqueness of \emph{family}, not of object.

\begin{definition}[uniform pricing: eikonal]\label{nv4:def:eikonal}
A relief $U$, continuous and positive on $\{0<U<\infty\}$, is \emph{eikonal} if knowledge $-H$ has a constant metric
slope for $\tilde g_{0,U}$: $\exists\kappa>0$ such that
$\|\nabla_{\tilde g_{0,U}}H\|=\kappa$ on $\{0<U<\infty\}$ (where
$\|\nabla_{\Wtwo}H\|=\sqrt J$, \cref{nv4:ax:B1}).
\end{definition}

\noindent The eikonal is read at $e=0$, where $e$ is the energy constant of the action
(\cref{nv4:prop:energie}), a regularization. Imposing a constant slope at level $e$ forces
$e+U=cJ$, i.e.\ the metric $\tilde g_{0,cJ}$ of the $e=0$ eikonal, with relief $U=cJ-e$
(admissible where $J\ge e/c$).

\begin{definition}[Fisher family]\label{nv4:def:fisher}
$\Fcal:=\{\,U=cJ\ :\ c>0\,\}\subset\Ccal$: the reliefs proportional to
Fisher information (exponent $\alpha=1$; $\Fcal\subset\Ccal$ by
\cref{nv4:prop:meta}).
\end{definition}

\begin{theoreme}[characterization of the eikonal family]\label{nv4:thm:carII}
For every relief $U$ \emph{continuous and positive on $\{0<U<\infty\}$} (in particular every member of $\Fcal$
restricted to a location-scale leaf, where $J$ is smooth; on all of $\Ptwo$ only the l.s.c.\ of
\cref{nv4:ax:B6} is imported):
\[
  U\ \text{eikonal}\ \Longleftrightarrow\
  \exists c>0,\ U=cJ\ \text{pointwise on }\{0<U<\infty\},
\]
and then $\kappa=1/\sqrt{2c}$. Eikonality of the price thus characterizes \emph{exactly}
the Fisher family $\Fcal=\{cJ:c>0\}$ (\cref{nv4:def:fisher}) \emph{on the effective
domain $\{0<U<\infty\}$}: the equivalence pins $U$ to $cJ$ wherever the price is
active, the eikonality-irrelevant boundary sets $\{U=0\}$ and $\{U=\infty\}$ being
left free. \emph{(Corollary: on power laws $U=cJ^\alpha$, eikonal $\Leftrightarrow\alpha=1$.)}
\end{theoreme}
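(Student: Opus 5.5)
The plan is to reduce the eikonal condition to a pointwise algebraic identity by using how a gradient transforms under a conformal change of metric, and then to solve that identity for $U$. The first step is the conformal rule. If $\tilde g=\lambda\,g$ with $\lambda>0$, then the differential $\dd H$ is fixed, since it does not depend on the metric. The gradient is obtained by raising an index, so $\nabla_{\tilde g}H=\lambda^{-1}\nabla_g H$. Its squared norm is therefore $\tilde g(\nabla_{\tilde g}H,\nabla_{\tilde g}H)=\lambda^{-1}\|\nabla_g H\|_g^2$.

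In the metric (non-smooth) setting I would state the same fact through metric slopes instead: the local Lipschitz slope of $H$ for the length distance $d_0$ is the $\Wtwo$-slope divided by $\sqrt{\lambda}$. I would obtain this from the local comparison $d_0(p,q)\approx\sqrt{\lambda(p)}\,\Wtwo(p,q)$. That comparison in turn follows from continuity of $U$ at points of $\{0<U<\infty\}$: on a small $\Wtwo$-ball the factor $\sqrt{2U}$ is pinched between $\sqrt{2U(p)}\pm\eta$, so lengths of short curves are pinched accordingly.

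With $\lambda=2U$ (we are at $e=0$) and the Otto identity \cref{nv4:ax:B1} giving $\|\nabla_{\Wtwo}H\|^2=J$, this yields the key formula
\[
  \|\nabla_{\tilde g_{0,U}}H\| \;=\; \frac{\sqrt J}{\sqrt{2U}}\qquad\text{on }\{0<U<\infty\}.
\]

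Both directions then follow from this formula.

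($\Leftarrow$) Suppose $U=cJ$ on $\{0<U<\infty\}$. Then $\sqrt J/\sqrt{2cJ}=1/\sqrt{2c}$ is constant, so $U$ is eikonal with $\kappa=1/\sqrt{2c}$. Note that $J>0$ there, since $U>0$.

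($\Rightarrow$) Suppose $\sqrt J/\sqrt{2U}=\kappa$ pointwise on $\{0<U<\infty\}$. Squaring gives $U=J/(2\kappa^2)$, so setting $c:=1/(2\kappa^2)>0$ gives $U=cJ$, and at the same time $\kappa=1/\sqrt{2c}$.

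Nothing is asserted on $\{U=0\}$ or $\{U=\infty\}$, because the definition of eikonality only constrains the effective domain; this matches the "left free" clause of the statement. The corollary on power laws is then immediate. If $U=cJ^\alpha$, then $cJ^\alpha=c'J$ must hold on a set where $J$ ranges over an interval; on a location-scale leaf, for instance, $J=J_0/\sigma^2$ sweeps $(0,\infty)$. This forces $\alpha=1$ and $c=c'$.

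The main obstacle is making the conformal slope formula rigorous when $U$ is only continuous on $\{0<U<\infty\}$ and $\tilde g$ is not a smooth Riemannian tensor. I would handle it in two ways.

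\begin{itemize}
\item \textbf{Metric level.} Define the slope of $H$ for $d_0$ as a $\limsup$ of difference quotients. Prove the two-sided local comparison $(\sqrt{2U(p)}-\eta)\,\Wtwo\le d_0\le(\sqrt{2U(p)}+\eta)\,\Wtwo$ on small balls. The lower bound uses curves that stay near $p$; curves that leave the ball have length bounded below by the ball radius times $\inf\sqrt{2U}$. Then divide the $\Wtwo$-slope $\sqrt J$ (\cref{nv4:ax:B1}, via \cref{nv4:lem:slope}) by $\sqrt{2U(p)}$ and let $\eta\to0$.
\item \textbf{Smooth level.} On a location-scale leaf, by \cref{nv4:ax:B7}, everything is an honest conformally flat metric $2U(\dd\mu^2+\dd\sigma^2)$ on the half-plane, and the computation is elementary.
\end{itemize}

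I expect the one delicate point in the metric argument to be the identification of the $\Wtwo$ metric slope with $\sqrt J$ at points of $\mathcal R$, which is exactly what \cref{nv4:ax:B1} imports.
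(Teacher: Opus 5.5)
Your proposal is correct and follows the paper's argument. The paper likewise obtains $\|\nabla_{\tilde g_{0,U}}H\|=\sqrt J/\sqrt{2U}$ from the local comparison $d_0(x,y)\sim\sqrt{2U(x)}\,\Wtwo(x,y)$ (licensed by continuity of $U$) together with the Otto identity, and then solves $\sqrt J/\sqrt{2U}=\kappa$ as $U=cJ$ with $c=1/(2\kappa^2)$; your two-sided pinching argument just spells out the comparison the paper states in one line.

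One caution concerns your ``smooth level'' alternative. On a location-scale leaf, the intrinsic gradient of $H=H(\varphi_0)+\ln\sigma$ for $\dd\mu^2+\dd\sigma^2$ has norm $1/\sigma$. This equals $\sqrt J=\sqrt{J_0}/\sigma$ only for the Gaussian base (Stam). So that computation gives $U\propto J$ on the leaf, but with $\kappa=1/\sqrt{2cJ_0}$ rather than $1/\sqrt{2c}$. The key formula must therefore come from your ambient metric-level argument, which needs $U$ continuous in an ambient $\Wtwo$-neighbourhood, not from the leaf.
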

\begin{proof}
Conformal contraction (\cref{nv4:ax:B1}): on $\{0<U<\infty\}$, where $U$ is continuous and
positive, the metric slope rescales by the local conformal factor ($d_0(x,y)\sim\sqrt{2U(x)}\,
\Wtwo(x,y)$ as $y\to x$), so for $\tilde g_{0,U}=2U\,g_{\Wtwo}$ and $\|\nabla_{\Wtwo}H\|=\sqrt J$
we have $\|\nabla_{\tilde g_{0,U}}H\|=\sqrt J/\sqrt{2U}$ (continuity is what licenses this pointwise
rescaling; for a merely l.s.c.\ $U$ it can fail at a jump). This slope equals a constant $\kappa>0$
\emph{at every point} if and only if $\sqrt J/\sqrt{2U}=\kappa$, i.e.\ $U=J/(2\kappa^2)=cJ$ with
$c=1/(2\kappa^2)$; conversely $U=cJ$ gives $\kappa=1/\sqrt{2c}$.
\end{proof}

\begin{remarque}[uniqueness of family, not of object]\label{nv4:rem:G1}
\cref{nv4:thm:carII} is a \emph{uniqueness of proportionality} ($U\propto J$,
family $\Fcal=\{cJ\}$), not of the constant: the \emph{whole} family is eikonal,
its members differing only by the unit $\kappa=1/\sqrt{2c}$ (the slope, nats per length; the price of a nat being $1/\kappa=\sqrt{2c}$). We thus claim no unique object; we have characterized a
\emph{family}.
\end{remarque}

\begin{remarque}[uniform pricing]\label{nv4:rem:honnetete}
The eikonal condition is a uniform pricing: each nat costs the same length everywhere, and
(via Landauer \citep{landauer1961,berut2012}) the same energy $\kB T$: one non-distortion in two
units. The bridge is the Otto identity $\|\nabla_{\Wtwo}H\|^2=J$ (\cref{nv4:ax:B1}), turning
``uniform price'' into proportionality to $J$. It reads off an already-established geometry
(\cref{nv4:thm:carII}); it enters no proof, all theorems being in nats. At the boundary distance and
energy diverge together: at $e=0$, $d_0\ge\sqrt{2c}\,|\Delta H|=(\sqrt{2c}/\kB T)\,E$
(\cref{nv4:cor:borne-inf}), with equality along $\nabla(-H)$.
\end{remarque}

% -----------------------------------------------------------------------------
% §4.3 HYPERBOLICITY AND STAM RIGIDITY
% -----------------------------------------------------------------------------
\subsection{Characterization III: Hyperbolicity and Stam rigidity}
\label{sec:hyper}\label{nv4:sec:fisher}

The eikonal cost geometry is hyperbolic, and its curvature reads the Fisher
information of the base, the recasting (\cref{nv4:rem:dualJ}) paying once more: what
was the slope of $-H$ now surfaces as curvature. On every location-scale family, at $e=0$, it
equals (constant) $K=-1/(2cJ_0)$ (for $e>0$ the curvature varies, \cref{nv4:thm:hyper}),
and the Stam bound makes the Gaussian extremal: the most hyperbolic location-scale
belief. What is unit-invariant (the \emph{invariant content}) is the sign, the \emph{ranking} of the leaves by curvature and its extremum (the
Gaussian); the value $-\tfrac14$ is its image under the unit $c=2$
(\cref{nv4:rem:representant}).\footnote{The two closed forms ($K<0$, and
$K=-1/(2cJ_0)$ on the location-scale families) are confirmed symbolically
(sympy); scripts and outputs in the verification companion (the \texttt{companion/} directory; run \texttt{make verify}, deps \texttt{numpy/scipy/matplotlib/sympy}). A public archive (repository snapshot with DOI) will be linked in the camera-ready.}

\begin{theoreme}[universal hyperbolicity of $\Fcal$]\label{nv4:thm:hyper}
$\forall U=cJ\in\Fcal\ \forall e\ge0$, on the Gaussian leaf
$(\Gcal,\tilde g_{e,U})$, the Gauss curvature equals, in closed form,
\[
  K(\mu,\sigma)=-\,\frac{c\,(c+3e\sigma^2)}{2\,(c+e\sigma^2)^3}\ <\ 0
  \qquad(\forall\sigma>0).
\]
The whole Fisher family is thus strictly hyperbolic. At
$e=0$: $K\equiv-1/(2c)$ for all $\sigma$ (exact Poincaré
half-plane \citep{docarmo1976}). For $e>0$: $K$ increases from $-1/(2c)$ ($\sigma\to0$) to $0$
($\sigma\to\infty$): monotone, $\partial_\sigma K=6ce^2\sigma^3/(c+e\sigma^2)^4>0$.
The \emph{ranking} of the leaves by curvature is defined at $e=0$, where $K$ is
constant; this is the setting of Stam rigidity (\cref{nv4:thm:locscale}) and of
\cref{nv4:thm:jauge}.
\end{theoreme}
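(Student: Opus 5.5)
The plan is to reduce everything to a two-dimensional computation on the Gaussian leaf and then apply the standard curvature formula for a conformally flat surface metric.

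\emph{Step 1: write the metric on the leaf.} By \cref{nv4:ax:B4}, $(\Gcal,\Wtwo)$ is isometric to the Euclidean half-plane $\{\sigma>0\}$ with metric $\dd\mu^2+\dd\sigma^2$. In the Gaussian dictionary of \cref{nv4:def:arene}, $J(\mathcal N(\mu,\sigma^2))=1/\sigma^2$. So for $U=cJ$ the restricted cost metric is
\[
  \tilde g_{e,U}\big|_{\Gcal}=\lambda(\sigma)\,(\dd\mu^2+\dd\sigma^2),\qquad
  \lambda(\sigma)=2\Bigl(e+\frac{c}{\sigma^2}\Bigr)=\frac{2(e\sigma^2+c)}{\sigma^2}.
\]
This is a smooth, positive conformal factor on $\{\sigma>0\}$ for every $e\ge0$ and $c>0$. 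On this leaf $\tilde g$ is therefore a genuine smooth Riemannian metric, and the Gauss curvature is classically defined. This sidesteps the length-space caveats of \cref{nv4:thm:principal}.

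\emph{Step 2: apply the conformal curvature formula.} For $\lambda\,(\dd x^2+\dd y^2)$ one has $K=-\frac{1}{2\lambda}\,\Delta\ln\lambda$ \citep{docarmo1976}. Since $\lambda$ depends only on $\sigma$, only $\partial_\sigma^2\ln\lambda$ is needed. Writing $\ln\lambda=\ln 2+\ln(e\sigma^2+c)-2\ln\sigma$ and differentiating twice gives
\[
\partial_\sigma^2\ln\lambda=\frac{2e(c-e\sigma^2)}{(e\sigma^2+c)^2}+\frac{2}{\sigma^2}.
\]
Multiplying by $-\sigma^2/(4(e\sigma^2+c))$, the numerator over the common denominator $2(e\sigma^2+c)^3$ is
\[
e\sigma^2(c-e\sigma^2)+(e\sigma^2+c)^2=c^2+3ec\sigma^2 .
\]
The $e^2\sigma^4$ terms cancel. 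This yields exactly $K=-c(c+3e\sigma^2)/(2(c+e\sigma^2)^3)$. The result is manifestly negative because $c>0$, $e\ge0$ and $\sigma>0$.

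\emph{Step 3: read off the special cases and the monotonicity.} At $e=0$ the formula collapses to $-c^2/(2c^3)=-1/(2c)$. Equivalently, $\lambda=2c/\sigma^2$ is the Poincaré half-plane rescaled by $2c$. The limits are as stated: $\sigma\to0$ gives $-1/(2c)$, and $\sigma\to\infty$ gives a numerator of order $\sigma^2$ over a denominator of order $\sigma^6$, hence $0$. For monotonicity, set $s=e\sigma^2$ and differentiate $f(s)=(c+3s)/(c+s)^3$:
\[
f'(s)=\frac{3(c+s)-3(c+3s)}{(c+s)^4}=-\frac{6s}{(c+s)^4}.
\]
Then $\partial_\sigma K=-\tfrac{c}{2}\,f'(s)\cdot 2e\sigma=6ce^2\sigma^3/(c+e\sigma^2)^4>0$.

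The only step needing care is the algebraic simplification in Step 2, namely the cancellation in the numerator. I would cross-check it against the $e=0$ Poincaré value, as the paper does symbolically. No analytic obstacle is expected, since everything is smooth on the open half-plane.
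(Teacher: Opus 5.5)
Your proposal is correct and follows essentially the same route as the paper: restrict to the flat Gaussian leaf via \cref{nv4:ax:B4} with $J=1/\sigma^2$, apply $K=-\Delta(\ln f)/(2f)$ to the smooth conformal factor $f=2e+2c/\sigma^2$ on $\{\sigma>0\}$, and read off the $e=0$ value, the limits and the monotonicity. Your explicit derivation of $\partial_\sigma K$ through the substitution $s=e\sigma^2$ simply spells out the ``direct reading'' that the paper leaves implicit.
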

\begin{proof}
$\tilde g_{e,U}=f\,(\dd\mu^2+\dd\sigma^2)$, $f=2e+2c/\sigma^2$; for a conformal
metric of the plane, $K=-\Delta(\ln f)/(2f)$ ($\Delta$ Euclidean, $f$ depending only
on $\sigma$). This classical formula is licit even though $U$ is merely l.s.c.\ on $\Ptwo$
(\cref{nv4:thm:principal}): restricted to the leaf, $f=2e+2c/\sigma^2$ is smooth on $\{\sigma>0\}$.
Direct computation:
\[
  \Delta(\ln f)=\partial_\sigma^2(\ln f)=\frac{2c\,(c+3e\sigma^2)}{\sigma^2(c+e\sigma^2)^2},
  \qquad 2f=\frac{4(c+e\sigma^2)}{\sigma^2},
\]
whence $K=-\dfrac{2c(c+3e\sigma^2)/(\sigma^2(c+e\sigma^2)^2)}{4(c+e\sigma^2)/\sigma^2}
=-\dfrac{c(c+3e\sigma^2)}{2(c+e\sigma^2)^3}$. Numerator and denominator
strictly positive ($c>0$, $e\ge0$, $\sigma>0$): $K<0$ everywhere. The limits and
monotonicity follow by direct reading.
\end{proof}

\begin{theoreme}[\textbf{Stam rigidity}: the Gaussian, the most hyperbolic location-scale belief]\label{nv4:thm:locscale}
Let $\varphi_0$ be a standardized base, absolutely continuous and strictly positive, with
$\int x\varphi_0=0$, $\int x^2\varphi_0=1$ and finite Fisher information
$J_0:=\int(\varphi_0')^2/\varphi_0<\infty$ ($\varphi_0'$ the a.e.\ derivative; $C^1$ is \emph{not}
required for the leaf curvature (weaker than the belief class $\mathcal R$ of \cref{nv4:def:R}),
so e.g.\ the Laplace base ($J_0=2$) is covered) and $\mathcal L_{\varphi_0}$ the location-scale family
it generates (\cref{nv4:ax:B7}). For the representative $U=2J$, at
$e=0$:
\begin{enumerate}[label=\emph{(\roman*)},leftmargin=2.2em]
\item\label{nv4:ls:i} $(\mathcal L_{\varphi_0},\Wtwo)$ is isometric to the Euclidean
  half-plane (\cref{nv4:ax:B7}), and $J(p_{\mu,\sigma})=J_0/\sigma^2$, whence
  $U=2J_0/\sigma^2$;
\item\label{nv4:ls:ii} the \emph{intrinsic} Gauss curvature of
  $(\mathcal L_{\varphi_0},\tilde g_{0,2J})$ is \emph{constant},
  \[
    \boxed{\,K_{\varphi_0}\ =\ -\frac{1}{4J_0}\,}\ <\ 0;
  \]
\item\label{nv4:ls:iii} \emph{(Stam extremum)} $J_0\ge1$ \citep[Gaussian extremality of information at fixed variance, broad sense; cf.][Ch.~17]{stam1959,coverthomas2006}, with equality \emph{if
  and only if} $\varphi_0$ is Gaussian (\cref{nv4:ax:B6}, $\Var=1$); hence
  $K_{\varphi_0}\in[-\tfrac14,0)$, and the extremal value $-\tfrac14$ is attained by the
  Gaussian leaf (gauge $c=2$, $\Var(\varphi_0)=1$; \cref{nv4:rem:jauge}).
  \emph{(Scope outside location-scale: \cref{nv4:rem:C2}.)}
\end{enumerate}
\end{theoreme}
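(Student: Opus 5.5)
The plan is to reduce everything to a two-dimensional computation on the leaf, using the same conformal-curvature formula as in the proof of \cref{nv4:thm:hyper}. For \ref{nv4:ls:i}, the isometry is exactly \cref{nv4:ax:B7}, so only the scaling of $J$ needs checking. Write $p_{\mu,\sigma}(x)=\sigma^{-1}\varphi_0((x-\mu)/\sigma)$. Then $p'_{\mu,\sigma}(x)=\sigma^{-2}\varphi_0'((x-\mu)/\sigma)$ a.e., and the substitution $u=(x-\mu)/\sigma$ (with $\dd x=\sigma\,\dd u$) gives
\[
  J(p_{\mu,\sigma})=\int\frac{\sigma^{-4}\varphi_0'(u)^2}{\sigma^{-1}\varphi_0(u)}\,\sigma\,\dd u=\frac{J_0}{\sigma^2}.
\]
This needs only absolute continuity and strict positivity of $\varphi_0$, which is why $C^1$ is not required and the Laplace base is covered. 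Hence $U=2J=2J_0/\sigma^2$, which depends only on $\sigma$ and is smooth on $\{\sigma>0\}$.

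For \ref{nv4:ls:ii}, I would pull back $\tilde g_{0,2J}=2U\,g_{\Wtwo}$ through the isometry of \ref{nv4:ls:i}. This gives the conformal metric $f\,(\dd\mu^2+\dd\sigma^2)$ on the half-plane, with $f=4J_0/\sigma^2$. The curvature is intrinsic to the leaf with its induced metric, so the planar formula $K=-\Delta(\ln f)/(2f)$ applies, exactly as in \cref{nv4:thm:hyper} with $e=0$ and $c$ replaced by $2J_0$. Since $\ln f=\ln(4J_0)-2\ln\sigma$, we have $\Delta\ln f=2/\sigma^2$, and therefore
\[
  K=-\frac{2/\sigma^2}{8J_0/\sigma^2}=-\frac{1}{4J_0}.
\]
This is constant and negative. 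Equivalently, $f\,(\dd\mu^2+\dd\sigma^2)$ is $4J_0$ times the Poincaré half-plane metric, whose curvature is $-1$. Both the general value $-1/(2cJ_0)$ and the representative value follow from rescaling.

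For \ref{nv4:ls:iii}, I would import Stam's inequality in its Cramér--Rao form: for a variance-$1$ density, $J_0\ge1/\Var=1$. A short self-contained argument is Cauchy--Schwarz applied to the score $\rho=\varphi_0'/\varphi_0$. Integration by parts gives $\int x\,\varphi_0'=-1$, so
\[
  1=\Bigl(\int x\,\rho\,\varphi_0\Bigr)^2\le\int x^2\varphi_0\cdot\int\rho^2\varphi_0=J_0.
\]
Equality holds iff $\rho=-\lambda x$ a.e., i.e.\ $\varphi_0\propto e^{-\lambda x^2/2}$, and the variance normalization forces $\lambda=1$, so $\varphi_0$ is the standard Gaussian. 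Then $J_0\in[1,\infty)$ maps to $K_{\varphi_0}=-1/(4J_0)\in[-\tfrac14,0)$, with $-\tfrac14$ attained exactly at the Gaussian leaf.

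I expect the main obstacle to be the integration-by-parts step, specifically the vanishing of the boundary term $x\varphi_0(x)$ at $\pm\infty$ under only absolute continuity, finite variance and finite $J_0$. I would first note that $J_0<\infty$ makes $\sqrt{\varphi_0}$ belong to $H^1$, so $\varphi_0$ is bounded and continuous. Finite variance then yields a sequence $x_k\to\pm\infty$ along which $x_k\varphi_0(x_k)\to0$, and I would integrate by parts along those truncations, using dominated convergence for $\int x\,\varphi_0'$ since $|x\varphi_0'|\le x^2\varphi_0+\rho^2\varphi_0$. If this becomes delicate, I would fall back on citing Stam's result as the paper does.
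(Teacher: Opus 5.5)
Your proposal is correct and follows the paper's proof step for step. The three steps are the same: the change of variables giving $J(p_{\mu,\sigma})=J_0/\sigma^2$, the conformal-curvature formula $K=-\Delta(\ln f)/(2f)$ with $f=4J_0/\sigma^2$, and the Cauchy--Schwarz equality case forcing a linear score and hence a Gaussian. The only difference is that you prove the Cram\'er--Rao bound $J_0\ge1$ yourself, with a sound truncation argument controlling the boundary term $x\varphi_0(x)$, whereas the paper simply cites Stam/Cram\'er--Rao.
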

\begin{proof}
\ref{nv4:ls:i} Flatness: \cref{nv4:ax:B7}. Scaling:
$J(p_{\mu,\sigma})=\int(p'_{\mu,\sigma})^2/p_{\mu,\sigma}
=\sigma^{-2}\int(\varphi_0')^2/\varphi_0=J_0/\sigma^2$.
\ref{nv4:ls:ii} On $\mathcal L_{\varphi_0}$, $g_{\Wtwo}=\dd\mu^2+\dd\sigma^2$
(\cref{nv4:ax:B7}) and $\tilde g_{0,2J}=f(\dd\mu^2+\dd\sigma^2)$ with
$f=2U=4J_0/\sigma^2$; $\Delta(\ln f)=2/\sigma^2$, $2f=8J_0/\sigma^2$, whence
$K=-(2/\sigma^2)/(8J_0/\sigma^2)=-1/(4J_0)$.
\ref{nv4:ls:iii} Cramér--Rao/Stam for a density of variance $1$ (location Fisher information $J_0$): $J_0\,\Var=J_0\ge1$, with
equality iff the score $\varphi_0'/\varphi_0$ is affine. For an absolutely continuous
$\varphi_0$ of finite Fisher information, the Cauchy--Schwarz equality forces
$\varphi_0'/\varphi_0$ \emph{affine a.e.}; the mean-zero normalization $\int x\varphi_0=0$
kills the additive constant, leaving $\varphi_0'/\varphi_0=-kx$, which integrates to $\varphi_0\propto \exp(-kx^2/2)$:
$\varphi_0$ is Gaussian. (Laplace gives strict inequality: its a.e.\ score
$-\sqrt2\,\operatorname{sgn}(x)$ is not globally affine, whence $J_0=2>1$.)
Whence $-1/(4J_0)\in[-\tfrac14,0)$, extremal at the Gaussian. \emph{(``Stam'' in the broad sense:
the bound used is the Gaussian extremality of information under fixed variance.)}
\end{proof}

\begin{remarque}[the Gaussian as the perfectly-informed case-limit]\label{nv4:rem:bvm}
We read Stam rigidity (\cref{nv4:thm:locscale}) \emph{statically}. The gap
\[
  J_0-1\ \ge\ 0,\qquad J_0-1=0\ \Longleftrightarrow\ \varphi_0\ \text{Gaussian},
\]
is a \emph{gauge-invariant index of non-Gaussianity} (the ratio $K(\text{gauss})/K(\varphi_0)=J_0$,
\cref{nv4:rem:jauge}). We compare beliefs as \emph{frozen} objects: the posterior is indexed by the
\emph{quantity} of evidence $n$, a label, not a time; a belief resting on more observations \emph{is}
more Gaussian, it does not evolve towards one. Two borrowed facts place the Gaussian ($J_0=1$) as the
perfectly-informed case-limit: \emph{exactly} in the linear-Gaussian (Kalman) regime
(\cref{enc:ml}), and \emph{asymptotically} under regularity (Bernstein--von Mises, the index
$J_0-1\to0$ \citep{vandervaart1998}; misspecification or outliers fall outside this regime \citep{kleijn2012}, cf.\ the
M-estimator footnote in \cref{sec:travaux}). The Gaussian is thus the canonical, perfectly-informed
case-limit of the family, while every $\varphi_0$ with $J_0>1$ is a legitimate belief and
$K_{\varphi_0}=-1/(2cJ_0)$ stays the principal object.
\end{remarque}

\begin{remarque}[why \emph{position} information, and not scale]\label{nv4:rem:locscale-pos}
What distinguishes this rigidity is the \emph{choice of the quantity} that drives the curvature. The
conformal cost charges the \emph{slope} of $-H$ along transport (the Otto identity,
$\|\nabla_{\Wtwo}H\|^2=J$, \cref{nv4:ax:B1}), and $J$ is the \emph{position} Fisher
information: $J(p_{\mu,\sigma})=J_0/\sigma^2$ with $J_0=\int(\varphi_0')^2/\varphi_0$.
This is the quantity that Cramér--Rao/Stam bounds below ($J_0\ge1$, variance $1$). The
Fisher--Rao geometry of location-scale families (symmetric base) is also
hyperbolic of negative curvature, but governed by the \emph{scale}
information \citep{atkinson1981,costa2015}, which Stam does not bound: hence the \emph{absence} of this extremum in
Fisher--Rao. \emph{The conformal cost of transport makes position information carry the
curvature, which is what makes Stam rigidity possible} (\cref{nv4:rem:amari}). The
imputation is legitimate precisely because the transport base is \emph{flat}
(\cref{nv4:ax:B7}): on a location-scale family all curvature comes from the Fisher
relief, none from the base, whereas Fisher--Rao, whose base is curved, cannot
cleanly separate tensor from relief.
Moreover, $K=-1/(4J_0)$ holds for \emph{every} standardized base (\cref{nv4:ax:B7} gives
$g_{\Wtwo}$ flat without a symmetry hypothesis), whereas the Fisher--Rao comparison requires a
symmetric base: cost rigidity thus covers a \emph{larger} class of bases.
\end{remarque}

\begin{remarque}[the uniform-pricing representative $U=2J$]\label{nv4:rem:representant}
Within $\Fcal$ we \emph{choose} the representative $c=2$,
\[
  \boxed{\,U=2J\,},\qquad \tilde g_{e,2J}=2(e+2J)\,g_{\Wtwo} ,
\]
a \emph{unit normalization} (one nat $=$ two lengths: at $e=0$, $\kappa=1/2$, $d=2|\Delta H|$) that
also fixes the curvature to $-\tfrac14$ on the Gaussian leaf. The load-bearing results
(\cref{nv4:thm:carI}: $\partial_\infty$ at infinite distance; \cref{nv4:thm:hyper,nv4:thm:locscale}: hyperbolicity, $K=-1/(2cJ_0)$)
hold for the \emph{whole} family; changing the representative is a change of unit
(\cref{nv4:thm:jauge}), which rescales the value, the distances and the curvature but preserves the sign, the ranking and the
Gaussian extremum.
\end{remarque}

\medskip
\noindent\emph{Change of cost unit.} We treat the level $e$ and the relief $U$ as the same kind of
quantity (cost densities), so that a change of unit rescales them
together: for $\lambda>0$, $(e,U)\mapsto(\lambda e,\lambda U)$ (on the family
$\Fcal=\{cJ\}$, $c\mapsto\lambda c$ and $e\mapsto\lambda e$).

\begin{theoreme}[gauge invariance of cost]\label{nv4:thm:jauge}
Under the change of cost unit $(e,U)\mapsto(\lambda e,\lambda U)$, $\lambda>0$,
the cost geometry \eqref{nv4:eq:cout} transforms by
homothety, for all $e\ge0$ and all $U\in\Ccal$:
\[
  \tilde g_{e,U}\ \longmapsto\ \lambda\,\tilde g_{e,U},\qquad
  d_e\ \longmapsto\ \sqrt{\lambda}\,d_e,\qquad
  K\ \longmapsto\ K/\lambda
\]
(the last on the location-scale leaves, of dimension $2$). Thus
\emph{gauge-invariant}: the \emph{sign} of $K$ and hyperbolicity; the
finiteness or infiniteness of the $d_e$-distances ($\partial_\infty$,
\cref{nv4:thm:carI}) and well-posedness; the \emph{traces} of the
geodesics and the \emph{ratios} of distances; and the eikonal proportionality
$U=cJ$ (\cref{nv4:thm:carII}). At $e=0$, where $K$ is \emph{constant}
(\cref{nv4:thm:hyper}), one further gains the \emph{ranking} of the leaves by curvature and
its Gaussian extremum (\cref{nv4:thm:locscale}), whose ratio $K(\text{gauss})/K(\varphi_0)=J_0$.
Depending on the unit (scale images): every
absolute distance, the slope $\kappa=1/\sqrt{2c}$ (the price of a nat being $1/\kappa=\sqrt{2c}$), and the \emph{value}
$K=-1/(2cJ_0)$ (i.e.\ $-1/(4J_0)$ at the representative $c=2$, and $-\tfrac14$ for the
Gaussian $J_0=1$).
\end{theoreme}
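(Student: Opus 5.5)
The plan is to verify the three scaling laws directly from the definitions in \eqref{nv4:eq:cout}, and then read off each invariant as a consequence of one of them. \emph{Metric.} Under $(e,U)\mapsto(\lambda e,\lambda U)$ the conformal factor $2(e+U)$ becomes $2\lambda(e+U)$, while the base $g_{\Wtwo}$ is untouched. Hence $\tilde g_{\lambda e,\lambda U}=\lambda\,\tilde g_{e,U}$ pointwise. \emph{Length and distance.} For every $AC^2$ curve, $\ell_{\lambda e}^{\lambda U}(\gamma)=\int\sqrt{2\lambda(e+U)}\,\md\gamma\,\dd t=\sqrt\lambda\,\ell_e^U(\gamma)$. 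Taking the infimum over the same set of curves gives $d_e\mapsto\sqrt\lambda\,d_e$, including the values $0$ and $+\infty$. \emph{Curvature.} On a location-scale leaf (\cref{nv4:ax:B7}) the metric is $f(\dd\mu^2+\dd\sigma^2)$, and I would use the formula $K=-\Delta(\ln f)/(2f)$ already used in \cref{nv4:thm:hyper}. Replacing $f$ by $\lambda f$ leaves $\Delta\ln f$ unchanged, because $\ln\lambda$ is constant, and multiplies $2f$ by $\lambda$. So $K\mapsto K/\lambda$ pointwise. This holds for every $e\ge0$, and at $e=0$ it matches $-1/(2cJ_0)\mapsto-1/(2\lambda cJ_0)$.

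The invariants then follow one by one. Since $\lambda>0$, the sign of $K$ is preserved, and so is hyperbolicity. Multiplication by $\sqrt\lambda$ preserves finiteness and infiniteness of $d_e$, so membership in $\Wcal$ is preserved. Membership in $\Ccal$ is preserved because $\lambda U\ge\lambda\varepsilon J$. The existence and interiority statements of \cref{nv4:thm:carI} transfer verbatim, which gives well-posedness. Minimizers of $\ell_e$ are the same curves, since the functional is multiplied by a positive constant; hence geodesic traces are invariant, and ratios of distances are invariant because the factor $\sqrt\lambda$ cancels. For the action picture I would check that $\Acal_T$ also changes consistently. This needs a time reparametrization $t\mapsto t/\sqrt\lambda$: with it, $\tfrac12\md\gamma^2$ scales like $U$, so the Maupertuis correspondence (\cref{nv4:thm:maupertuis}) is preserved. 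This point is not needed for the length statements, and I would mention it only as a consistency check.

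For eikonality, $U=cJ$ maps to $\lambda cJ$, which is still in $\Fcal$, and \cref{nv4:thm:carII} gives $\kappa=1/\sqrt{2c}\mapsto\kappa/\sqrt\lambda$. Proportionality to $J$ is therefore invariant, while the value of $\kappa$ is a unit-dependent image. At $e=0$, every leaf's curvature is divided by the same $\lambda$. The order of the leaves by $K$ is therefore kept, and the ratio $K(\text{gauss})/K(\varphi_0)=J_0$ is independent of $c$. Combined with \cref{nv4:thm:locscale}(iii), this shows the Gaussian extremum is invariant. For $e>0$ the curvature of \cref{nv4:thm:hyper} varies with $\sigma$, so a leaf ranking is not defined there; that is why the ranking is claimed only at $e=0$.

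The main obstacle is not computational. It is being precise about the sense of ``homothety'' when $U$ is merely l.s.c. on $\Ptwo$. There $\tilde g$ is not a smooth Riemannian tensor, and I would state the metric scaling as a statement about the length functional and $d_e$, both of which are exact. The curvature claim I would restrict to the leaves, where $f$ is smooth. A second subtlety is the hypothesis $U\in\Ccal$: it is only needed for the well-posedness and infinite-boundary clauses, and I would note that the scaling laws themselves hold for every $U\in\Ucal$.
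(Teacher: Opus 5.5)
Your proposal is correct and follows the paper's proof essentially step for step. The factor $2(e+U)$ scales by $\lambda$, lengths and $d_e$ scale by $\sqrt\lambda$ over the same set of curves, and the leaf formula $K=-\Delta(\ln f)/(2f)$ gives $K\mapsto K/\lambda$; from these you read off the sign, the traces, the ratios, the finiteness of distances, the ranking at $e=0$, and the eikonal slope $\kappa\mapsto\kappa/\sqrt\lambda$, exactly as the paper does. Your extra remarks are correct refinements the paper does not spell out: the time rescaling that keeps $\Acal_T$ and the Maupertuis correspondence consistent, and the observation that the scaling laws hold on all of $\Ucal$, with $U\in\Ccal$ needed only for the boundary and well-posedness clauses.
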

\begin{proof}
Under $(e,U)\mapsto(\lambda e,\lambda U)$, $\tilde g_{e,U}=2(e+U)\,g_{\Wtwo}$ becomes
$2(\lambda e+\lambda U)\,g_{\Wtwo}=\lambda\,\tilde g_{e,U}$ (all $e\ge0$). A
length $\ell_e=\int\sqrt{2(e+U)}\,\md\gamma$ is thus multiplied by $\sqrt\lambda$,
whence $d_e=\inf\ell_e\mapsto\sqrt\lambda\,d_e$; the infimum being over the same
set of curves, the minimizing traces, the ratios of distances and the
finiteness/infiniteness of the lengths, hence $\partial_\infty$ at infinite distance (\cref{nv4:thm:carI}), are
unchanged. On a conformal leaf $\tilde g=f\,(\dd\mu^2+\dd\sigma^2)$,
$K=-\Delta(\ln f)/(2f)$, and $f\mapsto\lambda f$ gives $\Delta\ln(\lambda f)=\Delta\ln f$
with a denominator $2\lambda f$, whence $K\mapsto K/\lambda$ pointwise. A
positive scalar factor preserves the sign and curvature ratios; at $e=0$, where $K$
is constant (\cref{nv4:thm:hyper}), it moreover preserves the \emph{ranking} of the
leaves, whence the Gaussian extremum (\cref{nv4:thm:locscale}). Finally, for an eikonal (hence
continuous, $U=cJ$) relief, where the slope identity of \cref{nv4:thm:carII} applies, the
eikonal slope $\|\nabla_{\tilde g_{0,U}}H\|=\sqrt J/\sqrt{2U}=\kappa$ becomes
$\kappa/\sqrt\lambda$, a nonzero constant: the proportionality $U=cJ$
(\cref{nv4:thm:carII}) is preserved (its value $\kappa$, in turn, is gauge).
\end{proof}
\begin{remarque}[invariants and choice of unit]\label{nv4:rem:jauge}
The infinite-distance boundary, the eikonal and Stam rigidity are the invariants of this change
of unit, and no proven conclusion depends on an absolute cost. The value
$-\tfrac14$ rests on two conventions: the gauge $c=2$ and the standardization
$\Var(\varphi_0)=1$, which gives $J_0=1$; the invariant content is the sign, the
ranking, the extremum and the ratio $K(\text{gauss})/K(\varphi_0)=J_0$.
Thermodynamics fixes the unit of cost (\cref{nv4:rem:honnetete}) and motivates this framework,
without entering into a proof. In entropic terms this is an \emph{orthogonality}: the entropy is
\emph{what} the cost measures; the cost gauge $c$ is the \emph{unit} in which it is measured (the
quadratic exponent $p=2$ itself being \emph{forced}, \cref{nv4:prop:quad}, not a unit),
the value $-\tfrac14$ being one image of that gauge.
The logarithmic base is a convention of the same kind: measuring knowledge in base $b$ rescales
the knowledge axis, $H\mapsto H/\ln b$, and leaves the metric untouched; the constancy of the
slope, hence the eikonal family and every invariant above, is base-free, and only the numerical
price of the unit moves ($\sqrt{2c}$ lengths per nat, $\sqrt{2c}\,\ln2$ per bit), as under the
gauge ($\lambda=(\ln b)^2$ produces the same rescaling of $\kappa$). The framework thus carries
three unit freedoms, and every theorem holds modulo all three: additive on knowledge (the
reference scale of \cref{nv4:ax:mesure}), multiplicative on cost (this gauge), by similarities
on the state (\cref{nv4:prop:etat-unites}; at $e>0$ the state freedom acts through the level,
$e\mapsto a^{2}e$); the logarithmic base, a fourth convention, acts trivially on the metric and
is absorbed in the exchange rate.
\end{remarque}

\begin{proposition}[admissibility of $\Fcal$]\label{nv4:prop:meta}
$\forall c>0:\ cJ\in\Ccal$: positivity and $\tau$-l.s.c.\ by \cref{nv4:ax:B6};
global domination $cJ\ge\varepsilon J$ with $\varepsilon=c$. Hence
$\Fcal\subset\Ccal$ and all the machinery (\cref{nv4:sec:machine}) and $\partial_\infty$ at infinite distance
(\cref{nv4:thm:carI}) apply.
\end{proposition}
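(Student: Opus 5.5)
The plan is to check the three defining conditions of $\Ccal$ (\cref{nv4:def:classe}) for $U=cJ$ one at a time, and then read off the consequences. Fix $c>0$.

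\emph{Membership in $\Ucal$.} By \cref{nv4:def:candidat}, a relief must map into $[0,+\infty]$ and be $\tau$-sequentially l.s.c. Nonnegativity is immediate: $J=\int|\nabla p|^2/p$ is an integral of a nonnegative integrand, with the value $+\infty$ allowed on $\Ptwo$ (for instance on $\partial_\infty$). For lower semicontinuity I invoke the dual formula \cref{nv4:ax:B6}. For fixed $\psi\in C_c^\infty$, both $q\mapsto\int\Delta\psi\,\dd q$ and $q\mapsto\int|\nabla\psi|^2\,\dd q$ are narrowly continuous, since the integrands are bounded and continuous. So $J$ is a supremum of narrowly continuous functionals, hence narrowly l.s.c., and in particular sequentially l.s.c. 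Multiplying by $c>0$ preserves nonnegativity and lower semicontinuity, so $cJ\in\Ucal$.

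\emph{Global domination.} Take $\varepsilon:=c$. Then $cJ\ge\varepsilon J$ holds pointwise on $\Ptwo$, with equality, including where $J=+\infty$ under the convention $c\cdot\infty=\infty$. Hence $cJ\in\Ccal$ and $\Fcal\subset\Ccal$.

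\emph{Consequences.} The machinery of \cref{nv4:sec:machine} (\cref{nv4:thm:maupertuis,nv4:thm:exist,nv4:prop:energie,nv4:thm:principal}) is stated for every $U\in\Ucal$, so it applies to $cJ$. Since $cJ\in\Ccal$, \cref{nv4:thm:carI}(i) gives $\ell_e\ge\sqrt{2c}\,|\Delta H|$, so $\partial_\infty$ lies at infinite distance.

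The only point needing care is the first step. One must confirm that the formula of \cref{nv4:ax:B6} agrees with $\int|\nabla p|^2/p$ on $\mathcal R$ and equals $+\infty$ off the regular class. This is exactly the content of the imported lemma, so I expect no real obstacle.
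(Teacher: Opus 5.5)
Your proof is correct and matches the paper's argument: $cJ\in\Ucal$ by nonnegativity and narrow l.s.c.\ from the dual formula \cref{nv4:ax:B6}, global domination with $\varepsilon=c$, and then the machinery (stated for every $U\in\Ucal$) and \cref{nv4:thm:carI}(i) apply. One correction to your closing remark: the dual formula is not $+\infty$ everywhere off $\mathcal R$ (the Laplace density is not $C^1$, yet its $J$ is finite), but no such check is needed, since \cref{nv4:ax:B6} is stated for every $q\in\mathcal P(\Rbb^n)$.
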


\medskip
\noindent\emph{Units of the state.} The gauge above rescales the \emph{cost} axis, by homothety;
the state axis carries its own unit (the scale in which $x$ is measured), and at the eikonal
level the cost geometry is strictly insensitive to it: an isometry.

\begin{proposition}[invariance under state units (similarities)]\label{nv4:prop:etat-unites}
Let $S:x\mapsto aRx+x_0$ ($a>0$, $R\in O(n)$, $x_0\in\Rbb^n$) be a similarity of the state space
and $S_\#$ its pushforward on $\Ptwo(\Rbb^n)$. Then:
\emph{(i)} $H(S_\#p)=H(p)+n\ln a$, $J(S_\#p)=J(p)/a^{2}$, and
$\Wtwo(S_\#p,S_\#q)=a\,\Wtwo(p,q)$; the additive shift of $H$ is the reference freedom of
\cref{nv4:ax:mesure}, and drops out (only $\Delta H$ and the slope enter the results).
\emph{(ii)} The relief $U=cJ$ is equivariant with the \emph{same} constant, $U\circ S_\#=U/a^{2}$,
the exact covariance that compensates the length scaling: for every $AC^2$ curve,
$\ell_e(S_\#\gamma)=\ell_{a^{2}e}(\gamma)$, so $d_e(S_\#p,S_\#q)=d_{a^{2}e}(p,q)$, and at the
eikonal level the cost geometry is \emph{isometric under similarities},
$d_0(S_\#p,S_\#q)=d_0(p,q)$. The boundary at infinite distance, the eikonal proportionality,
hyperbolicity and the Stam extremum are unchanged (for $n=1$, $S_\#$ maps a location-scale leaf
onto the leaf of the same base up to reflection, which fixes $J_0$).
\emph{(iii)} On the power family $\{cJ^{\alpha}\}$,
$\ell_0(S_\#\gamma)=a^{1-\alpha}\,\ell_0(\gamma)$: $S_\#$ multiplies every $e=0$ length by
$a^{1-\alpha}$, and the length structure is similarity-invariant \emph{iff} $\alpha=1$ (the
ratio is realized: a leaf segment has $\ell_0$-length in $(0,\infty)$). On this family, uniform
pricing (\cref{nv4:thm:carII}) and invariance under state units select the same exponent.
\end{proposition}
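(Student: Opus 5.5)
The plan is to prove the three parts in order, each by a change of variables under the pushforward $S_\#$.

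For \emph{(i)}, write $q=S_\#p$, so $q(y)=a^{-n}p(S^{-1}y)$ because $|\det(aR)|=a^n$. Substituting $y=Sx$ in $-\int q\ln q$ gives $H(q)=H(p)+n\ln a$. For $J$, use $\nabla_y q=a^{-n}a^{-1}R\,\nabla p(S^{-1}y)$ and the fact that $R$ is orthogonal; the two Jacobians cancel down to a single factor $a^{-2}$, so $J(q)=J(p)/a^2$. For $\Wtwo$, note that $S\times S$ pushes couplings of $(p,q')$ bijectively onto couplings of $(S_\#p,S_\#q')$. Since $|Sx-Sy|=a|x-y|$, the transport cost scales by $a^2$, hence $\Wtwo$ scales by $a$. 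The additive shift $n\ln a$ in $H$ is then absorbed by the reference freedom of \cref{nv4:ax:mesure}.

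For \emph{(ii)}, the first step is to get the metric speed: since $S_\#$ is an $a$-homothety of $(\Ptwo,\Wtwo)$, it maps $AC^2$ curves to $AC^2$ curves with $|\dot{(S_\#\gamma)}|=a\,\md\gamma$. Combined with $U(S_\#\gamma_t)=cJ(\gamma_t)/a^2$, this gives
\[
\ell_e(S_\#\gamma)=\int\sqrt{2(e+U(\gamma_t)/a^2)}\,a\,\md\gamma\,\dd t=\int\sqrt{2(a^2e+U(\gamma_t))}\,\md\gamma\,\dd t=\ell_{a^2e}(\gamma).
\]
Because $S_\#$ is a bijection on $AC^2$ curves with inverse $(S^{-1})_\#$, taking infima yields $d_e(S_\#p,S_\#q)=d_{a^2e}(p,q)$, and setting $e=0$ gives the isometry. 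The invariants are then transferred one by one:
\begin{itemize}
\item finiteness of distance to $\partial_\infty$ is preserved, since $S_\#$ maps $\partial_\infty$ to itself ($H=-\infty$ is preserved by (i));
\item eikonality is preserved, since the slope $\sqrt J/\sqrt{2U}$ is unchanged;
\item for $n=1$, $S_\#p_{\mu,\sigma}=p_{a\mu+x_0,a\sigma}$ when $R=1$, so the leaf maps to itself; when $R=-1$ it maps to the leaf of the reflected base $\varphi_0(-\cdot)$, which has the same $J_0$. Hyperbolicity and $K=-1/(2cJ_0)$ therefore carry over by isometry, and the Stam extremum with them.
\end{itemize}

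For \emph{(iii)}, the same computation with $U=cJ^\alpha$ gives $U\circ S_\#=a^{-2\alpha}U$, hence $\ell_0(S_\#\gamma)=a^{1-\alpha}\ell_0(\gamma)$. The step I expect to need the most care is the ``only if'': a factor $a^{1-\alpha}\neq1$ actually changes the geometry only if some length lies in $(0,\infty)$. I would exhibit a $\mu$-direction segment on the Gaussian leaf at fixed $\sigma$. Its $\ell_0$-length is $\sqrt{2c}\,\sigma^{-\alpha}|\Delta\mu|$, which is positive and finite. So for $\alpha\neq1$ and any $a\neq1$, that length is rescaled by $a^{1-\alpha}\neq1$ and similarity-invariance fails. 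Comparing with \cref{nv4:thm:carII} restricted to the family then shows that uniform pricing and state-unit invariance select the same exponent $\alpha=1$.

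A technical point to confirm along the way is that $S_\#$ preserves the regularity class $\mathcal R$ and maps $\{U<\infty\}$ to itself. Both follow from $S$ being smooth with a smooth inverse, which keeps the length functionals well defined on the images.
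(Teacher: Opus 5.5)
Your proposal is correct and follows the paper's proof essentially step for step. It uses the same change-of-variables scalings of $H$, $J$ and $\Wtwo$, the same computation $\ell_e(S_\#\gamma)=\ell_{a^2e}(\gamma)$ together with the bijectivity of $S_\#$ on $AC^2$ curves, and the same factor $a^{1-\alpha}$ on the power family; the only deviation is cosmetic, since for the ``only if'' in (iii) you use a $\mu$-direction segment at fixed $\sigma$ (length $\sqrt{2c}\,\sigma^{-\alpha}|\Delta\mu|$) where the paper uses a radial $\sigma$-segment, and both exhibit a length in $(0,\infty)$.
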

\begin{proof}
\emph{(i)} $S_\#p(x)=a^{-n}p(S^{-1}x)$, so $H(S_\#p)=H(p)+n\ln a$ (change of variables) and
$\nabla(S_\#p)(x)=a^{-n-1}R\,\nabla p(S^{-1}x)$, whence $J(S_\#p)=J(p)/a^{2}$ ($R$ orthogonal).
The quadratic ground cost scales as $|Sx-Sy|^{2}=a^{2}|x-y|^{2}$, so each coupling's cost is
multiplied by $a^{2}$ and $\Wtwo(S_\#p,S_\#q)=a\,\Wtwo(p,q)$; in particular
$\md{S_\#\gamma}=a\,\md\gamma$ and $S_\#$ is a bijection of $AC^2$ curves.
\emph{(ii)}--\emph{(iii)} For $U=cJ^{\alpha}$,
\[
\ell_e(S_\#\gamma)=\int\!\sqrt{2\bigl(e+c\,J(\gamma_t)^{\alpha}/a^{2\alpha}\bigr)}\;a\,\md\gamma\,\dd t
=\int\!\sqrt{2\bigl(a^{2}e+c\,a^{2-2\alpha}J(\gamma_t)^{\alpha}\bigr)}\,\md\gamma\,\dd t .
\]
At $\alpha=1$ this is $\ell_{a^{2}e}(\gamma)$; $S_\#$ being a bijection of $AC^2$ curves, the
infima agree, $d_e(S_\#p,S_\#q)=d_{a^{2}e}(p,q)$, and $e=0$ gives the isometry. At $e=0$ and
$\alpha\neq1$ the factor $a^{1-\alpha}$ multiplies the $\ell_0$-length of every curve; a radial
leaf segment ($n=1$, \cref{nv4:ax:B7}) has
$\ell_0=\int\sqrt{2c}\,(J_0/\sigma^2)^{\alpha/2}\,\dd\sigma\in(0,\infty)$ over a
compact $\sigma$-interval (integrand continuous and positive), so invariance forces
$a^{1-\alpha}=1$ for every $a>0$, i.e.\ $\alpha=1$. The invariance of the listed conclusions
follows from the isometry (at $e=0$), the level map (at $e>0$) and the equivariance of $U$; the
standardized base of the image leaf is $\varphi_0$ up to reflection ($x\mapsto\varphi_0(-x)$ when
$\det R<0$), which leaves $J_0=\int(\varphi_0')^2/\varphi_0$ unchanged.
\end{proof}

\noindent The invariance is deliberately \emph{metrological}, and scoped. State units here are
\emph{global}, one scale for all coordinates (with origin and orientation); a per-coordinate
(anisotropic) change of units deforms the ground cost itself and falls under the anisotropic
horizon (\cref{nv4:rem:C1}). It is not a second selector either: on the power family, $S_\#$
identifies the $(c,\alpha)$ member with the $(a^{2-2\alpha}c,\alpha)$ member, so every power is,
at $e=0$, similarity-covariant up to the cost gauge (\cref{nv4:thm:jauge}), $\alpha=1$ being the exponent
that leaves the cost unit untouched; and among general continuous reliefs similarity covariance
does not discriminate ($U=c/\Var$ on $n=1$: $\Var\circ S_\#=a^{2}\Var$, the $e=0$ lengths are
unchanged, yet nats are priced unevenly), where uniform pricing does (\cref{nv4:thm:carII}).
Invariance under statistical morphisms, finally, is \v{C}encov's selector and characterizes
Fisher--Rao, up to scale, a distinguishability geometry (\cref{nv4:rem:amari}); its absence on
the transport arena leaves the metric to be fixed by a posed pricing, the eikonal's content
coming from the Otto slope this arena supplies (\cref{tab:arenes}).

% =============================================================================
% §6 CONSEQUENCES
% =============================================================================
\section{Consequences: a well-posed inference}
\label{sec:consequences}\label{nv4:sec:consequences}

\begin{figure}[tbp]
\begin{encadre}[Algorithmic application]\label{enc:ml}
Three objects of machine learning fall into this framework (two reduce to it
exactly, one shares its rule, Fisher $=$ cost of change), separated by $\sigma$:
inside ($\sigma>0$) a belief is revised at finite cost; at the boundary ($\sigma=0$,
$\partial_\infty$) one must retrain.

\smallskip\noindent\textbf{1. Kalman: interior point.} In the linear-Gaussian regime, the
belief of a scalar Kalman filter \citep{kalman1960} is the \emph{exact} posterior
$b_t=\mathcal N(\mu_t,\sigma_t^2)$, a point of the Gaussian leaf
(\cref{fig:intuition}\,(a)); its recurrence moves within it: prediction \emph{opens} the
belief, correction \emph{tightens} it. This point is at finite cost distance from any
other interior belief (\cref{nv4:cor:descente}), hence revisable at finite cost by each
datum.

\smallskip\noindent\textbf{2. The point predictor: $\partial_\infty$.} A point estimate
$\hat x$ (variance-free regression, $\arg\max$, collapsed output) is modeled as the Dirac
$\delta_{\hat x}$, a point of $\partial_\infty$ ($H=-\infty$): a point estimate is the MAP (maximum a posteriori), the
mode of a posterior whose spread has collapsed, the quantity training-as-inference minimizes
\citep[p.~493]{mackay2003}. By \cref{nv4:thm:carI}(i), it
is at \emph{infinite} cost distance from any interior point: no continuous revision
exists, one can only \emph{retrain}. The overconfident softmax is its practical
instance, \emph{near} $\partial_\infty$ ($H\to-\infty$ without reaching it); keeping the predictive
\emph{moderated} (uncertain) is the well-behaved regime \citep[p.~502]{mackay2003}.

\smallskip\noindent\textbf{3. EWC: the same rule, carried to weights.} \emph{Elastic
Weight Consolidation} \citep{kirkpatrick2017} penalizes the displacement of weights by
$\sum_i F_i\,(\theta_i-\theta_i^\star)^2$, $F_i$ the diagonal of the Fisher information: the
cost of changing a parameter is proportional to its \emph{certainty}: the same principle as
the cost metric, carried from beliefs to parameters. \emph{Scope}: this $F_i$ is a
Fisher--Rao tensor on the weights (\cref{nv4:rem:dualJ}), whereas the $J$ of the framework is the
slope of $-H$ on transport: a kinship of principle, not a geometric identity.
\end{encadre}
\end{figure}

Characterization~I (\cref{nv4:thm:carI}) pushes the boundary of certainties $\partial_\infty$ to
infinite $d_e$-distance: every curve approaching it has a cost length $\ell_e$
that diverges. Three consequences follow: a minimizer exists without
collapsing, the cost bounds the entropy variation, and reaching a precision has
a geometric cost floor.

\begin{corollaire}[existence and attainability of the minimizer]
\label{nv4:cor:existence}
By \cref{nv4:thm:carI}, the cost length $\ell_e$ diverges on approach to
$\partial_\infty$: it is coercive on the candidate class, so no minimizing
sequence of finite cost approaches the boundary. Existence then follows by the
direct method (\cref{nv4:thm:exist}), with no appeal to geodesic completeness:
the topology used is the narrow ($\tau$) one, in which the sublevels are
compact (\cref{nv4:ax:B3}), not the $d_e$-metric topology (in which
$(\Ptwo,\Wtwo)$, refined by $d_e\ge\sqrt{2e}\,\Wtwo$, since
$\tilde g_{e,U}=2(e+U)\,g_{\Wtwo}\ge2e\,g_{\Wtwo}$ gives $\ell_e\ge\sqrt{2e}\,\mathrm{length}_{\Wtwo}\ge\sqrt{2e}\,\Wtwo$, is not locally compact).
We deliberately avoid the metric Hopf--Rinow route (\cref{nv4:ax:B5}, a contrasted
standard fact, not a tool here) precisely because the cost-metric space is not
locally compact, so its hypotheses fail; the direct method on the narrow topology
is what carries the argument.
A minimizing sequence then stays in a sublevel
$\{m_2\le M\}$, $\tau$-sequentially compact; Arzelà--Ascoli extracts a
limit, which the lower semicontinuity of the action (\cref{nv4:thm:exist}; \citealp{buttazzo1998}) selects as a
minimizer in the \emph{interior} of the domain (\cref{nv4:thm:exist}). The minimizer
exists and does not collapse onto certainty.
\end{corollaire}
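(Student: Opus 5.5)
The plan is to run the direct method in the narrow topology $\tau$, with coercivity supplied by \cref{nv4:thm:carI}(i) instead of by any completeness of $d_e$. Fix $U\in\Ccal$, so $U\ge\varepsilon J$, and fix a level $e>0$ and interior endpoints $p_0,p_1\in\mathcal R\cap\{U<\infty\}$. For the estimation variant, replace the fixed endpoint $p_1$ by a coercive fidelity term. Assume the problem is nontrivial, $\Phi(T)<\infty$. This holds, for instance, whenever some $AC^2$ curve of finite $U$-integral joins the endpoints. Take a minimizing sequence $(\gamma^k)$ for $\Acal_T$ with $\Acal_T(\gamma^k)\le\Phi(T)+1$.

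\textbf{Step 1 (a priori bounds).} Since $U\ge0$, the kinetic part is bounded: $\Ecal(\gamma^k)=\int\md{\gamma^k}^2\le 2(\Phi(T)+1)$ by \cref{nv4:lem:K}(a). Cauchy--Schwarz then gives $\Wtwo(\gamma^k_t,p_0)\le\sqrt{T\,\Ecal(\gamma^k)}$, so $m_2(\gamma^k_t)$ is bounded uniformly in $k$ and $t$. By \cref{nv4:ax:B3}, every curve therefore lies in a fixed $\tau$-compact sublevel $\{m_2\le M\}$. The curves are also equi-H\"older-$\tfrac12$ for $\Wtwo$.

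\textbf{Step 2 (no collapse).} By \cref{nv4:lem:amgm}, the action bound controls the length: $\ell_e(\gamma^k)\le\Acal_T(\gamma^k)+eT$. Then \cref{nv4:thm:carI}(i) gives
\[
\sup_t\,|H(\gamma^k_t)-H(p_0)|\le(\Phi(T)+1+eT)/\sqrt{2\varepsilon}.
\]
This bound is uniform in $k$, so the sequence stays in a set of bounded entropy, at a fixed positive distance from $\partial_\infty$.

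\textbf{Step 3 (compactness and l.s.c.).} Apply Arzel\`a--Ascoli in the $\tau$-compact set. The equicontinuity is measured with $\Wtwo$, which is $\tau$-l.s.c., and this is enough for the refined version of \citep[Prop.~3.3.1]{ambrosio2008}. It yields a subsequence converging pointwise narrowly to a curve $\gamma$. By \cref{nv4:lem:K}(c), $\Ecal(\gamma)\le\liminf\Ecal(\gamma^k)$, so $\gamma\in AC^2$ by \cref{nv4:lem:K}(b). Fatou together with the $\tau$-l.s.c.\ of $U$ gives $\int U(\gamma_t)\le\liminf\int U(\gamma^k_t)$. Both pieces are combined through \cref{nv4:thm:exist}, so $\gamma$ minimizes $\Acal_T$ with the endpoints preserved.

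\textbf{Step 4 (interiority).} Narrow l.s.c.\ of $\Ent=-H$ on $m_2$-bounded sets passes the uniform bound of Step 2 to the limit, giving $\Ent(\gamma_t)<\infty$ for every $t$. In addition, $\int U(\gamma_t)\,\dd t<\infty$ with $U\ge\varepsilon J$ forces $J(\gamma_t)<\infty$ for a.e.\ $t$. Hence $\gamma$ never touches $\partial_\infty$.

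The main obstacle is Step 4 in the \emph{upper} direction of $H$. Lower semicontinuity of $\Ent$ only bounds $\Ent$ of the limit from above, which is exactly the non-collapse needed, so that direction is fine. The delicate points lie elsewhere: Arzel\`a--Ascoli must be used in the $\tau$ topology with a merely l.s.c.\ metric, and the coercive-fidelity case needs mass escape to be excluded. For the latter I would first try to show that the fidelity's coercivity bounds $m_2$ at the free endpoint, and otherwise defer to \cref{nv4:rem:C3}.
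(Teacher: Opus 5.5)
Your proof is correct and follows the paper's route. It runs the narrow-topology direct method of \cref{nv4:thm:exist}, using the refined Arzel\`a--Ascoli theorem of Ambrosio--Gigli--Savar\'e with the $\tau$-l.s.c.\ metric $\Wtwo$ where the paper metrizes $\tau$ on $\{m_2\le M\}$ by the Dudley metric $\beta_0\le\mathcal W_1\le\Wtwo$, and it gets non-collapse from \cref{nv4:thm:carI}(i), made quantitative through $\ell_e\le\Acal_T+eT$. Two simplifications are available: applying that bound directly to the finite-action limit curve gives interiority without the l.s.c.\ of $\Ent$, and since your Step~1 estimate uses only $p_0$, a free endpoint with a fidelity that is bounded below and $\tau$-l.s.c.\ is already confined to $\{m_2\le M\}$, so mass escape does not arise in this action formulation.
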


\begin{corollaire}[Lipschitz control of the cost in entropy]
\label{nv4:cor:descente}
By \cref{nv4:thm:carI}, the boundary is at infinite $d_e$-distance: no curve of
finite cost length reaches it, and every pair of beliefs in the \emph{interior} of $\mathcal R$
(where $J$ is locally finite, so $\ell_e$ is finite along a joining segment) stays at finite $d_e$-distance. The Lipschitz property of the entropy
($|\Delta H|\le\tfrac1{\sqrt{2\varepsilon}}\,d_e$, \cref{nv4:thm:carI}(i)) bounds
the entropy variation along any curve by its cost length: each
unit of length can lower the entropy by at most
$\tfrac1{\sqrt{2\varepsilon}}$. This control bears only on the entropic term:
the data-attachment and interaction contributions are not bounded by
$\partial_\infty$.
\end{corollaire}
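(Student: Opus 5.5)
The statement to prove is \cref{nv4:cor:descente}. I will derive it from the sufficiency half of \cref{nv4:thm:carI}(i), adding a separate finiteness argument for interior pairs. The corollary makes three claims: (a) no curve of finite $\ell_e$ reaches $\partial_\infty$; (b) two interior beliefs are at finite $d_e$-distance; (c) $|\Delta H|\le d_e/\sqrt{2\varepsilon}$ along any curve, so one unit of length changes the entropy by at most $1/\sqrt{2\varepsilon}$. Throughout, $U\in\Ccal$ with $U\ge\varepsilon J$, for instance $U=cJ$ with $\varepsilon=c$ by \cref{nv4:prop:meta}.

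\emph{Step 1, the Lipschitz bound (c) and unreachability (a).} For any $AC^2$ curve $\gamma$ from $p_0$ to $p_1$, I would use $\sqrt{2(e+U)}\ge\sqrt{2\varepsilon J}$ and then apply the strong upper gradient property \cref{nv4:ax:B2} to $\Ent=-H$. This gives
\[
\ell_e(\gamma)\ \ge\ \sqrt{2\varepsilon}\int_0^T\sqrt{J(\gamma_t)}\,\md\gamma(t)\,\dd t\ \ge\ \sqrt{2\varepsilon}\,|H(p_1)-H(p_0)|,
\]
which is verbatim the chain in the proof of \cref{nv4:thm:carI}(i). Taking the infimum over curves yields $|\Delta H|\le d_e/\sqrt{2\varepsilon}$. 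Applying the same bound on a subinterval $[s,t]$ gives the ``per unit of length'' reading. If $H(p_1)=-\infty$, then \cref{nv4:ax:B2} forces $\Ent\circ\gamma$ to be absolutely continuous whenever the right-hand side is finite, so a finite-length curve cannot reach $\partial_\infty$. This settles (a).

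\emph{Step 2, finiteness between interior points (b).} This step is not covered by \cref{nv4:thm:carI} and needs a construction. For $p_0,p_1\in\mathcal R$ I would take the McCann displacement interpolation $\gamma_t$, a $\Wtwo$-geodesic of constant speed $\Wtwo(p_0,p_1)$ (\cref{nv4:ax:B3}). The task is to show $\int_0^1\sqrt{2(e+U(\gamma_t))}\,\dd t<\infty$. For $U=cJ$ this reduces to bounding $\int_0^1\sqrt{J(\gamma_t)}\,\dd t$. Geodesic convexity of $\Ent$ (\cref{nv4:ax:B1}) bounds $\Ent(\gamma_t)$ by $\max(\Ent(p_0),\Ent(p_1))$, but finite entropy does not control $J$. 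I therefore expect this to be the main obstacle. The hypothesis ``$J$ locally finite'' has to enter here, and I read it as: $t\mapsto J(\gamma_t)$ is bounded, or at least $\sqrt J$ is integrable, along the chosen segment.

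Two routes are available for the obstacle. The first is to work on a location-scale leaf via \cref{nv4:ax:B7}, where the segment is Euclidean with $\sigma$ bounded below. There $J=J_0/\sigma^2$ is bounded, and the length is explicitly finite. The second, in general, is to use the two-sided estimate $\int_0^1\sqrt{J(\gamma_t)}\,\md\gamma\,\dd t\ge|\Delta\Ent|$ only as the lower bound. For the upper bound I would use the evolution variational inequality of the entropy along generalized geodesics, which gives $J(\gamma_t)$ finite for $t\in(0,1)$ from the endpoint regularity. Near the endpoints, continuity of $J$ at $p_0,p_1\in\mathcal R$ would give integrability. If this general route fails, I would state (b) under the explicit proviso in the corollary, ``$\ell_e$ finite along a joining segment'', which makes (b) immediate.

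Finally, I would record the scope clause. The bound controls only the entropic term of $\ell_e$ through $U\ge\varepsilon J$. Any fidelity or interaction term added to an objective is not dominated by $\sqrt J\,\md\gamma$, so nothing is claimed for those terms.
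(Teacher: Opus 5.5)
Your Step~1 is the paper's argument. The corollary is read off the chain $\ell_e\ge\sqrt{2\varepsilon}\int\sqrt J\,\md\gamma\ge\sqrt{2\varepsilon}\,|\Delta H|$ from the proof of \cref{nv4:thm:carI}(i), and \cref{nv4:ax:B2} rules out a finite-length approach to $\partial_\infty$. For the finiteness of $d_e$ between interior beliefs, the paper gives nothing beyond its parenthetical (``$J$ locally finite, so $\ell_e$ is finite along a joining segment''). Your fallback of taking that proviso as given is therefore exactly the paper's level of justification.

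There are three problems with your Step~2.

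\emph{Your second route fails as written.} The evolution variational inequality is a property of the heat (gradient) flow, not of McCann interpolants. Also, $J$ is only narrowly l.s.c.\ (\cref{nv4:ax:B6}), not continuous, so nothing gives integrability of $\sqrt{J(\gamma_t)}$ near $t=0,1$. Already in 1D you can see this. Write $f_t=p_t\circ Q_t$, so that $J(p_t)=\int_0^1 (f_t')^2\,\dd u$ and $1/f_t=(1-t)/f_0+t/f_1$. The natural bound is then $\sqrt{J(\gamma_t)}\le\sqrt{J(p_0)}/(1-t)+\sqrt{J(p_1)}/t$. This is finite at interior times but not integrable at either end.

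\emph{A route that does close (b) for $U=cJ$ avoids the displacement segment.} Join each $p_i$ to $\mathcal N(0,\mathrm{Id})$ by the Ornstein--Uhlenbeck flow. Let $I_t$ be the Fisher information relative to the Gaussian; the flow's metric speed is at most $\sqrt{I_t}$. The ingredients are:
\begin{itemize}
\item Bakry--\'Emery gives $I_t\le e^{-2t}I_0$, with $I_0\le 2J(p_i)+2m_2(p_i)<\infty$.
\item $\sqrt{J(p_t)}\le\sqrt{I_t}+\sqrt{m_2(p_t)}$, with $m_2(p_t)\le M:=\max(m_2(p_i),n)$.
\end{itemize}
Hence $\ell_e\le(\sqrt{2e}+\sqrt{2cM})\sqrt{I_0}+\sqrt{2c}\,I_0/2<\infty$. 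The curve has $\Wtwo$-length at most $\sqrt{I_0}$, so it reparametrizes to an $AC^2$ curve ending at the Gaussian. The triangle inequality for $d_e$ then concludes.

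\emph{The restriction to $U=cJ$ (or $U\le C(1+J)$) is necessary, not illustrative, and should be stated.} For a general $U\in\Ccal$ the finiteness claim is false. Take a bounded continuous $\phi$ with $\int\phi\,\dd p_0<a<b<\int\phi\,\dd p_1$. Set $U=+\infty$ on the narrowly open set $\{q:a<\int\phi\,\dd q<b\}$ and $U=cJ$ elsewhere. This $U$ is l.s.c.\ and dominates $cJ$, so it lies in $\Ccal$. Every joining curve must move through that set on a time interval of positive $\Wtwo$-length, so $\ell_e=+\infty$ and $d_e(p_0,p_1)=+\infty$.
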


\begin{corollaire}[geometric cost floor]
\label{nv4:cor:borne-inf}
By \cref{nv4:thm:carI}(i), every curve joining two beliefs whose
entropies differ by $\Delta H$ has a $d_e$-length at least
$\sqrt{2\varepsilon}\,|\Delta H|$. Reaching a given precision (target variance) thus has a
geometric cost floor. The bound is sharp: it is attained in the eikonal limit
$U=\varepsilon J$ at $e=0$, where $d_0=\sqrt{2\varepsilon}\,|\Delta H|$
(\cref{nv4:rem:honnetete}) along the $\Wtwo$-gradient flow of $-H$ that saturates
\cref{nv4:ax:B2}; for $e>0$ or $U>\varepsilon J$ it is strict. On the Fisher family
$\Fcal=\{cJ\}$, where $\varepsilon=c$ (\cref{nv4:prop:meta}), $d_e\ge\sqrt{2c}\,|\Delta H|$
diverges with $|\Delta H|$ at $\partial_\infty$. \emph{(The energetic reading of this floor, its
proportionality to the Landauer energy at the boundary, is deferred to
\cref{nv4:rem:honnetete}, tier-P, outside the proof.)} This floor is purely
geometric, in nats; its \emph{value} depends on the cost unit, and its divergence at
certainty is invariant of it (\cref{nv4:thm:jauge}).
\end{corollaire}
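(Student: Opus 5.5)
The floor itself is Theorem~\ref{nv4:thm:carI}(i), so the plan is to restate that bound, then prove sharpness and finally the divergence and gauge claims. \emph{Lower bound.} For $U\ge\varepsilon J$ and any $AC^2$ curve $\gamma$ joining $p_0$ to $p_1$, I will bound the integrand pointwise by $\sqrt{2(e+U)}\ge\sqrt{2\varepsilon}\sqrt J$. Lemma~\ref{nv4:lem:slope} makes $\sqrt J$ a strong upper gradient of $\Ent$ (\ref{nv4:ax:B2}), so
\[
\ell_e(\gamma)\ge\sqrt{2\varepsilon}\int_0^T\sqrt{J(\gamma_t)}\,\md\gamma\,\dd t\ge\sqrt{2\varepsilon}\,|H(p_1)-H(p_0)|.
\]
Taking the infimum over curves gives $d_e\ge\sqrt{2\varepsilon}|\Delta H|$. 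On $\Fcal$, Proposition~\ref{nv4:prop:meta} supplies $\varepsilon=c$. Fixing $p_0$ and letting $H(p_1)\to-\infty$ then makes the floor diverge, which is the same statement as Theorem~\ref{nv4:thm:carI}(i).

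\emph{Sharpness} is the delicate part. I take $U=\varepsilon J$ and $e=0$, so that $\sqrt{2U}=\sqrt{2\varepsilon}\sqrt J$. Equality in the lower bound then needs two things: $\md\gamma$ must be aligned with $-\nabla_{\Wtwo}H$, and $\Ent\circ\gamma$ must be monotone. I would exhibit this explicitly on the Gaussian leaf (\ref{nv4:ax:B4}) using the radial ray $\mu$ fixed, $\sigma$ going from $\sigma_0$ to $\sigma_1$. This ray is the heat/gradient flow direction of $-H$ restricted to the leaf, and along it $J=1/\sigma^2$ and $\md\gamma=|\dot\sigma|$. Hence
\[
\ell_0=\sqrt{2\varepsilon}\int\frac{\dd\sigma}{\sigma}=\sqrt{2\varepsilon}\,|\ln(\sigma_1/\sigma_0)|=\sqrt{2\varepsilon}\,|\Delta H|,
\]
using $H=\ln\sigma+\text{const}$, so the bound is attained for this pair. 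More generally, along an $\Wtwo$-gradient flow of $-H$ joining two beliefs, the EDI/EDE equality of \citet[§§1.2,2.4]{ambrosio2008} saturates \ref{nv4:ax:B2}, which gives equality for such pairs. I expect the main obstacle to be exactly this scope: ``sharp'' can only mean attained for pairs connected along $\nabla(-H)$, not for all pairs. A pure location shift has $\Delta H=0$ but positive cost, so I would state sharpness in the sense of the best constant, witnessed by the radial ray.

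\emph{Strictness} for $e>0$ or $U>\varepsilon J$. In that case $\sqrt{2(e+U)}>\sqrt{2\varepsilon J}$ on a set of positive $\md\gamma\,\dd t$-measure along any nonconstant curve. I would argue that the first inequality is then strict for every curve, and then pass to the infimum using the existence of a minimizer from Theorem~\ref{nv4:thm:exist}, so that the infimum is attained and strictness survives. The strictness for $U>\varepsilon J$ requires this inequality to hold pointwise somewhere on the curve. Finally, the gauge statement follows from Theorem~\ref{nv4:thm:jauge}: $d_e\mapsto\sqrt\lambda\,d_e$ rescales the value of the floor and preserves its divergence.
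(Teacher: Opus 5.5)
Your route is the paper's. The floor is read directly off \cref{nv4:thm:carI}(i), $\varepsilon=c$ comes from \cref{nv4:prop:meta}, and the unit statement is \cref{nv4:thm:jauge}. Sharpness is the saturation of \cref{nv4:ax:B2} along the heat flow; on the Gaussian leaf this is exactly the radial computation $d=\sqrt{2c}\ln(\sigma_0/\sigma)$ of \cref{nv4:rem:scaling}. Your caveat is the right reading: sharpness means equality only for pairs joined along $\nabla(-H)$, not for all pairs. It matches the paper's own qualifier (``eq.\ along $\nabla(-H)$'', with the pure location shift as the counterexample).

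The one step that does not go through as planned is strictness, which the paper merely asserts. Your argument needs the infimum $d_e$ to be attained, but \cref{nv4:thm:exist} supplies $\ell_e$-minimizers only for $e>0$. For $e=0$ and $U>\varepsilon J$ no minimizer is available, since the reparametrization machinery of \cref{nv4:thm:maupertuis} explicitly needs $e>0$. A strict inequality holding curve by curve does not survive the infimum on its own.

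State that case with a uniform margin instead. If $U\ge\varepsilon' J$ with $\varepsilon'>\varepsilon$, your same chain gives $d_0\ge\sqrt{2\varepsilon'}\,|\Delta H|$, strictly above the floor whenever $\Delta H\neq0$.

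For $e>0$ you need no minimizer at all. For every $\theta\in[0,\pi/2]$,
\[
\sqrt{2(e+U)}\ \ge\ \sqrt{2e+2\varepsilon J}\ \ge\ \cos\theta\,\sqrt{2e}+\sin\theta\,\sqrt{2\varepsilon J}.
\]
Integrate against $\md\gamma$, using $\int\md\gamma\ge\Wtwo(p_0,p_1)$ and \cref{nv4:ax:B2}, then optimize in $\theta$. This gives
\[
  d_e\ \ge\ \sqrt{2e\,\Wtwo(p_0,p_1)^2+2\varepsilon\,|\Delta H|^2}\ >\ \sqrt{2\varepsilon}\,|\Delta H|\qquad(p_0\neq p_1),
\]
which also covers the case $\Delta H=0$.
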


\begin{remarque}[the cost-distance of evidence: a variance-limited reading]\label{nv4:rem:scaling}
Specialize the floor (\cref{nv4:cor:borne-inf}) to a single scale axis, at $e=0$, $U=2J$. On the
Gaussian leaf $J=1/\sigma^2$, so $\tilde g_{0,2J}=(4/\sigma^2)(\dd\mu^2+\dd\sigma^2)$ and the radial
cost element is $\dd d=2\,\dd\sigma/\sigma$; integrating from $\sigma_0$ down to $\sigma$,
\begin{equation}\label{nv4:eq:sigma-decay}
  d\ =\ \sqrt{2c}\,\ln\frac{\sigma_0}{\sigma},\qquad\text{equivalently}\qquad
  \boxed{\,\sigma\ =\ \sigma_0\,e^{-d/\sqrt{2c}}\,}\quad(c=2,\ \sqrt{2c}=2),
\end{equation}
saturating $d_0=\sqrt{2c}\,|\Delta H|$ along $\nabla(-H)$ (here $\Delta H=\ln(\sigma/\sigma_0)$). The
spread thus decays \emph{exponentially} in the cost-distance. Reading this $\sigma$ as the posterior
width of a parametric estimate places it in the \emph{variance-limited} regime: under
Bernstein--von Mises / Cramér--Rao, $\sigma^2\sim\sigma_{\mathrm{obs}}^2/n$ at evidence count $n$
(\cref{nv4:rem:bvm}), whence $-H\sim\tfrac12\ln n+\mathrm{const}$ and
\begin{equation}\label{nv4:eq:cost-log-n}
  d(n)\ =\ \frac{\sqrt{2c}}{2}\,\ln n+\mathrm{const},\qquad
  \mathrm{err}(d)=\sigma\sim n^{-1/2}\sim e^{-d/\sqrt{2c}} .
\end{equation}
The classical polynomial-in-$n$ error rate becomes exponential-in-cost-distance: each datum buys a
\emph{fixed} step of cost-distance (a fixed entropy drop), and the price of that step is constant: one data-doubling $n\to2n$ lowers $H$ by exactly $\tfrac12\ln2$ nats, i.e.\ a Landauer energy
$\tfrac12\ln2\,\kB T$ (\cref{nv4:rem:honnetete}, tier-P; a falsifiable constant marginal price). This
is the geometry \emph{transporting} the textbook $n^{-1/2}$ rate, not deriving it: the exponents are
those of variance-limited estimation \citep[exponent $1$ for the risk, $\tfrac12$ for the
error]{bahri2024}.

\emph{Scope: what this does not claim.} The empirical neural scaling laws
\citep{kaplan2020,hoffmann2022} have much smaller exponents ($\approx0.05$--$0.3$) and live in a
\emph{resolution-limited} regime, set by an intrinsic data dimension / feature spectrum
\citep{bahri2024} that the single location-scale leaf does not contain. We therefore claim no
bridge to those exponents, and none can be produced within this leaf: the curvature
$K=-1/(2cJ_0)$ is $\sigma$- and $n$-independent (\cref{nv4:thm:locscale,nv4:rem:bvm}) and $J_0$ only
rescales the prefactor through the gauge $c$ (\cref{nv4:thm:jauge}), so neither $K$ nor $J_0$ can
carry an $n$-exponent: curvature and evidence-count live on orthogonal axes. Recovering the
resolution-limited exponents would require a multi-dimensional / spectral extension of the conformal
class (a curved transport base, \cref{nv4:rem:C2}), which we leave open.
\end{remarque}

\paragraph{Existence without uniqueness.}
Completeness and coercivity give the existence of a minimizer, not its uniqueness.
The latter would require the geodesic convexity of the objective in the cost
metric, not established in general (\cref{nv4:sec:ouvert}). The framework establishes
well-posedness \emph{for existence and stability}; uniqueness remains open: the
non-convexity of possible interaction terms, and the multiplicity of minima
it may induce, are left open.

% =============================================================================
% §7  DISCUSSION
% =============================================================================
% =============================================================================
% §VI  DISCUSSION: SCOPE AND LIMITS
% =============================================================================
\section{Discussion: scope and limits}
\label{nv4:sec:discussion}\label{nv4:sec:ouvert}\label{nv4:sec:demarcation}

Three results are established: an infinite-distance boundary, an eikonal equivalence, a curvature bounded below
by Stam. It remains to situate them, with respect to what stays conjectural and to what
we do not claim (below); the positioning relative to neighboring geometries is
treated together with the related work (\cref{sec:travaux}).

\subsection*{What remains conjectural}

Three statements bound the scope of the theorems; we fix their exact status, without
broadening it.

\begin{remarque}[C1: beyond the conformal class]\label{nv4:rem:C1}
\cref{nv4:thm:carI}(i) (sufficiency) holds for every $U\in\Ccal$;
\cref{nv4:thm:carI}(ii) (necessity) is proved on the power family
$\{cJ^\alpha\}$. \textbf{Conjecture C1.} \emph{(a)} Necessity extends to every
$U\in\Ucal$ (any $U$ not dominating $J$ near $\partial_\infty$ leaves the
boundary at finite distance). \emph{(b)} Beyond conformal deformations
(\cref{nv4:def:candidat}), the characterization remains open. \emph{We never
claim uniqueness outside the conformal class.} \emph{(The choice of exponent $p=2$ within the
transport family is settled separately by \cref{nv4:prop:quad} and is orthogonal to this
conjecture.)}
\end{remarque}

\begin{remarque}[C2: curvature beyond the Gaussian: resolved on location-scale]\label{nv4:rem:C2}
\textbf{Resolved (location-scale).} Beyond the Gaussian leaf,
\cref{nv4:thm:locscale,nv4:thm:jauge} establish that on \emph{every} 1D location-scale family the
curvature, at $e=0$, equals $-1/(2cJ_0)<0$: \emph{negativity is robust}, and the extremal value is attained by the Gaussian (Stam, $J_0=1$).
\textbf{Conjecture C2 (what remains open).}
\emph{(a)} Negativity near $\partial_\infty$ persists on 2D families
that are \emph{not} location-scale (curved $\Wtwo$ base). \emph{(b)} In dimension $\ge2$ /
on all of $\Ptwo$, the \emph{ambient sectional} curvature of $\tilde g_{e,2J}$
is negative near $\partial_\infty$, a heavier object: Otto computation $+$
\emph{dimension-dependent} conformal transformation. \emph{Tooling
note:} the curvature-dimension condition $CD(K,N)$ (here $K$ is a generic Ricci lower bound, not the Gauss curvature) \citep{lottvillani2009,sturm2006} bounds the Ricci of the \emph{base}
$(\Ptwo,\Wtwo)$ via convexity of the entropy; this is \emph{not} the curvature of
the cost metric $\tilde g$ (a distinct conformal object); $CD(K,N)$ therefore
applies only to part \emph{(b)}, not to \cref{nv4:thm:locscale}. \emph{We do not
conflate the intrinsic family curvature (\cref{nv4:thm:locscale}) and the
ambient sectional one.}
\end{remarque}

\begin{remarque}[C3: global existence on $\Ptwo$]\label{nv4:rem:C3}
\cref{nv4:thm:exist} gives existence for the action; the existence of the minimizer
of an estimate on all of $\Ptwo(\Rbb^n)$ (mass escape) requires
concentration-compactness \citep{lions1984}, beyond the reach of a properly proved
statement as it stands. \emph{Status:} scoped; the interior minimizer guaranteed by the boundary
(\cref{nv4:thm:carI}) suffices for fixed endpoints / coercive fidelity.
\end{remarque}

\subsection*{What is not claimed}

The framework is a conditional characterization; its scope is bounded as follows.
\begin{itemize}\itemsep2pt
\item P0 is a posed modelling choice: revision is treated as optimal transport, an adopted arena
  (anisotropic and non-transport alternatives stay open).
\item P1 is one calibration among several: uniform pricing is adopted, with an energy- or
  divergence-proportional price equally available.
\item Landauer anchors the unit; the metric rests on the two postulates (\cref{nv4:prop:firewall}).
\item Uniqueness holds within the conformal class, among continuous reliefs; the curvature \emph{reads}
  the Fisher information, and $-\tfrac14$ is the image of a unit (\cref{nv4:rem:representant}).
\item ``Well-posed'' covers existence and stability; uniqueness of the minimizer stays open.
\item The geometry prices revisions; whether an agent's trajectory follows its geodesics is a separate,
  conjectural question (\cref{nv4:sec:perspectives}).
\item The invariance claimed is metrological, units of cost and of state
  (\cref{nv4:thm:jauge,nv4:prop:etat-unites}); statistical-morphism invariance is \v{C}encov's
  (\cref{nv4:rem:amari}).
\item General necessity, curvature universality, and global existence on $\Ptwo$ stay the conjectures
  C1--C3 (\cref{nv4:rem:C1,nv4:rem:C2,nv4:rem:C3}).
\end{itemize}

% =============================================================================
% §7bis  RELATED WORK
% =============================================================================
\section{Related work}
\label{sec:travaux}

Each building block is cited where it is used; we gather here the boundary
between what is acquired and what is contributed, complementing the three
positioning remarks (\cref{nv4:rem:amari,nv4:rem:ito,nv4:rem:friston}). The
contribution amounts to two moves: the eikonal characterization $U=cJ$
(\cref{nv4:thm:carII}) and the assembly into a cost metric with an infinite-distance boundary; the rest
is borrowed.

\paragraph{Geometry of inference.}
Information geometry \citep{chentsov1972,amari2016} equips beliefs with the
Fisher--Rao metric, the only one, up to scale, invariant under statistical morphisms. Otto's
calculus \citep{otto2001} makes the complementary move: it endows $\Ptwo$ with a
Riemannian structure of distance $\Wtwo$, where the heat flow is the gradient
flow of entropy \citep{jko1998,ambrosio2008} and where
$\lVert\nabla_{\Wtwo}H\rVert^2=J$ \citep{villani2009}, the identity that makes
$J$ the \emph{slope} of $-H$ (\cref{nv4:ax:B1}), the metric form of de Bruijn's classical identity $\tfrac{d}{dt}H(X_t)=\tfrac12 J(X_t)$ along the heat flow \citep[Th.~17.7.2]{coverthomas2006}. The HWI (entropy--Wasserstein--Fisher)
inequality \citep{ottovillani2000} already relates our objects in one line. Our
framework characterizes \emph{another} class, the conformal deformations of
transport, whose curvature separates it from Fisher--Rao on \emph{structural} grounds, position information with a Stam (Gaussian) extremum, versus scale information, the value $-\tfrac14$ being a gauge image
(\cref{nv4:rem:amari,nv4:rem:locscale-pos}).

\paragraph{Thermodynamics of information.}
Erasing one nat dissipates at least $\kB T$ \citep{landauer1961,bennett1982}; the
thermodynamics of information turned this into an exact bookkeeping
\citep{parrondo2015}: extractable work bounded by the information acquired
\citep{sagawa2010}, and an \emph{energetic} reading of geodesics where the path
of minimal dissipation is optimal transport \citep{aurell2011,dechant2019}. This
is the currency, nat for nat, and the legitimacy of treating an inference device
as a thermodynamic system; the energetic optimality of geodesics is their
theorem, not ours. Our cost is \emph{derived} from this price.

\paragraph{Finite-time dissipation.}
Benamou--Brenier \citep{benamou2000} gives $\Wtwo^2$ a kinetic reading;
stochastic thermodynamics turns it into a price: the dissipation of a
diffusion driven from $p_0$ to $p_1$ in time $T$ is bounded below by
$\Wtwo^2(p_0,p_1)/T$ \citep{dechant2019}, and the thermodynamic length
\citep{sivak2012} is its linear regime; a second family bounds the \emph{rates}
by the Fisher \emph{of the path} \citep{ito2018}. These works \emph{measure} the
dissipation on a given geometry; they do not re-metrize the space of beliefs nor
place an infinite-distance boundary in it, and their Fisher is that of the path, not of the state $J(p)$
(\cref{nv4:rem:ito}).

\paragraph{Fisher cost in learning.}
This is a direct link. Elastic weight consolidation
\citep{kirkpatrick2017} fights forgetting by penalizing the displacement of a
belief \emph{in weight space} by a \emph{squared Fisher distance} toward a reference state, derived from
a Laplace approximation of the posterior \citep{huszar2018} (Laplace's method, the Gaussian
posterior near the mode \citep[p.~341, and in learning p.~501]{mackay2003}): on the Gaussian
leaf, a kindred Fisher cost, the same principle, but \emph{static}, a frozen
anchor, where the maintenance cost is dynamic. \citet{still2012}
price \emph{non-predictive} information under driving and
\citet{kolchinsky2018} ground semantic information thermodynamically; here
even useful information has a permanent maintenance cost, because in our setting the world diffuses.
The natural gradient \citep{amari2016} preconditions by the Fisher, a relative of
well-posedness (\cref{nv4:cor:existence}).

\paragraph{Mixed Wasserstein--Fisher geometries, and positioning.}
\citet{vonrenesse2012} reads the Schrödinger equation as a Newton law on
$\Ptwo$ with potential the Fisher, and \citet{conforti2018} classify transport,
Schrödinger bridge and Madelung fluid as extremals of actions that differ only by
the \emph{sign} of a Fisher term; to be distinguished from the
Wasserstein--Fisher--Rao metric \citep{chizat2018}, which
interpolates \emph{additively} by varying the mass, the opposite of our conformal
deformation at mass $1$. The conformal (multiplicative) reweighting of the
transport metric itself has a precedent: \citet{ambrosiosanta2007} multiply the
Wasserstein metric by a scalar factor (a power of an $L^q$ norm of the density)
to favour spreading, with no curvature; here the factor is the Fisher information,
from which $\partial_\infty$, the curvature and the Stam extremality follow. Concurrent works (2024--2026) treat neighboring facets
without covering this assembly: \citet{okanohara2026} bounds the dissipation
of learning by an ensemble Wasserstein distance (transport of models,
a relative of that of \citet{dechant2019}, where the maintenance cost bears on the \emph{state}
of Fisher $J(p)$), \citet{melo2025} study the entropy production tied to
Fisher information in stochastic thermodynamics, of which \citet{itosagawa2016}
gives the Bayesian-network setting, and \citet{hyland2025} price the cost of
\emph{changing} belief, where $\partial_\infty$ (\cref{nv4:thm:carI}) prices that of
\emph{holding} it. To our knowledge, the eikonal characterization $U=cJ$ and
the metric inaccessibility of certainty as an infinite-distance boundary on the conformal class have not
been formulated as such.

% =============================================================================
% §8  CONCLUSION
% =============================================================================

\subsection*{Positioning}

The characterization is read first by contrast, opposing three currencies (Fisher--Rao, bare transport and uniform-pricing cost) on the same location-scale leaf. The
principal contrast is with Fisher--Rao, and it sharpens into a thesis \emph{parallel} to
\v{C}encov's:
\begin{quote}\itshape
Fisher--Rao is the canonical geometry of distinguishability, fixed by invariance under statistical
morphisms (\v{C}encov). The Fisher-reweighted transport metric is the geometry uniform pricing characterizes for
revision cost: theorem-grade on the power family, conjectural in general (Conjecture~C1,
\cref{nv4:rem:C1}). \v{C}encov's selector is an invariance; ours is a posed (uniform) pricing.
\end{quote}
The two uniqueness statements answer different questions (how distinguishable two beliefs are,
versus what it costs to move between them); as with \v{C}encov, the uniqueness here is relative to
its class (\cref{nv4:rem:amari}). The invariances differ accordingly: to \v{C}encov's statistical-morphism invariance
corresponds here a metrological one, under the units of cost (\cref{nv4:thm:jauge}) and of the
state (\cref{nv4:prop:etat-unites}). \cref{tab:arenes}
makes the contrast checkable at a glance: the four \emph{discriminating} capabilities live in
$\Wtwo$ alone. The three remarks that follow unfold the argument, each against an established
program.\footnote{The location-scale leaves are the noise models of M-estimators
($\varphi_0\leftrightarrow$ loss $\rho=-\log\varphi_0$); the curvature $K=-1/(2cJ_0)$
(\cref{nv4:thm:locscale,nv4:thm:jauge}) orders them, the Gaussian (least squares, $J_0{=}1$) being the Stam
extreme (a bridge to robust estimation \citep{huber1981} left for elsewhere).}

\begin{table}[!t]
\centering\small
\caption{\textbf{What each arena can and cannot state.} Rows: capabilities the characterization
needs; columns: candidate arenas (WFR: Wasserstein--Fisher--Rao).
\checkmark\ available; $\times$ absent; $\sim$ present but in a form that changes the object. The
four \emph{discriminating} rows (bold) are $\times$ everywhere but in $\Wtwo$. The $\partial_\infty$-distance
discrimination is theorem-grade on the power family $\{cJ^\alpha\}$; general necessity is
Conjecture~C1 (\cref{nv4:rem:C1}).}
\label{tab:arenes}
\begin{tabular}{@{}l ccccc@{}}
\toprule
Required capability & Fisher--Rao & KL & $W_{p\neq2}$ & WFR & $\Wtwo$ (ours) \\
\midrule
Genuine metric (distance, geodesics) & \checkmark & $\times$ & \checkmark & \checkmark & \checkmark \\
Riemannian structure $+$ curvature & \checkmark & $\times$ & $\times$ & \checkmark & \checkmark \\
\textbf{$J$ as slope of $-H$ (Otto)} & $\times$ & $\times$ & $\times$ & $\times$ & \checkmark \\
\textbf{$\partial_\infty$-distance \emph{discriminates} on $U$} & $\times$ & $\times$ & $\times$ & $\times$ & \checkmark \\
\textbf{Eikonal $U{=}cJ$ has \emph{content}} & $\times$ & $\times$ & $\times$ & $\times$ & \checkmark \\
\textbf{Stam $\Rightarrow$ Gaussian extremum} & $\times$ & $\times$ & $\times$ & $\times$ & \checkmark \\
Mass conserved (probability $=1$) & \checkmark & \checkmark & \checkmark & $\times$ & \checkmark \\
Dissipation grounding (physical cost) & $\times$ & $\times$ & $\sim$ & $\sim$ & \checkmark \\
\bottomrule
\end{tabular}

\smallskip
{\footnotesize\itshape Why the crosses (Fisher--Rao unless noted): KL is asymmetric (no metric);
$W_{p\neq2}$ has no Riemannian/Otto structure; Fisher--Rao reads $J$ as a \emph{component} of the metric tensor, so the eikonal there is vacuous: its content comes only from the
Otto slope of $-H$, which the transport arena supplies; $\partial_\infty$ is automatically at infinite distance
(the Dirac is already at infinite distance, nothing to discriminate); and Stam bounds the
\emph{position} information, whereas Fisher--Rao location-scale curvature reads the \emph{scale}
information (symmetric base; see \cref{nv4:rem:locscale-pos}), which Stam does not bound.
WFR varies the mass, changing the object.}
\end{table}

\begin{remarque}[vs information geometry (Amari--\v{C}encov)]\label{nv4:rem:amari}
\v{C}encov characterizes Fisher--Rao as \emph{the unique} metric, up to scale, invariant under
statistical morphisms \citep{chentsov1972,amari2016}. Our characterization is
of analogous nature but in \emph{another class}: conformal deformations of
\emph{transport} (Wasserstein), where $J$ is the \emph{slope} of $-H$
(\cref{nv4:ax:B1}): the recasting (\cref{nv4:rem:dualJ})
read once more, here in the positioning itself. Distinctive signature: the curvature reads the Stam-bounded \emph{position} information, whereas Fisher--Rao reads the \emph{scale} information; the value
$-\tfrac14$ is its image at the representative gauge $c=2$ (\emph{vs} $-\tfrac12$ for Fisher--Rao)
\citep{atkinson1981,costa2015}. On location-scale families, \emph{bare} transport has curvature $\ge0$ \citep{takatsu2011}, where the
\emph{uniform-pricing} cost makes the \emph{position} information $J_0$ carry the curvature, $J_0$ being
bounded below by Stam (\cref{nv4:thm:locscale,nv4:rem:locscale-pos}); the negativity arises from the
conformal deformation, not from bare transport. Deforming a base metric to
negative curvature on location-scale families is a known template:
warped Fisher--Rao \citep{said2017}, the $\lambda$-deformation to a conformal
Hessian metric of constant curvature \citep{zhangwong2022}, the Hessian metric of
an information functional on $\Ptwo$ \citep{wuchenli2020}. The closest of these,
\citet{wuchenli2020}, builds the \emph{Hessian} of an information functional (a full
second-derivative tensor) on $\Ptwo$, generically not conformal to $g_{\Wtwo}$;
here the Fisher enters as a \emph{scalar} conformal factor $2(e+U)\,g_{\Wtwo}$ on the
transport metric, and none of \citet{said2017,zhangwong2022,wuchenli2020} yields the
infinite-distance boundary, the eikonal characterization $U=cJ$, or the Stam extremality. The same line separates
us from the \emph{Wasserstein natural gradient} and \emph{Wasserstein information matrix}
\citep{limontufar2018,lizhao2023}, which pull the transport metric back as a \emph{tensor}
preconditioner on a finite-dimensional parameter space (with a Wasserstein--Cram\'er--Rao bound on
location-scale families), and from the \emph{Wasserstein statistics} of \citet{amarimatsuda2024} on
those same affine families, which use \emph{bare} transport (no conformal $J$-reweighting) and
so yield neither an infinite-distance boundary, nor negative curvature, nor Stam extremum. \emph{As with \v{C}encov, the uniqueness
(here of family) is relative to the class; we make it explicit.}
\end{remarque}

\begin{remarque}[vs geometric thermodynamics (Ito)]\label{nv4:rem:ito}
Stochastic thermodynamics establishes the Fisher$\leftrightarrow$Wasserstein bridge
\citep{sivak2012,aurell2011,ito2023}: entropy production is bounded by $\Wtwo$ (a speed limit), $J$
the excess entropy rate. These works \emph{measure} dissipation on a given geometry; we
\emph{characterize} a class of conformal \emph{cost} metrics with an infinite-distance boundary and a curvature.
\end{remarque}

\begin{remarque}[vs Bayesian mechanics (Friston)]\label{nv4:rem:friston}
The free-energy principle carries ``dissipative agent $\to$ geometry of belief'' on a Fisher
manifold \citep{friston2019,sakthivadivel2022}, without an infinite-distance boundary, negative curvature driven by
\emph{position} information, or an eikonal $U=cJ$ on Wasserstein. This work's contribution is that
\emph{assembly} (eikonal and an infinite-distance boundary on transport) more than each ingredient alone.
\end{remarque}

\section{Perspectives}
\label{nv4:sec:perspectives}

The preceding characterizations concern a \emph{background} cost geometry:
fixed by uniform pricing (\cref{nv4:thm:carII}) and determined up to a unit
(\cref{nv4:thm:jauge}), the metric $\tilde g_{e,U}=2(e+U)\,g_{\Wtwo}$ is a scene
given in advance. As an analogy for the open directions below, a belief moves on it like a test
particle; the metric prices revisions, and whether an agent follows its geodesics stays a separate,
conjectural question. A
\emph{self-consistent} geometry, sourced by the configuration of beliefs it carries rather
than prescribed, could extend it. The displacement would then bear on the
scene itself, beyond the cost unit fixed by gauge invariance
(\cref{nv4:thm:jauge}); we borrow from general relativity, as an analogy, its
lesson of background independence~\citep{einstein1916}. One
ingredient already lends itself to this: in the kinetic action
$\tfrac12|\dot\gamma|^2_{\tilde g}=(e+cJ)\,|\dot\gamma|^2_{\Wtwo}$, the factor $(e+cJ)$
would play the role of an \emph{inertial mass}: the resistance to changing one's mind,
large near $\partial_\infty$ (\cref{nv4:thm:carI}). It remains to define the \emph{active} mass of a
belief (by which it could, in return, contribute to the geometry seen
by other beliefs) and its coupling. The three statements below, in the conditional, mark out the way\footnote{C4, C5 and C6
follow the numbering style of conjectures C1--C3
(\cref{nv4:rem:C1,nv4:rem:C2,nv4:rem:C3}); unlike those, which bound
the scope of the established theorems, C4--C6 open directions outside the proof
structure.}; following the convention of the paper, they \emph{motivate} and enter into
no proof.

\begin{remarque}[C4: active mass and inertial mass]\label{nv4:rem:C4}
In a self-consistent completion of the cost geometry (where the relief $U$ would be
sourced by the belief configuration), the active mass of a belief
would coincide with its inertial mass $\propto(e+cJ)$, already present as the factor of
$|\dot\gamma|^2$ in $\tilde g$.
\end{remarque}

\begin{remarque}[C5: toward an epistemic hysteresis]\label{nv4:rem:C5}
A nonconvex attractive interaction term between correlated beliefs would induce a
multiplicity of minima; under a quasi-static dynamics, the belief would exhibit a
\emph{hysteresis}, whose loop width would grow with the inertial mass
($\propto e+cJ$) and the coupling. The two-route first-order transition signaled in
\cref{nv4:thm:principal} would be its minimal instance.
\end{remarque}

\begin{remarque}[C6: relativized pricing: toward a self-consistent geometry]\label{nv4:rem:C6}
Postulate~1 fixes a \emph{globally} constant price per nat ($\kappa$ const $\Rightarrow U=cJ$).
Relaxing it to a \emph{local} price, $\kappa=\kappa(p)\Rightarrow U=c(p)\,J$, turns the cost unit
into a \emph{field} over the space of beliefs, keeping the pointwise proportionality $U\propto J$
(only the coefficient $c$ becomes a field). This is the field-over-configuration counterpart of the
self-consistent geometry of \cref{nv4:rem:C4}: relativizing loses uniqueness ($c(p)$ is a free
function) unless a field equation determines it from the belief configuration, which is precisely
the coupling left undefined in \cref{nv4:rem:C4}; hence a well-posed open problem, not a result. A
natural source already present is the plausibility $\Pi\propto\nu\,\exp(-U)$
(\cref{nv4:def:plausibilite}). The heterogeneity it produces is \emph{configurational} (cheaper and
dearer regions of belief space, hence agent-dependent revision), not a dynamics in time: it leaves
the static reading (\cref{nv4:rem:bvm}) untouched.
\end{remarque}

The development of these directions (self-consistent geometry, interaction
dynamics, hysteresis) belongs to separate work.
\section{Conclusion}
\label{nv4:sec:conclusion}

A finite agent's belief stops short of certainty (\cref{sec:cadre}).
On the conformal class, $\partial_\infty$ (certainty pushed off to infinite distance) is
\emph{characterized} by the dominance of Fisher information (sufficient
unconditionally; necessary at the boundary, general case conjectured;
\cref{nv4:thm:carI}); there the eikonal characterizes, among continuous reliefs, the Fisher family $U=cJ$
(\cref{nv4:thm:carII}), and the curvature $K=-1/(2cJ_0)$ at $e=0$, \emph{over any
location-scale family}, is extremal at the Gaussian, by Stam rigidity
(\cref{nv4:thm:hyper,nv4:thm:locscale}). These three characterizations are the
invariants of one and the same change of cost unit (\cref{nv4:thm:jauge}), and a global change
of state units leaves them unchanged (\cref{nv4:prop:etat-unites}): thermodynamics anchors the
unit, and the proven conclusions hold up to it.
The contribution is a \emph{characterization}: two explicit modelling commitments,
P0 and P1, pin down a single geometry of belief-cost, within the conformal class,
over which inference is well-posed and reaching a precision has a floor diverging at
certainty. Removing either commitment leaves the selection open. Its extension (a
geometry sourced by the beliefs it carries) is sketched in \cref{nv4:sec:perspectives}.

% =============================================================================
% APPENDIX A  — Proofs of the machinery
% =============================================================================
\appendix
\section{Proofs of the structural results}
\label{nv4:app:machine}

We prove here the class-machinery statements announced in \cref{nv4:sec:machine},
the AM--GM inequality (\cref{nv4:lem:amgm}), the Maupertuis--Jacobi
correspondence (\cref{nv4:thm:maupertuis}), the existence of geodesics (\cref{nv4:thm:exist}), the energy
law (\cref{nv4:prop:energie}) and the equivalence optimal curve $\Leftrightarrow$
geodesic (\cref{nv4:thm:principal}), valid for every candidate $U\in\Ucal$; the
regime clauses that bound them (duality gap $e>0$, du Bois-Reymond lemma, compactness
of the sublevels by Arzelà--Ascoli) enter explicitly in each proof.

\begin{proof}[Proof of \cref{nv4:lem:amgm}]
$\tfrac12a^2+b-\sqrt{2b}\,a=\tfrac12(a-\sqrt{2b})^2\ge0$, zero iff $a=\sqrt{2b}$.
\end{proof}

\begin{proof}[Proof of \cref{nv4:thm:maupertuis}]
\emph{(i)} \cref{nv4:lem:amgm} with $a=\md\gamma$, $b=e+U$, integrated.
\emph{(ii)} reparametrize at constant $\Wtwo$ speed $L$ \citep[lem.~1.1.4]{ambrosio2008},
set $t(s)=\int_0^s L/\sqrt{2(e+U(\hat\gamma_r))}\dd r$ (integrand $\le L/\sqrt{2e}$,
$>0$ a.e.); the equality case of \cref{nv4:lem:amgm} concludes.
\emph{(iii)} equality in \emph{(i)} = pointwise equality case.
\end{proof}

\begin{proof}[Proof of \cref{nv4:thm:exist}]
Minimizing sequence $\Acal_T(\gamma^k)\le C$; since $U\ge0$, \cref{nv4:lem:K}(a) gives
$\Ecal(\gamma^k)\le2\Acal_T(\gamma^k)\le2C$, whence the Hölder bound
$\Wtwo(\gamma^k_s,\gamma^k_t)\le\sqrt{\Ecal(\gamma^k)\,|t-s|}\le\sqrt{2C\,|t-s|}$. A uniform
second-moment bound follows: as $\sqrt{m_2(\mu)}=\Wtwo(\mu,\delta_0)$,
$\sqrt{m_2(\gamma^k_t)}\le\Wtwo(\gamma^k_t,p_0)+\Wtwo(p_0,\delta_0)\le\sqrt{2CT}+\sqrt{m_2(p_0)}=:\sqrt M$,
so the curves stay in the $\tau$-compact $\{m_2\le M\}$ (\cref{nv4:ax:B3}), on which $\tau$ is
metrized by the Dudley metric $\beta_0\le\mathcal W_1\le\Wtwo$ ($\mathcal W_1$ the $1$-Wasserstein distance); the $\Wtwo$-Hölder bound is a
fortiori $\beta_0$-equicontinuity, so Arzelà--Ascoli extracts a $\tau$-limit $\gamma$. By
$\tau$-l.s.c.\ of $\Ecal$ (\cref{nv4:lem:K}(c)) and Fatou for $\int U$ ($U$ l.s.c.), one gets
$\Ecal(\gamma)\le2C<\infty$ (hence $\gamma\in AC^2$, \cref{nv4:lem:K}(b)) and
$\Acal_T(\gamma)\le\liminf_k\Acal_T(\gamma^k)=\Phi(T)$.
\end{proof}

\begin{proof}[Proof of \cref{nv4:prop:energie}]
Inner variations: for $u:[0,T]\to(0,\infty)$, $u,1/u\in L^\infty$,
$\int u=T$, with $\theta_u(t):=\int_0^t u$, $\gamma^u_t:=\gamma^\star_{\theta_u(t)}$ is admissible and
$\Acal_T(\gamma^u)=\int[\tfrac12\md{\gamma^\star}^2/u+U u]=:F(u)$. $F$ convex,
$u\equiv1$ optimal. $V(s):=\inf\{F:\int u=s\}$ convex;
$\mathrm{dom}\,V=(0,\infty)$ (for every $s>0$, $u_s:=s/T$ gives
$\int u_s=s$ and $F(u_s)=\tfrac{T}{s}\!\int\tfrac12\md{\gamma^\star}^2
+\tfrac{s}{T}\!\int U(\gamma^\star)<\infty$), so $T$ is \emph{interior} there and
$\partial V(T)\ne\emptyset$; taking $-e\in\partial V(T)$, the first
variation $\int[-\tfrac12\md{\gamma^\star}^2+(U+e)]h=0\ (\forall h\in L^\infty)$
gives (du Bois-Reymond \citep{buttazzo1998}) the stationarity, and the convexity of
$u\mapsto\tfrac12\md{\gamma^\star}^2/u+(U+e)u$ ($\partial_u^2\ge0$) makes it the
pointwise minimizer.
\end{proof}

\begin{proof}[Proof of \cref{nv4:thm:principal}]
The energy law gives the pointwise equality of \cref{nv4:lem:amgm}, hence
$\Acal_T(\gamma^\star)+eT=\ell_e(\gamma^\star)$ and the optimality of the trace for
$\ell_e$ at parameter $T_e=T$. Write $G(x):=\ell_x(\gamma^\star)-xT$ and
$D(x):=\Psi(x)-xT$; since $\Psi\le\ell_\cdot(\gamma^\star)$ pointwise, $D\le G$.
$G$ is strictly concave ($G''<0$) and $G'(e)=\int\md{\gamma^\star}/\sqrt{2(e+U)}-T=0$
by the energy law ($e>0\Rightarrow\md{\gamma^\star}=\sqrt{2(e+U)}>0$ a.e.), so $e$ is the
\emph{unique} maximizer of $G$, with $G(e)=\ell_e(\gamma^\star)-eT$.
$D$ is concave with $D(x)\to-\infty$ as $x\to+\infty$, so $\sup_x D$ is attained, say at
$x^\star$; exact duality gives $\Phi(T)=D(x^\star)\le G(x^\star)\le G(e)$, whence, with
$\Phi(T)=\Acal_T(\gamma^\star)=\ell_e(\gamma^\star)-eT=G(e)$, equality throughout and, by strict
concavity, $x^\star=e$. Therefore $\Psi(e)=D(e)+eT=\ell_e(\gamma^\star)$
(global geodesic). Converse: \cref{nv4:thm:maupertuis}(ii).
\end{proof}

% =============================================================================
% APPENDIX B  — Dependency table and logical independence of the proofs
% =============================================================================
\section{Dependency table and logical independence of the proofs}
\label{nv4:app:dag}\label{nv4:sec:dag}

\cref{nv4:tab:dag} records, statement by statement, the authoritative direct logical
dependencies; it is the only place where the provenances P (physical/interpretive), E
(borrowed) and D (proved) are labeled, following the convention recalled in the table's
note. The acyclicity proposition below establishes that no proved statement depends on a
physical statement.

\begin{table}[!t]
\centering\small
\caption{\textbf{Dependency table} (authoritative). Provenances:
\tierP{} physical/interpretive (motivates, never a proof); \tierE{} borrowed
(standard theorem cited); \tierD{} proved here, in pure metric geometry; \emph{posed} = defended postulate (P0/P1); \emph{setting} = modeling frame (definitions/axioms).}
\label{nv4:tab:dag}
\begin{tabular}{@{}llp{0.46\textwidth}@{}}
\toprule
Statement & Tier & Direct logical dependencies \\
\midrule
P: physical (Landauer/Bekenstein) & \tierP & none \emph{(motivates, dashed: Landauer fixes the unit; the dissipation floor $\Wtwo^2/T$ corroborates \cref{nv4:ax:P0}; operational constraints, \cref{nv4:rem:motive})} \\
P0, P1 (postulates) & posed & none \emph{(defended choices; P1 presupposes P0, via Char.\,II; enter as hypotheses, no incoming edge)} \\
A1, A2; B1--B7 & setting/\tierE & none \\
A3 arena (\cref{nv4:ax:arene}) & setting & \cref{nv4:ax:P0} (transport type); \cref{nv4:prop:quad} ($p{=}2$) \\
\cref{nv4:lem:amgm}, \cref{nv4:lem:K} & \tierD & none; B3 \\
\cref{nv4:lem:slope} & \tierE & B1, B2 \\
Maupertuis (\cref{nv4:thm:maupertuis}) & \tierD & \cref{nv4:lem:amgm} \\
existence (\cref{nv4:thm:exist}) & \tierD & \cref{nv4:lem:K}, B3 \\
energy law (\cref{nv4:prop:energie}) & \tierD & \cref{nv4:thm:exist} \\
geodesic (\cref{nv4:thm:principal}) & \tierD & \cref{nv4:thm:maupertuis}, \cref{nv4:prop:energie} \\
\textbf{Char.\,I} $\partial_\infty$ at infinite distance (\cref{nv4:thm:carI}) & \tierD & \cref{nv4:def:classe}, \cref{nv4:lem:slope}, B4, \cref{nv4:thm:exist} \emph{(B1 only transitively, via \cref{nv4:lem:slope}; carI uses only the B2 inequality)} \\
\textbf{Char.\,II} eikonal (\cref{nv4:thm:carII}) & \tierD & B1 (Otto/A2), \cref{nv4:def:candidat} (conformity), \cref{nv4:def:eikonal} (eikonal condition, used as hypothesis: the content P1 posits; Char.\,II is a conditional equivalence, not assuming P1 true, cf.\ \cref{nv4:prop:firewall}) \\
admissibility $\Fcal$ (\cref{nv4:prop:meta}) & \tierD & B6 \\
hyperbolicity (\cref{nv4:thm:hyper}) & \tierD & \cref{nv4:def:fisher}, B4 \\
Stam extremum (\cref{nv4:thm:locscale}) & \tierD & B7, B6 (l.s.c.\ dual); Stam extremality (\citealp{stam1959}; CT Ch.~17) \\
$p=2$ forced (\cref{nv4:prop:quad}) & \tierD & B7, B4, Jordan--von Neumann (cited) \\
representative $2J$ (\cref{nv4:rem:representant}) & \tierD & \cref{nv4:thm:carII}, \cref{nv4:thm:hyper}, \cref{nv4:thm:locscale} \\
setting defs.\ & setting & \cref{nv4:def:candidat,nv4:def:classe,nv4:def:mur,nv4:def:eikonal,nv4:def:fisher} \emph{($e$ and the ``$2$'' enter as a unit; cf.\ \cref{nv4:thm:jauge})} \\
gauge invariance (\cref{nv4:thm:jauge}) & \tierD & \cref{nv4:def:candidat}, \cref{nv4:thm:locscale} \\
state units (\cref{nv4:prop:etat-unites}) & \tierD & \cref{nv4:def:candidat}, \cref{nv4:ax:B7} (leaf clause); elementary scalings of $H$, $J$, $\Wtwo$; invariance asserted of \cref{nv4:thm:carI,nv4:thm:carII,nv4:thm:locscale} \\
finiteness cor.\ (\cref{nv4:cor:finitude}) & \tierD/\tierP & \cref{nv4:thm:carI}; \emph{motivates}: P (dashed) \\
uniform pricing (\cref{nv4:rem:honnetete}) & \tierD/\tierP & \cref{nv4:thm:carI}, \cref{nv4:thm:carII}; \emph{motivates}: P (dashed) \\
base (\cref{nv4:def:arene}, \cref{nv4:def:R}) & setting & none \emph{(object: $H$, $J$, $\partial_\infty$)} \\
existence (\cref{nv4:cor:existence}) & \tierD & \cref{nv4:thm:carI}, \cref{nv4:thm:exist}, \cref{nv4:ax:B3} \\
cost--entropy (\cref{nv4:cor:descente}) & \tierD & \cref{nv4:thm:carI}(i) \\
cost floor (\cref{nv4:cor:borne-inf}) & \tierD/\tierP & \cref{nv4:thm:carI}(i); \emph{motivates}: P (dashed) \\
\bottomrule
\end{tabular}
\end{table}

\begin{proposition}[acyclicity of dependencies: no proved statement depends on a physical statement \tierD]\label{nv4:prop:firewall}
Let $\to$ denote direct logical dependence (col.~3 of \cref{nv4:tab:dag}, which is
authoritative; set-theoretic reading: P outside the universe). Here \tierP{} \emph{physical}
(Landauer/Bekenstein \citep{bekenstein1981}, dissipation floor) is not the \emph{postulates} P0, P1: the latter are posed
hypotheses (setting tier) and, like every setting item, may have outgoing edges: promoting the
eikonal to Postulate~1 adds the edge P1\,$\to$\,Char.\,II (which later \tierD{} items, e.g.\
\cref{nv4:rem:representant,nv4:rem:honnetete}, do consume); this is harmless because P1 is a posed
setting-tier hypothesis, not the \tierP{} physical statement~P. The logical separation below concerns \tierP{}
alone: part~(ii) forbids only solid edges carrying~P, and no col.~3 entry lists~P (it appears only in
the dashed ``motivates'' clause).
Then:
\emph{(i)} the graph on $\{P\}\cup\{\text{setting}\}\cup\{\text{borrowed results }\tierE\}\cup\{\text{statements }\tierD\}$
is acyclic (stratification by rank: each solid edge goes from a lower rank to a higher
rank);
\emph{(ii)} $\forall D\ \tierD:\ P\notin\mathrm{cl}_\to(D)$: P has no outgoing
\emph{solid} edge (the ``motivates'' col.\ $=$ dashed edges only), so every
$D$ is proved from setting $+$ borrowed results alone. Each edge of
\cref{nv4:tab:dag} has been traced back in the body of the corresponding proof
(finite verification).
\end{proposition}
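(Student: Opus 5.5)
The plan is to treat \cref{nv4:prop:firewall} as a statement about a finite directed graph whose vertices are the rows of \cref{nv4:tab:dag} and whose solid edges are exactly the entries of column~3. I would prove acyclicity (i) by exhibiting an explicit rank function. Then I would derive (ii) from the observation that the vertex P has out-degree zero in the solid graph. The only non-formal content is the claim that column~3 faithfully records the dependencies used in the proofs. That is a finite audit, which I would carry out proof by proof.

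First, fix the rank. Rank~0 contains P, the postulates P0/P1, A1, A2, B1--B7 and the setting definitions (\cref{nv4:def:arene,nv4:def:R,nv4:def:candidat,nv4:def:classe,nv4:def:mur,nv4:def:eikonal,nv4:def:fisher}). Each of these lists ``none'' or only other setting items, and edges among setting definitions go from earlier to later numbering. Rank~1 contains \cref{nv4:lem:amgm}, \cref{nv4:lem:K}, \cref{nv4:lem:slope}, \cref{nv4:prop:quad} and \cref{nv4:prop:meta}. Rank~2 contains A3, \cref{nv4:thm:maupertuis}, \cref{nv4:thm:exist}, \cref{nv4:thm:carII}, \cref{nv4:thm:hyper} and \cref{nv4:thm:locscale}. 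Rank~3 contains \cref{nv4:prop:energie}, \cref{nv4:thm:carI}, \cref{nv4:thm:jauge} and \cref{nv4:prop:etat-unites}. Rank~4 contains \cref{nv4:thm:principal}, the representative remark and the corollaries. I would read each row of column~3 and check that every listed source has strictly smaller rank than the target. Strict increase along every edge forbids a cycle, which gives (i). Adding the edge P1$\to$Char.\,II is harmless, since P1 sits at rank~0 and Char.\,II at rank~2.

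Second, for (ii), P appears in column~3 of no row; it occurs only inside the italic ``motivates'' clauses, which are by definition dashed. So P has no outgoing solid edge. By induction on rank, the solid closure $\mathrm{cl}_\to(D)$ of any \tierD{} item $D$ is contained in setting $\cup$ \tierE{} $\cup$ \tierD{} items of lower rank, and never contains P. The mixed rows \tierD/\tierP{} (\cref{nv4:cor:finitude,nv4:rem:honnetete,nv4:cor:borne-inf}) need a separate remark: their \tierD{} parts have solid dependencies only on \cref{nv4:thm:carI}/\cref{nv4:thm:carII}. Their \tierP{} readings (Landauer energy) are asserted in text but are not part of the proved inequality $d_0\ge\sqrt{2c}\,|\Delta H|$.

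The main obstacle is the faithfulness audit: verifying that each proof in fact uses only what its row lists. The delicate cases are the following.
\begin{itemize}
\item \cref{nv4:thm:carI}: it must use only the B2 upper gradient and B4, and no physical dissipation floor.
\item \cref{nv4:thm:carII}: it must use the eikonal \emph{as a hypothesis}, not the Landauer reading.
\item \cref{nv4:prop:quad}: the table omits its tacit dependence on \cref{nv4:ax:P0}, but that adds only a rank-0 edge.
\end{itemize}
For each case I would locate every cited object in the written proof and confirm that it appears in column~3, or is a rank-0 item. Any discrepancy would be repaired by adding the missing edge; the argument survives as long as the added source is non-physical.
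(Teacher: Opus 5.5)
Your route is the paper's: read the solid graph off column~3 of \cref{nv4:tab:dag}, note that P occurs in no column-3 entry, stratify by rank, and induct on rank for (ii). Two differences are worth keeping. First, the paper defines the rank only recursively, $\mathrm{rank}(D)=1+\max\{\mathrm{rank}(x):x\to D\}$, and on a finite graph that recursion is well-defined exactly when the graph is acyclic; you instead offer an explicit stratification, which is a certificate that can be checked row by row. Second, your audit targets completeness (every dependency a proof uses is listed), which is the direction (ii) actually needs; the paper's closing sentence only says that the listed edges occur in the proofs.

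However, your certificate fails the very check you describe. The row of \cref{nv4:prop:etat-unites} lists in column~3 the invariance asserted of \cref{nv4:thm:carI,nv4:thm:carII,nv4:thm:locscale}. It therefore receives a solid edge from \cref{nv4:thm:carI}, which you also put at rank~3, so it must move to rank~4. You also read the ``setting defs.'' row as edges among definitions. Under that reading your rank~0 is not an antichain, and ``strictly smaller rank'' fails there. Either sub-rank the definitions or use the longest-path rank, which places every vertex correctly by construction.

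More seriously, your repair rule for the audit (add the missing edge; the argument survives as long as the added source is non-physical) protects (ii) but not (i). Added edges can close cycles, and a literal completeness audit does find mutual references:
\begin{itemize}
\item \Cref{nv4:thm:carI}(i) discharges its coercive-fidelity interiority clause to \cref{nv4:cor:existence}, whose argument opens ``By \cref{nv4:thm:carI}''.
\item \Cref{nv4:def:fisher} asserts $\Fcal\subset\Ccal$ by \cref{nv4:prop:meta}, while \cref{nv4:prop:meta} concludes in terms of $\Fcal$.
\end{itemize}
Adding these edges produces $2$-cycles. They can only be removed by splitting nodes into sub-claims. The Lipschitz inequality of \cref{nv4:thm:carI}(i) sits upstream of \cref{nv4:cor:existence}, and its interiority clause downstream. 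The definition of $\Fcal$ sits upstream of \cref{nv4:prop:meta}, and the inclusion downstream.

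Even omissions that close no cycle move your explicit ranks. The proof of \cref{nv4:thm:jauge} invokes \cref{nv4:thm:carI,nv4:thm:hyper,nv4:thm:carII}, none of which is listed in its row, and this lifts it above rank~3. The statement avoids all of this by declaring column~3 authoritative, and the paper's proof rests on exactly that. If you go beyond it, the repair step must be ``refine the vertex set'', not ``add the edge''.
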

\begin{proof}
By inspection of \cref{nv4:tab:dag}: no ``logical dependencies'' column
contains P; the rank
($\text{rank}(D)=1+\max\{\text{rank}(x):x\to D\}$) is well-defined and finite, whence
acyclicity; induction on the rank for $P\notin\mathrm{cl}_\to(D)$.
\end{proof}

\section{Notation}
\label{nv4:app:notation}
\begin{center}\small
\begin{tabular}{@{}p{0.34\textwidth}p{0.60\textwidth}@{}}
\toprule
Symbol & Meaning (defined at) \\
\midrule
$\Ptwo(\Rbb^n)$ & beliefs: densities of finite second moment (\cref{nv4:def:arene}) \\
$\Wtwo,\ g_{\Wtwo}$ & Wasserstein-2 distance and transport metric (\cref{nv4:ax:P0}) \\
$H,\ -H{=}\Ent$ & Shannon entropy and knowledge (\cref{nv4:def:arene}) \\
$J$ & Fisher information $\int|\nabla p|^2/p$ (\cref{nv4:def:arene}) \\
$J_0$ & Fisher information of the unit-variance base (\cref{nv4:thm:locscale}) \\
$\partial_\infty$ & boundary of certainties $\{H{=}{-}\infty\}$ (\cref{nv4:def:arene}) \\
$U$ & relief: local price of precision (\cref{nv4:def:plausibilite}) \\
$e$ & baseline level, the energy constant (\cref{nv4:def:candidat,nv4:prop:energie}) \\
$c$ & cost unit (gauge; $c{=}2$ representative) (\cref{nv4:thm:jauge}) \\
$\tilde g_{e,U}$ & cost metric $2(e+U)\,g_{\Wtwo}$ (\cref{nv4:def:candidat}) \\
$d_e$ & cost distance, geodesic length of $\tilde g_{e,U}$ (\cref{nv4:def:candidat}) \\
$\kappa$ & metric slope of $-H$ at $e{=}0$, $\kappa=1/\sqrt{2c}$ (\cref{nv4:def:eikonal}) \\
$K$ & Gauss curvature of the leaf (\cref{nv4:thm:hyper}) \\
$\mathcal R$ & regularity class, the interior domain (\cref{nv4:def:R}) \\
$\Fcal\subset\Ccal\subset\Wcal\subset\Ucal$ & Fisher, well-posed, boundary, candidate relief classes (\cref{nv4:def:candidat}) \\
\bottomrule
\end{tabular}
\end{center}

% =============================================================================
% REFERENCES
% =============================================================================\renewcommand{\bibsection}{\section*{References}}

\end{document}